\documentclass{article}
\usepackage{arxiv}
\usepackage{amsmath,amsfonts,bm}

\def\eqref#1{equation~\ref{#1}}
\def\1{\bm{1}}

\DeclareMathAlphabet{\mathsfit}{\encodingdefault}{\sfdefault}{m}{sl}
\SetMathAlphabet{\mathsfit}{bold}{\encodingdefault}{\sfdefault}{bx}{n}

\usepackage{xcolor}
\definecolor{lightblue}{rgb}{0.6, 0.8, 0.9}
\definecolor{darkblue}{HTML}{0c58ac}
\definecolor{darkgreen}{rgb}{0, 0.55, 0.12}
\definecolor{darkred}{rgb}{0.6,0,0}
\definecolor{yellowbg}{RGB}{255,249,224}
\definecolor{greenbg}{RGB}{232,246,232}
\definecolor{bluebg}{RGB}{226,241,255}
\usepackage[numbers]{natbib}
\usepackage[
  breaklinks=true,
  colorlinks,
  linkcolor=darkred,
  citecolor=darkgreen,
  bookmarks=false
]{hyperref}
\usepackage{url}

\usepackage{tikz}
\usepackage{graphicx}
\usepackage{wrapfig}
\usepackage{booktabs}
\usepackage{adjustbox}
\usepackage{multirow}
\usepackage{xspace}
\usepackage{array}
\usepackage{bm}
\usepackage{bbm}
\usepackage{xcolor}
\usepackage{amsthm, amsmath, amssymb}
\usepackage{cleveref}
\usepackage{algorithm}
\usepackage{algorithmic}
\newtheorem{definition}{Definition}
\newtheorem{theorem}{Theorem}
\newtheorem{remark}{Remark}

\usepackage{listings}
\usepackage{wrapfig}
\usepackage{caption}
\usepackage{subcaption}
\usepackage{colortbl}
\usepackage{adjustbox}
\usepackage{makecell}
\usepackage{pifont}
\usepackage{tcolorbox}
\usepackage{setspace}
\usepackage{fancybox}
\usepackage{tocloft}
\usepackage{etoc}
\usepackage{enumitem}
\usepackage{pifont}
\usepackage{bm}
\definecolor{darksilver}{RGB}{160,160,160}
\definecolor{mymauve}{rgb}{0.58,0,0.82}
\newlength{\mysize}

\usepackage{needspace}
\usepackage{placeins}
\newsavebox{\layoutleftbox}
\newsavebox{\layoutrightbox}
\input{tables/table_style}
\newcommand{\etal}{\textit{et al.}\xspace}

\allowdisplaybreaks

\newtheorem{thm}{Theorem}

\newtheorem{prop}[thm]{Proposition}

\theoremstyle{definition}

\newtheorem{assumption}[thm]{Assumption}

\title{Learning What to Trust in Multimodal Learning\\ under Noisy Supervision}

\author{Jiashuo Zou \qquad Xiaobo Xia\thanks{Corresponding author.}\\
University of Science and Technology of China\\
\tt\small zjs3490581068@gmail.com \quad xiaoboxia@ustc.edu.cn
}

\begin{document}
%\raggedbottom

\maketitle

\begin{abstract}
Multimodal classification processes and relates information from multiple modalities to achieve more accurate predictions. However, existing methods typically rely on high-quality ground-truth labels, which are difficult to obtain in real-world scenarios. While sample-selection methods for learning with noisy labels aim to identify correctly labeled examples from noisy data, traditional methods primarily focus on unimodal settings and fail to exploit multimodal information fully. This motivates us to build a more reliable noise detector in multimodal learning. To this end, we theoretically analyze the relationship between representation structure and noise detection capability. Based on this analysis, we propose REFINE, which is a multimodal label-noise detection framework that jointly uses fused and unimodal representations for label-noise detection. Specifically, REFINE constructs discriminative eigenvectors through discriminative analysis of the target and background classes and selects trusted representation spaces with better noise detection capability for each class. Within each trusted space, REFINE measures the alignment between each instance representation and the discriminative eigenvectors. It then combines the subsets selected from these spaces. The combined set provides cleaner supervision for updating the multimodal classifier, thereby reducing the influence of mislabeled examples during training and improving model generalization. Extensive experiments across diverse tasks demonstrate REFINE's superiority compared to baseline methods. The source code will be publicly available.

\end{abstract}
\addtocontents{toc}{\protect\setcounter{tocdepth}{-1}}

\section{Introduction}
\label{sec:introduction}

Humans perceive the world through a rich variety of
multimodal signals, including visual, auditory, and textual information.
Data from different modalities contain both shared semantics and
modality-specific information~\cite{dufumier2025align,zong2025self_supervised_multimodal}.
Inspired by this capability, multimodal learning aims to build models that
can process and relate information from multiple modalities~\cite{liu2026principled_multimodal}.
For classification, extensive research has explored cross-modal
interactions and information fusion to exploit complementary cues from
different modalities and improve classification
performance~\cite{xu2023multimodal_transformers,zhu2024vision_x,nagrani2021attention,qin2026deep}.
However, these methods typically rely on correct labels for
supervision~\cite{zhang2026tmnr,han2022trusted}.
Real-world data are typically annotated manually or collected through web
crawling so that their labels may be easily corrupted in practice~\cite{wei2021learning,song2022learning}, resulting in mislabeled data.
Recent studies have shown that even a small amount of such corrupted data can hinder
the generalization of deep neural networks owing to their strong memorization of noisy labels~\cite{kim2021fine,zhang2021understanding,yang2024searching}.
Hence, it becomes crucial to train more reliable multimodal classifiers
under noisy supervision.

\begin{wrapfigure}{r}{0.48\textwidth}
  \centering
  \includegraphics[width=\linewidth]{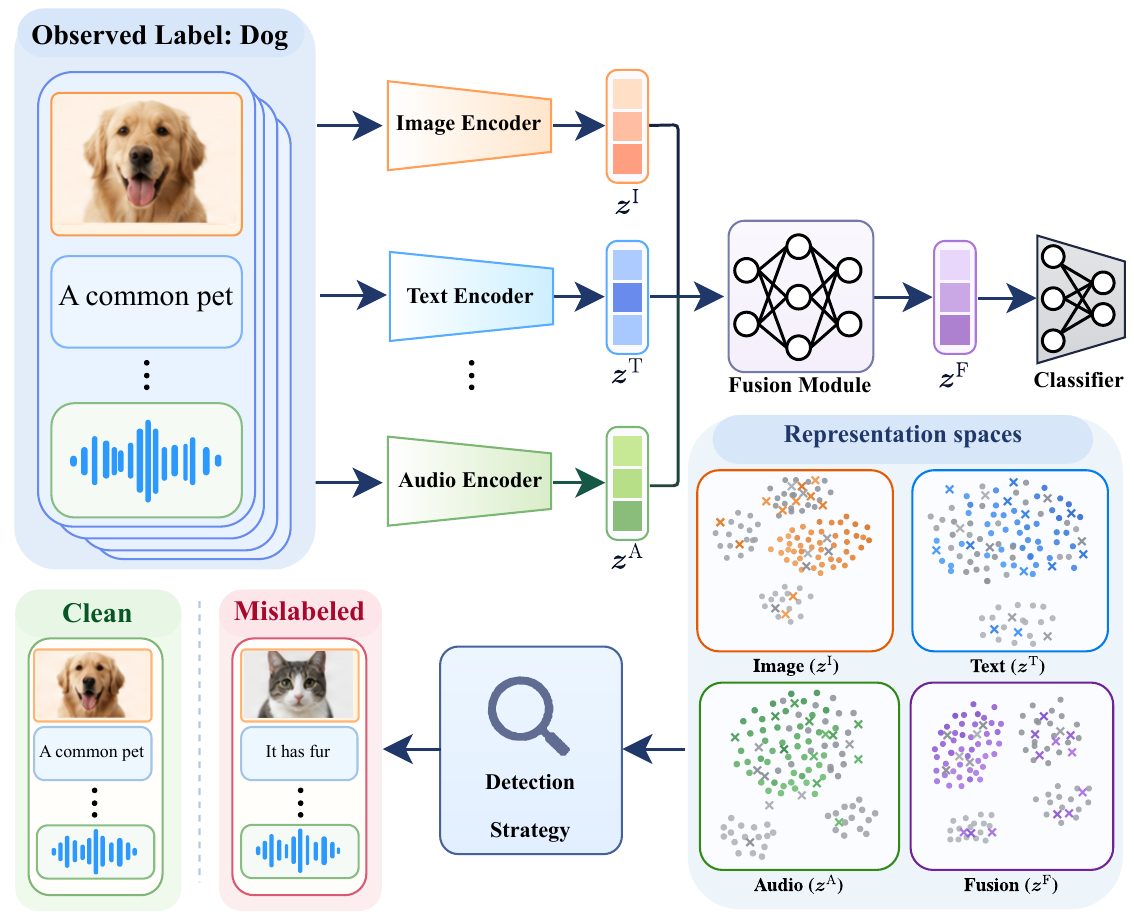}
  \caption{The illustration of mislabeled data detection in multimodal learning.
  The model needs to identify information in multimodal data that is
  useful for assessing label reliability and use this information to
  distinguish correctly labeled examples from mislabeled examples.}
  \label{fig:framework}
\end{wrapfigure}

Sample selection offers a simple, effective, and readily extensible
approach to handling noisy labels by identifying correctly labeled training
examples and limiting the use of incorrect labels for
supervision~\cite{han2018co,li2020dividemix,kim2021fine,wang2024knockoffs_spr,yuan2025enhancing}.
Unfortunately, existing sample-selection methods have primarily focused
on unimodal settings.
As illustrated in Fig.~\ref{fig:framework}, a multimodal model provides both
a fused representation space and multiple unimodal representation spaces,
which offer additional cues for assessing label reliability.
Relying solely on information obtained after modality fusion,
such as losses and gradients, for selection may overlook modality-specific cues
useful for assessing label reliability; vice versa.
This motivates us to delve into a new problem: \textit{how can we build a more
reliable label noise detector in multimodal learning?}
This problem encompasses two key objectives: \textit{1) identifying multimodal
information with better label-noise detection capability; 2) using this
information to detect label noise more reliably.}

In this work, we take representation-based sample-selection methods~\cite{zhu2022detecting,kim2021fine}
as our starting point because they can naturally use both fused and unimodal
representation spaces as sources of information for sample selection.
We theoretically analyze the relationship between representation structure
and label-noise detection capability and argue that a reliable basis
for selection should reflect the discriminative information of
the target class relative to other classes.
Further analysis shows that the noise detection capabilities of
different representation spaces vary across classes,
motivating us to select better representation spaces for each class.

Based on this analysis, we propose a reliable multimodal noise detection
framework named \textit{\underline{re}liable sample \underline{fi}ltering i\underline{n}
multimodal l\underline{e}arning} (REFINE).
The framework is supported by theoretical analysis and adaptively selects
reliable examples without requiring predefined selection thresholds,
noise-rate estimation, or additional clean examples.
First, we perform discriminative analysis of the target and background
classes in each representation space to construct discriminative
eigenvectors.
These eigenvectors emphasize target-class structure and reduce the
influence of shared background.
Next, using the projections of example representations onto these
eigenvectors, we compare the fused space and all unimodal spaces pairwise.
For each class, we select a set of trusted spaces based on their ability
to distinguish the target class from the background classes.
Finally, we apply label agreement in each trusted space, retaining an
example only if its alignment score along its observed class's
discriminative eigenvector exceeds the corresponding scores for all other
classes.
We combine the examples retained by the trusted spaces to obtain the final
selected subset. The resulting set provides more reliable supervision for training the
multimodal classifier, mitigating the effect of mislabeled examples and improving generalization.

\textbf{Contributions.} Before delving into details, we summarize our contributions as follows:
\begin{itemize}
    \item We study the problem of  label-noise detection in multimodal learning.
    This problem concerns how to identify information in multimodal data
    that is useful for assessing label reliability and use it to build a
    more effective noise detection framework.
    The goal is to provide reliable supervision for multimodal
    classification under noisy supervision.
    \item We propose REFINE (\textit{\underline{re}liable sample \underline{fi}ltering i\underline{n}
multimodal l\underline{e}arning}) as a method for multimodal noise detection that
    incorporates fused and unimodal representation spaces into a
    unified sample-selection process. The method combines target–background
    discriminative analysis, trusted space selection, and label agreement to
    adaptively select reliable samples. It requires no predefined
    selection thresholds, noise-rate estimation, or additional clean examples.
    We also theoretically characterize the relationship between
    representation structure and label-noise detection capability,
    providing a theoretical basis for constructing discriminative
    selection criteria and determining trusted representation spaces.
    \item We conduct experiments on seven datasets covering different
    modality combinations.
    The results on all seven datasets demonstrate REFINE's superior
    classification performance under multiple types of noise.
    Comprehensive analyses, including sample-selection quality evaluation,
    component ablations, and representation-space analysis, further
    validate the effectiveness and design rationale of REFINE.
    Experiments integrating REFINE with different
    learning-with-noisy-labels training paradigms also validate its ability to improve
    performance as a general plug-in sample-selection mechanism.
\end{itemize}

\section{Related Work}
\label{sec:related-work}

\subsection{Learning with Noisy Labels}
Learning with noisy labels~\cite{angluin1988learning} aims to mitigate the influence of potentially
incorrect observed labels on model training, and improve generalization performance~\cite{song2022learning,han2020survey,shi2024survey}. Existing methods broadly include
noise-tolerance losses or training regularization~\cite{wang2019symmetric,zhou2026correntropy,gonzalez2025robust,ma2020normalized,li2025rml++,yuan2026instance,hu2019simple}, noise-transition modeling~\cite{xia2019anchor,li2024transferring,guo2022simt,xia2022extended,liao2025instance}, and
label correction~\cite{tu2023learning,zheng2020error,jiang2024tackling,englesson2024robust}, sample selection~\cite{yuan2025enhancing,yang2024searching,xia2024regularly,yao2021jo,feng2024clipcleaner}, and semi-supervised learning methods
that regard unselected samples as unlabeled data~\cite{li2020dividemix,fooladgar2024manifold,kim2025splitnet,wei2026robust,liu2026cpln}.
In this work, we focus on sample selection because it can directly connect label reliability with learned representation structure, making it particularly suitable for exploiting multimodal representations.

Note that although these general methods of learning with noisy labels can, in principle, be applied to fused multimodal representations, such a direct transfer usually treats the fused representation as an \textit{ordinary single input} and fails to fully exploit the complementary information across different representation spaces. Among sample-selection methods, FINE~\cite{kim2021fine} estimates sample reliability from the latent structure within each observed class. However, it operates in a single representation space. In multimodal learning, both fused and unimodal representations are available, and their ability to distinguish clean from noisy samples may vary across classes. This motivates us to extend representation-based sample selection to explicitly exploit and adaptively select among multimodal representation spaces.

\subsection{Multimodal Representation Learning and Classification}
Multimodal representation learning integrates heterogeneous information, \textit{e.g.}, images, text, audio, and video, to obtain more informative representations than those from a single modality~\cite{xu2023multimodal_transformers,zhu2024vision_x,zong2025self_supervised_multimodal,zhang2023generalization,cicchetti2025gramian,zhang2024multimodal,liang2021multibench}. According to the stage at which fusion occurs, existing multimodal classification methods
can generally be categorized into paradigms such as feature-level fusion and decision-level
fusion~\cite{xu2023multimodal_transformers,zhu2024vision_x}. This work focuses on the widely
adopted feature-level fusion paradigm. Namely, different modalities are first encoded by their
respective encoders, then projected into representation spaces with compatible dimensions,
and combined into a fused representation before the classifier.

Recent studies have begun to address noisy supervision in multi-view and multimodal learning~\cite{liang2023multiview_lnl,xu2025noisy_label_calibration}. More specifically, existing methods may rely on cross-view consistency and evidence aggregation~\cite{zhang2026tmnr}, additional clean data~\cite{jiang_robust_2024}, noise-transition modeling~\cite{wang_molar_2026}, or small-loss sample selection~\cite{wu2026multiasnet,liu_minjot_2024}. However, these methods may assume that each modality is individually predictive, require additional clean supervision, or rely heavily on classifier predictions that can themselves be affected by label noise. In contrast, our method exploits both fused and unimodal representation spaces for reliable sample selection without requiring additional clean samples or prior knowledge of the noise rate.
\section{Preliminaries}
\label{sec:preliminary}

\subsection{Problem Formulation}
\label{subsec:problem-formulation}

We consider a $K$-class multimodal classification problem with noisy labels.
Given a multimodal dataset
$\mathcal D=\{(\bm x_i,\widetilde y_i)\}_{i=1}^{N}$ containing $N$ examples,
$\bm x_i=(\bm x_i^1,\ldots,\bm x_i^M)$ is an instance comprising $M$
modalities, $\widetilde y_i\in\{1,\ldots,K\}$ is the observed label that
may have been corrupted, and $y_i\in\{1,\ldots,K\}$ denotes the true class
label. Our goal is to select a subset of examples from $\mathcal D$ with
reliable observed labels and use this subset to train a more robust
multimodal classifier.

We adopt feature-level fusion for multimodal representation learning. For
the $m$-th modality, an encoder $h^m$ followed by a projection layer $f^m$
maps $\bm x_i^m$ to
$\bm z_i^m=f^m\bigl(h^m(\bm x_i^m)\bigr)$. The fusion module $\Phi$
aggregates the modality-specific representations into
$\bm z_i^{\mathrm F}=\Phi\bigl(\bm z_i^1,\ldots,\bm z_i^M\bigr)$,
and the classification head generates predictions solely from the fused
representation $\bm z_i^{\mathrm F}$. To jointly denote the fused and
unimodal spaces, let
\begin{equation}
    \mathcal R=\{\mathrm F,1,\ldots,M\},
    \qquad
    \bm z_i^r\in\mathbb R^{d^r},
    \quad r\in\mathcal R.
    \label{eq:representation-spaces}
\end{equation}
Here, $r=\mathrm F$ corresponds to the fused representation, $r=m$ to the
$m$-th unimodal representation, and $d^r$ denotes the dimension of the
corresponding representation space.

To analyze the sample-selection capability of different representation
spaces, we consider a binary classification setting with true labels
$y\in\{-1,+1\}$ in the subsequent theoretical analysis. Without loss of
generality, we focus on the set of examples whose observed label is
$\widetilde y=+1$. To establish our theorems, we adopt the following
reasonable assumptions, motivated by related work~\cite{kim2021fine,lee2019robust,natarajan2013learning,xia2023combating}.

\begin{assumption}
\label{assump:gaussian-representations}
In any space $r\in\mathcal R$, the representations of instances with true
labels $y=+1$ and $y=-1$ follow
$
        \bm z^r\mid y=+1
        \sim\mathcal N(\bm v^r,\sigma^2\bm I),
        \bm z^r\mid y=-1
        \sim\mathcal N(\bm w^r,\sigma^2\bm I),
$
respectively, where
$\lVert\bm v^r\rVert_2=\lVert\bm w^r\rVert_2=1$.
\end{assumption}

\begin{assumption}
\label{assump:class-conditional-corruption}
Label corruption depends only on the true label, and clean examples
constitute a majority in both observed classes.
\end{assumption}

\subsection{FINE for Learning with Noisy Labels: A Recap}
\label{subsec:fine-sample}

FINE~\cite{kim2021fine} assesses label reliability through the alignment
between an instance representation and the leading eigenvector of its
observed class. For the general $K$-class task, fix a space
$r\in\mathcal R$ and define the class matrix
\begin{equation}
    \bm A_k^r
    =\frac{1}{N_k}\sum_{i:\widetilde y_i=k}
    \bm z_i^r(\bm z_i^r)^\top
    \in\mathbb R^{d^r\times d^r},
    \label{eq:class-second-moment}
\end{equation}
where $N_k$ is the number of examples in observed class $k$. Let
$\operatorname{TopEig}$ return a unit eigenvector associated with the
largest eigenvalue. FINE computes the alignment score using
\begin{equation}
        \bm u_{k,\mathrm{FINE}}^r
        =\operatorname{TopEig}(\bm A_k^r),
        \qquad s_i^r
        =\bigl[(\bm u_{k,\mathrm{FINE}}^r)^\top\bm z_i^r\bigr]^2,
        \qquad \widetilde y_i=k.
    \label{eq:fine-alignment-score}
\end{equation}
FINE then fits a two-component Gaussian mixture model
(GMM)~\cite{mclachlan2019finite} to the alignment scores within each observed class
and selects the examples associated with the component having the larger
mean as the clean subset. FINE performs this selection procedure in a single representation space,
whereas multimodal models produce both fused and unimodal representations.
When FINE is applied to this setting, the choice of representation remains
an open question.

\subsection{Limitations of FINE in Multimodal Formulation}
\label{subsec:fine-limitations}

We analyze how representations affect the sample-selection capability of
FINE under the model in Section~\ref{subsec:problem-formulation}.

\begin{definition}[Class Angle]
\label{def:class-angle}
For the two unit mean directions $\bm v^r$ and $\bm w^r$ in representation
space $r$, define
\begin{equation}
    \theta^r
    =\angle(\bm w^r,\bm v^r)
    =\arccos\bigl|(\bm v^r)^\top\bm w^r\bigr|
    \in[0,\pi/2].
    \label{eq:class-angle}
\end{equation}
\end{definition}
This angle characterizes the geometric relationship between the two class
mean directions.

\begin{definition}[Alignment-Score Difference]
\label{def:expected-alignment-score-difference}
For any unit direction $\bm u\in\mathbb R^{d^r}$ in space $r$, define
\begin{equation}
    \begin{aligned}
        \Delta^r(\bm u)
        &={}
        \mathbb E_{\bm z^r\sim\mathcal N(\bm v^r,\sigma^2\bm I)}
        \bigl[\langle\bm u,\bm z^r\rangle^2\bigr]
        -\mathbb E_{\bm z^r\sim\mathcal N(\bm w^r,\sigma^2\bm I)}
        \bigl[\langle\bm u,\bm z^r\rangle^2\bigr]=(\bm u^\top\bm v^r)^2-(\bm u^\top\bm w^r)^2.
    \end{aligned}
    \label{eq:expected-alignment-score-difference}
\end{equation}
\end{definition}
\begin{tcolorbox}
    [colframe=gray!80, colback=gray!7, boxrule=0.2pt, arc=2pt, boxsep=2.5pt, left=2pt, right=2pt, top=2pt, bottom=2pt]
\begin{remark}
$\Delta^r(\bm u)$ is the difference between the mean alignment scores of
clean and mislabeled examples along direction $\bm u$, and characterizes
their separation along the same direction. We prove in the Appendix that
this score difference is an important factor in deriving lower bounds on
recall and precision, both of which increase with the score difference.
\end{remark}
\end{tcolorbox}
\begin{theorem}[FINE Alignment-Score Differences Across Representation Spaces]
\label{thm:fine-alignment-score-difference}
Under the model in Section~\ref{subsec:problem-formulation}, the
alignment-score difference produced by FINE in space $r\in\mathcal R$ is
\begin{equation}
    \Delta_{\mathrm{FINE}}^r
    :=\Delta^r(\bm u_{\mathrm{FINE}}^r)
    =\frac{(1-\tau)\sin^2\theta^r}
    {\sqrt{(1-\tau)^2+4\tau\cos^2\theta^r}},
    \label{eq:fine-expected-alignment-score-difference}
\end{equation}
where $\tau$ is the ratio of mislabeled to clean examples, i.e.,
$\frac{\mathbb P(y=-1\mid\widetilde y=+1)}
{\mathbb P(y=+1\mid\widetilde y=+1)}$.
For any fixed $\tau\in[0,1)$, $\Delta_{\mathrm{FINE}}^r$ is strictly
increasing in $\theta^r\in[0,\pi/2]$. For any fixed
$\theta^r\in(0,\pi/2)$, it is strictly decreasing in $\tau\in[0,1)$.
In particular, because all representation spaces share the same $\tau$,
for any $r,s\in\mathcal R$,
\begin{equation}
    \Delta_{\mathrm{FINE}}^{r}
    >\Delta_{\mathrm{FINE}}^{s}
    \quad\Longleftrightarrow\quad
    \theta^{r}>\theta^{s}.
    \label{eq:fine-space-ordering}
\end{equation}
\end{theorem}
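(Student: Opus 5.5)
We will work at the population level and replace the empirical class matrix $\bm A_{+1}^r$ in \eqref{eq:class-second-moment} by its expectation over the observed class $\widetilde y=+1$. Write $p=\mathbb P(y=+1\mid\widetilde y=+1)=\frac{1}{1+\tau}$ and $q=\mathbb P(y=-1\mid\widetilde y=+1)=\frac{\tau}{1+\tau}$. Assumption~\ref{assump:class-conditional-corruption} makes corruption depend only on $y$, so within the observed class the representations form the mixture given by Assumption~\ref{assump:gaussian-representations}. Hence
\[
\bm A^r=p\,\bm v^r(\bm v^r)^\top+q\,\bm w^r(\bm w^r)^\top+\sigma^2\bm I .
\]
The isotropic term shifts every eigenvalue equally, so $\operatorname{TopEig}(\bm A^r)=\operatorname{TopEig}(\bm M)$ with $\bm M=p\bm v\bm v^\top+q\bm w\bm w^\top$, where we drop the superscript $r$. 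This top eigenvector lies in $\mathrm{span}\{\bm v,\bm w\}$. Let $c=\bm v^\top\bm w$. Only squared inner products appear in the argument, so we may flip $\bm w\mapsto-\bm w$ and assume $c=\cos\theta\ge0$.

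First we compute the top eigenpair. Write $a=\bm u^\top\bm v$ and $b=\bm u^\top\bm w$. The eigen-equation $\bm M\bm u=\lambda\bm u$ gives $\bm u=(pa\bm v+qb\bm w)/\lambda$. Taking inner products with $\bm v$ and $\bm w$ yields the $2\times2$ system
\[
\lambda a=pa+qbc,\qquad \lambda b=pac+qb .
\]
Equivalently, $\lambda$ is an eigenvalue of $\begin{pmatrix}p&qc\\pc&q\end{pmatrix}$. Since $p+q=1$, this gives
\[
\lambda_{\max}=\tfrac12\bigl(1+D\bigr),\qquad D=\sqrt{(p-q)^2+4pqc^2}.
\]
We also have the identity $\lambda=\bm u^\top\bm M\bm u=pa^2+qb^2$. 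We then obtain the ratio $b/a$ from the first equation, namely $b/a=(\lambda-p)/(qc)$, and impose $\|\bm u\|^2=1$ in the form $a^2+b^2+2abc=1$. Together these give $a^2$ and $b^2$ explicitly, and after simplification
\[
\Delta=a^2-b^2=\frac{(p-q)(1-c^2)}{D}.
\]
As a check, $\tau=0$ gives $\Delta=1-c^2=\sin^2\theta$. Substituting $p-q=\frac{1-\tau}{1+\tau}$ and $pq=\frac{\tau}{(1+\tau)^2}$ and clearing the common factor $1+\tau$ gives \eqref{eq:fine-expected-alignment-score-difference}. This algebraic elimination is the main obstacle. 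To keep it tractable we will organize it through the two relations $\lambda=pa^2+qb^2$ and $\lambda^2-\lambda+pq(1-c^2)=0$, rather than solving for $\bm u$ in coordinates.

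Next we prove the two monotonicity claims, writing $x=\cos^2\theta\in[0,1]$.

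For $\theta$: with $\tau\in[0,1)$ fixed, we have $f(x)=(1-\tau)(1-x)/\sqrt{(1-\tau)^2+4\tau x}$. The numerator is strictly decreasing and positive on $[0,1)$, the denominator is nondecreasing and positive, and $f(1)=0$. So $f$ is strictly decreasing in $x$. Since $x$ is strictly decreasing in $\theta$ on $[0,\pi/2]$, $\Delta_{\mathrm{FINE}}$ is strictly increasing in $\theta$.

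For $\tau$: fix $\theta\in(0,\pi/2)$, so $x\in(0,1)$. Then
\[
\Delta_{\mathrm{FINE}}=\sin^2\theta\,\Bigl(1+\frac{4x\tau}{(1-\tau)^2}\Bigr)^{-1/2},
\]
and $\tau/(1-\tau)^2$ is strictly increasing on $[0,1)$, which gives strict decrease in $\tau$.

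Finally, all spaces in $\mathcal R$ share the same labels and hence the same $\tau$. So \eqref{eq:fine-space-ordering} follows from the strict monotonicity in $\theta$, which holds for every fixed $\tau$.
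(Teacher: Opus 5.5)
The step you call the main obstacle fails as written. Your $a=\bm u^\top\bm v$ and $b=\bm u^\top\bm w$ are inner products, not the coefficients of $\bm u$ in the basis $\{\bm v,\bm w\}$. So the unit-norm condition is not $a^2+b^2+2abc=1$, which is the norm identity for $\bm u=a\bm v+b\bm w$. For $\bm u\in\operatorname{span}\{\bm v,\bm w\}$ and $c<1$ you have $\lVert\bm u\rVert_2^2=(a,b)\,G^{-1}(a,b)^\top$, where $G=\bigl(\begin{smallmatrix}1&c\\ c&1\end{smallmatrix}\bigr)$ is the Gram matrix. That is, $a^2+b^2-2abc=1-c^2$. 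Combining your relation with the ratio $\rho=b/a=(\lambda-p)/(qc)$ gives $\Delta=(1-\rho^2)/(1+\rho^2+2\rho c)$, which is false. For $p=3/4$, $c=1/2$ it gives about $0.283$, whereas the true value, and the target formula, is $3/(2\sqrt7)\approx0.567$. So the elimination you set up does not reach the stated result. Separately, $\rho$ is undefined when $q=0$ or $c=0$. Those cases need separate treatment; there $\bm u=\bm v$ and the formula is checked directly.

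The repair comes from your own fallback relation. Drop any separate norm equation, because $\lambda=\bm u^\top\bm M\bm u=pa^2+qb^2$ already encodes $\lVert\bm u\rVert_2=1$. With $b=\rho a$ this gives $a^2=\lambda/(p+q\rho^2)$ and $\Delta=\lambda(1-\rho^2)/(p+q\rho^2)$. Put $\mu=\lambda-p$ and $s=p-q$. The characteristic equation is then equivalent both to $\mu(\mu+s)=pqc^2$ and to $\lambda(1-\lambda)=\lambda(q-\mu)=pq(1-c^2)$. The first form yields $pqc^2+\mu^2=\mu(2\mu+s)=\mu D$ and $q^2c^2-\mu^2=s\mu(q-\mu)/p$. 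Hence $\Delta=\lambda s(q-\mu)/(pqD)=s(1-c^2)/D$, as claimed.

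With this repair your argument is complete. It differs from the paper mainly in the eigenvector step. The paper takes an orthonormal basis of the plane, writes $\bm u=(\cos\varphi,\sin\varphi)^\top$, and maximizes the Rayleigh quotient in double-angle form. That yields $\cos2\varphi$ and $\sin2\varphi$ explicitly, so $\Delta=\cos^2(\theta-\varphi)-\cos^2\varphi$ follows by direct substitution with no case splits. Your monotonicity arguments are correct: a positive strictly decreasing numerator over a nondecreasing denominator, and the rewriting through $\tau/(1-\tau)^2$. They are somewhat cleaner than the paper's derivative computations, since they need no continuity extension to the endpoints.
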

\begin{tcolorbox}
    [colframe=gray!80, colback=gray!7, boxrule=0.2pt, arc=2pt, boxsep=2.5pt, left=2pt, right=2pt, top=2pt, bottom=2pt]
\begin{remark}
Theorem~\ref{thm:fine-alignment-score-difference} demonstrates that, for the same
observed class, a larger $\theta^r$ makes the mean alignment scores of
clean and mislabeled examples easier to separate. Therefore, the relative
advantages of fused and unimodal spaces depend on the representation
structure of the specific class, and no type of representation space can
be presumed optimal for all classes. Based on this relationship,
Section~\ref{sec:method} enlarges the score difference in two complementary
ways: by optimizing class directions within each space and selecting
representations appropriate for each class across spaces.
\end{remark}
\end{tcolorbox}

\section{Method}
\label{sec:method}

\begin{figure}[!htbp]
    \centering
    \includegraphics[width=\textwidth]{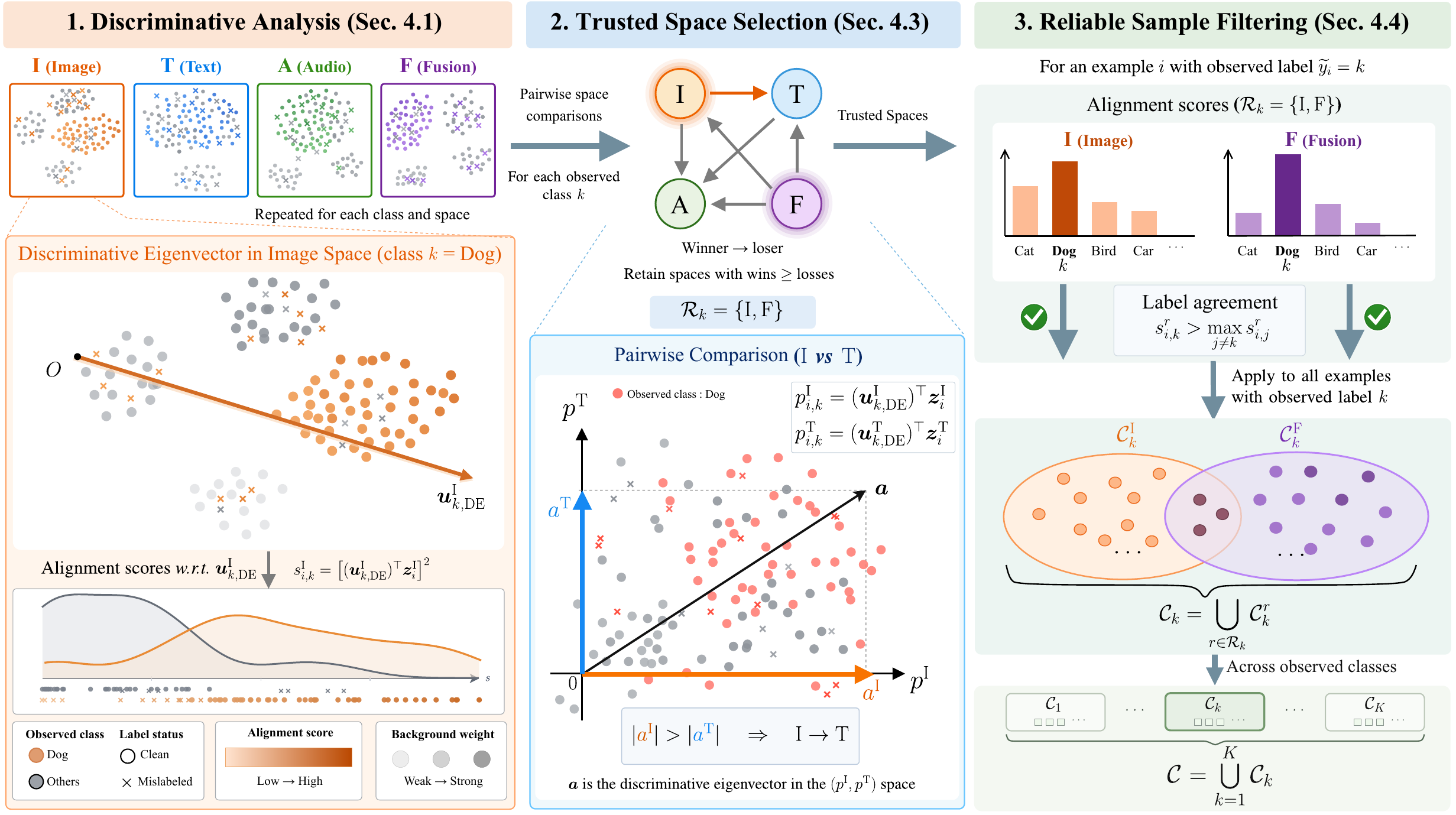}
    \caption{The overall framework of the proposed REFINE. (\textit{Part 1}) We construct discriminative eigenvectors
    for each class in the fused and unimodal spaces through discriminative
    analysis of the target and background. (\textit{Part 2}) We select trusted spaces with
    stronger noise detection ability for each class through pairwise comparisons
    across spaces. (\textit{Part 3}) We select reliable examples based on their alignment
    with the discriminative eigenvectors in each trusted space and take the union
    of the selected subsets.}
    \label{fig:overview}
\end{figure}

In this section, we formally propose REFINE, which
uses fused and unimodal representations for reliable sample filtering.
First, discriminative analysis of the target and background yields
\textit{discriminative eigenvectors} (abbreviated as DE) for each class. Next, \textit{trusted space
selection} (abbreviated as TSS) determines a set of trusted spaces for each class. Finally, \textit{label agreement} (abbreviated as LA) selects
examples, and the selected subsets from the trusted spaces are combined
by taking their union. Fig.~\ref{fig:overview} presents the overall
procedure. Technical details are provided below.

\subsection{Discriminative Analysis}
\label{subsec:discriminative-analysis}
Note that the leading eigenvector of FINE~\cite{kim2021fine} maximizes the mean alignment score of the
target observed class without considering other classes. If other classes
also have large alignment scores along the same vector, this eigenvector
may primarily reflect structure shared across classes, weakening the
separation between clean and mislabeled examples. Inspired by the
target-background discriminative criterion of
dPCA~\cite{chen2019dpca}, we construct discriminative eigenvectors that align strongly
with the target class and weakly with the background.

\textbf{Discriminative eigenvector.}
To focus on the background on directional components shared with the target
class, we assign weights according to the alignment scores of non-target
classes along the target's leading direction. Specifically, for target
observed class $k$ in space $r$, we compute the mean alignment score of
each non-target observed class $j\neq k$ along the target class's FINE
eigenvector:
\begin{equation}
    d_{kj}^r
    =(\bm u_{k,\mathrm{FINE}}^r)^\top
    \bm A_j^r\bm u_{k,\mathrm{FINE}}^r,
    \qquad j\neq k.
    \label{eq:background-alignment-score}
\end{equation}
We normalize these scores into background weights and construct the
weighted background matrix
\begin{equation}
        \gamma_{kj}^r
        =\frac{d_{kj}^r}{\sum_{\ell\neq k}d_{k\ell}^r},
        \qquad \bm B_k^r
        =\sum_{j\neq k}\gamma_{kj}^r\bm A_j^r.
    \label{eq:weighted-background}
\end{equation}
Non-target classes with stronger alignment receive larger background
weights. Afterward, the corresponding discriminative eigenvector is
\begin{equation}
    \bm u_{k,\mathrm{DE}}^r
    =\operatorname*{arg\,max}_{\lVert\bm u\rVert_2=1}
    \frac{\bm u^\top\bm A_k^r\bm u}
    {\bm u^\top(\bm B_k^r+\epsilon\bm I)\bm u}.
    \label{eq:de-eigenvector}
\end{equation}
DE maximizes the ratio of the target alignment score to the background
alignment score. To address possible rank deficiency of the background
matrix $\bm B_k^r$, we add a positive constant $\epsilon>0$ to its diagonal
entries, following dPCA~\cite{chen2019dpca}. The alignment score of instance
$i$ for class $k$ is then
\begin{equation}
    s_{i,k}^r
    =\bigl[(\bm u_{k,\mathrm{DE}}^r)^\top\bm z_i^r\bigr]^2.
    \label{eq:de-alignment-score}
\end{equation}

\subsection{Theoretical Analysis of DE}
\label{subsec:de-analysis}

We first analyze how DE changes the alignment-score difference under the
model in Section~\ref{subsec:problem-formulation}, and then explain the
geometric meaning of the background weights in the multiclass setting.

\begin{theorem}[DE Enlarges the Alignment-Score Difference]
\label{thm:de-gap}
Under the model in Section~\ref{subsec:problem-formulation}, if
$\epsilon=0$, then
\begin{equation}
        \Delta_{\mathrm{DE}}^r
        :=\Delta^r(\bm u_{\mathrm{DE}}^r)
        =\sin\theta^r
        \sqrt{1-\frac{\cos^2\theta^r}{(1+2\sigma^2)^2}}
        \geq\Delta_{\mathrm{FINE}}^r.
    \label{eq:de-gap}
\end{equation}
The inequality is strict when $0<\theta^r<\pi/2$, and
$\Delta_{\mathrm{DE}}^r-\Delta_{\mathrm{FINE}}^r$ strictly increases with
$\tau$. For fixed $\sigma>0$, $\Delta_{\mathrm{DE}}^r$ is strictly
increasing in $\theta^r\in[0,\pi/2]$.
\end{theorem}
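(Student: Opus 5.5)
The plan is to compute $\bm u_{\mathrm{DE}}^r$ in closed form, substitute it into the definition of $\Delta^r$, and then compare with the formula of Theorem~\ref{thm:fine-alignment-score-difference}. First I would write the two observed-class matrices explicitly under Assumptions~\ref{assump:gaussian-representations} and~\ref{assump:class-conditional-corruption}. Put $p=1/(1+\tau)$ for the clean fraction of observed class $+1$, and let $q$ be the clean fraction of observed class $-1$. Then
\begin{equation*}
\bm A_{+}^r=p\,\bm v^r(\bm v^r)^\top+(1-p)\,\bm w^r(\bm w^r)^\top+\sigma^2\bm I,
\qquad
\bm A_{-}^r=q\,\bm w^r(\bm w^r)^\top+(1-q)\,\bm v^r(\bm v^r)^\top+\sigma^2\bm I .
\end{equation*}
In the binary case the background weight is trivially $1$, so $\bm B^r=\bm A_{-}^r$. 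With $\epsilon=0$, $\bm u_{\mathrm{DE}}^r$ is the top generalized eigenvector of the pencil $(\bm A_{+}^r,\bm B^r)$.

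The second step reduces the problem to two dimensions. For any $\bm u\perp\mathrm{span}\{\bm v^r,\bm w^r\}$ the Rayleigh ratio equals $1$. On the plane $P=\mathrm{span}\{\bm v^r,\bm w^r\}$ the ratio exceeds $1$ in the direction favouring $\bm v^r$, because $p>1/2$ and $q>1/2$. Both matrices act on $P\oplus P^\perp$ block-diagonally, so the maximiser lies in $P$. Inside $P$ I would pick orthonormal coordinates in which $\bm v^r$ and $\bm w^r$ are symmetric about an axis, namely $\bm v^r=(\cos\frac{\theta}{2},\sin\frac{\theta}{2})$ and $\bm w^r=(\cos\frac{\theta}{2},-\sin\frac{\theta}{2})$, after flipping the sign of $\bm w^r$ if needed; this does not change the outer products. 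Writing $\bm u=(\cos\phi,\sin\phi)$, the goal becomes maximising the ratio of two quadratic forms in $(\cos\phi,\sin\phi)$. This is the root of a $2\times 2$ generalized eigenproblem: solve $\det(\bm A_{+}-\lambda\bm B)=0$ on $P$ and read off $\phi^\star$.

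Third, I would evaluate
\begin{equation*}
\Delta_{\mathrm{DE}}^r=\cos^2(\phi^\star-\tfrac{\theta}{2})-\cos^2(\phi^\star+\tfrac{\theta}{2})=\sin\theta\,\sin 2\phi^\star .
\end{equation*}
The claimed formula then amounts to $\cos 2\phi^\star=\cos\theta/(1+2\sigma^2)$. The main obstacle is showing that the $\tau$- and $q$-dependence cancels in $\phi^\star$; the stated formula involves only $\sigma$. I would first check this in the symmetric-corruption case $q=p$. There the pencil has a swap symmetry $\bm v\leftrightarrow\bm w$, and the generalized eigenvector should be computable explicitly. Concretely, I would expand $\bm A_{+}-\lambda\bm B$ in the basis of the symmetry axis and its normal and verify that $\tan 2\phi^\star$ depends only on $\cos\theta$ and $\sigma^2$. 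If the cancellation fails for general $q$, I would note that the paper's model implicitly requires it and state that assumption.

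Finally, I would prove the comparison and the two monotonicity claims. The inequality $\Delta_{\mathrm{DE}}^r\ge\Delta_{\mathrm{FINE}}^r$ becomes a purely algebraic inequality after squaring both sides, with $c=\cos^2\theta$. For $\tau=0$ the FINE direction is $\bm v^r$, which gives $\Delta_{\mathrm{FINE}}^r=\sin^2\theta$. I would compare this with $\sin\theta\sqrt{1-c/(1+2\sigma^2)^2}$, which is strictly larger whenever $\sigma>0$ and $0<\theta<\pi/2$. Because the DE value does not depend on $\tau$, strict monotonicity of the gap in $\tau$ follows directly from the strict decrease of $\Delta_{\mathrm{FINE}}^r$ in $\tau$ given by Theorem~\ref{thm:fine-alignment-score-difference}. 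The inequality for all $\tau$ then follows from the case $\tau=0$. Monotonicity of $\Delta_{\mathrm{DE}}^r$ in $\theta$ is immediate, since $\sin\theta$ increases and $1-\cos^2\theta/(1+2\sigma^2)^2$ increases on $[0,\pi/2]$, and both factors are nonnegative.
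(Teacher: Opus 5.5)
Your outline matches the paper's proof: reduce to the plane, use the bisector basis $\bm v=(\cos\frac{\theta}{2},\sin\frac{\theta}{2})$, $\bm w=(\cos\frac{\theta}{2},-\sin\frac{\theta}{2})$, solve a $2\times2$ generalized eigenproblem, and conclude via $\Delta_{\mathrm{DE}}^r>\sin^2\theta^r\ge\Delta_{\mathrm{FINE}}^r$ and the $\tau$-independence of $\Delta_{\mathrm{DE}}^r$. Your final paragraph is sound, and the product-of-monotone-factors argument in $\theta$ is a tidy substitute for the paper's derivative. The gap is the step you yourself call the main obstacle. You never derive $\phi^\star$, and your fallback (check only $q=p$, otherwise add an assumption) would prove a weaker result than the theorem. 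No extra assumption is needed: $\bm u_{\mathrm{DE}}^r$ is independent of both $\tau$ and the background clean fraction $q$. The reason is not the $\bm v\leftrightarrow\bm w$ swap symmetry, which exists only when $q=p$.

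The missing observation (dropping the superscript $r$) is that both class matrices share the same ``symmetric part''. Set $\bm M=\tfrac12\bigl(\bm v\bm v^\top+\bm w\bm w^\top\bigr)+\sigma^2\bm I$ and $\bm D=\bm v\bm v^\top-\bm w\bm w^\top$. Then $\bm A_+=\bm M+\tfrac{\alpha}{2}\bm D$ and $\bm A_-=\bm M-\tfrac{\beta}{2}\bm D$, with $\alpha=2p-1=\frac{1-\tau}{1+\tau}$ and $\beta=2q-1$. In your coordinates $\bm M=\operatorname{diag}(a,b)$ with $a=\sigma^2+\cos^2\frac{\theta}{2}$ and $b=\sigma^2+\sin^2\frac{\theta}{2}$, while $\bm D$ has zero diagonal and off-diagonal entries $\sin\theta$. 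So $\bm A_\pm$ have the same diagonal and differ only off it, and this is what the paper exploits. The two scalar equations of $\bm A_+\bm u=\lambda\bm A_-\bm u$ are $(1-\lambda)a u_1+\kappa u_2=0$ and $\kappa u_1+(1-\lambda)b u_2=0$, with $\kappa=(\alpha+\lambda\beta)\frac{\sin\theta}{2}>0$ since $\alpha,\beta>0$, $\lambda>0$ and $\theta>0$. Multiply the first by $b u_2$ and the second by $a u_1$, then subtract. This eliminates $\lambda$ and leaves $b u_2^2=a u_1^2$, so $\bm u_{\mathrm{DE}}\propto(\sqrt b,\sqrt a)$ whatever $\tau$ and $q$ are.

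Equivalently, the Rayleigh ratio equals $(1+\frac{\alpha}{2}t)/(1-\frac{\beta}{2}t)$ with $t=\bm u^\top\bm D\bm u/\bm u^\top\bm M\bm u$. This is increasing in $t$ because $\alpha+\beta>0$, so the maximizer maximizes $t=2\sin\theta\,u_1u_2/(a u_1^2+b u_2^2)$, and AM--GM gives the same vector. Then $\sin2\phi^\star=2\sqrt{ab}/(1+2\sigma^2)$ and $4ab=(1+2\sigma^2)^2-\cos^2\theta$ give the stated formula.

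Note also that $\cos2\phi^\star=(b-a)/(a+b)=-\cos\theta/(1+2\sigma^2)$. The target you wrote has the wrong sign, so a literal check of it would have appeared to fail. Only $\cos^2 2\phi^\star$ and $\sin2\phi^\star\ge0$ matter.
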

\begin{tcolorbox}
    [colframe=gray!80, colback=gray!7, boxrule=0.2pt, arc=2pt, boxsep=2.5pt, left=2pt, right=2pt, top=2pt, bottom=2pt]
\begin{remark}
Theorem~\ref{thm:de-gap} shows that incorporating background information
increases the alignment-score difference between clean and mislabeled
examples within the same space, and this advantage grows as contamination
in the target observed class increases.
\end{remark}
\end{tcolorbox}
In multiclass settings, the background consists of multiple non-target
classes. The following proposition explains how the background weights are
assigned according to the geometry of the target class.

\begin{prop}[Angular Ordering of Background Weights]
\label{prop:background-weight-ordering}
Fix target class $k$ and space $r$. Suppose that each non-target observed
class satisfies
$\bm A_j^r=\bm v_j^r(\bm v_j^r)^\top+\sigma^2\bm I$, where
$\lVert\bm v_j^r\rVert_2=1$ and $\sigma>0$. Let
$\phi_{kj}^r=\arccos\bigl|(\bm u_{k,\mathrm{FINE}}^r)^\top
\bm v_j^r\bigr|$. For any $j,\ell\neq k$,
\begin{equation}
    d_{kj}^r=\sigma^2+\cos^2\phi_{kj}^r,
    \qquad \phi_{kj}^r<\phi_{k\ell}^r
    \iff\gamma_{kj}^r>\gamma_{k\ell}^r.
    \label{eq:background-weight-ordering}
\end{equation}
\end{prop}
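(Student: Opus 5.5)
The plan is a direct computation followed by a monotonicity argument. Write $\bm u=\bm u_{k,\mathrm{FINE}}^r$, which is a unit vector by definition of $\operatorname{TopEig}$.

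\textbf{Step 1: the closed form for $d_{kj}^r$.} Substitute the assumed form $\bm A_j^r=\bm v_j^r(\bm v_j^r)^\top+\sigma^2\bm I$ into \eqref{eq:background-alignment-score}. This gives
\[
d_{kj}^r=\bm u^\top\bm v_j^r(\bm v_j^r)^\top\bm u+\sigma^2\bm u^\top\bm u=(\bm u^\top\bm v_j^r)^2+\sigma^2 .
\]
Since both $\bm u$ and $\bm v_j^r$ are unit vectors, $|\bm u^\top\bm v_j^r|\in[0,1]$. Hence $\phi_{kj}^r=\arccos|\bm u^\top\bm v_j^r|$ is well defined in $[0,\pi/2]$, and $(\bm u^\top\bm v_j^r)^2=\cos^2\phi_{kj}^r$. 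This is the first claim, $d_{kj}^r=\sigma^2+\cos^2\phi_{kj}^r$.

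\textbf{Step 2: transfer the ordering to the weights.} Every $d_{k\ell}^r\ge\sigma^2>0$, so the normalizer $S=\sum_{\ell\neq k}d_{k\ell}^r$ in \eqref{eq:weighted-background} is strictly positive and the same for all $j$. Therefore $\gamma_{kj}^r>\gamma_{k\ell}^r$ holds if and only if $d_{kj}^r>d_{k\ell}^r$. By Step 1, this is equivalent to $\cos^2\phi_{kj}^r>\cos^2\phi_{k\ell}^r$.

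\textbf{Step 3: monotonicity of $\cos^2$.} On $[0,\pi/2]$ the cosine is nonnegative and strictly decreasing, so $\cos^2$ is strictly decreasing there. It is therefore injective, and $\cos^2\phi_{kj}^r>\cos^2\phi_{k\ell}^r\iff\phi_{kj}^r<\phi_{k\ell}^r$. Chaining the equivalences from Steps 2 and 3 yields the biconditional in \eqref{eq:background-weight-ordering}.

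I do not expect a genuine obstacle. The only points needing care are:
\begin{itemize}
\item the normalizer must be positive, which is guaranteed by $\sigma>0$;
\item the angle must be restricted to $[0,\pi/2]$, which the absolute value in the definition of $\phi_{kj}^r$ ensures, so that strict monotonicity of $\cos^2$ applies in both directions.
\end{itemize}
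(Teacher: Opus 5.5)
Your proposal is correct and matches the paper's proof essentially step for step. Like the paper, you expand $d_{kj}^r$ using $\lVert\bm u_{k,\mathrm{FINE}}^r\rVert_2=1$, note that the common normalizer is at least $(K-1)\sigma^2>0$, and use strict monotonicity of $\cos^2$ on $[0,\pi/2]$ to chain the equivalences.
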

\begin{tcolorbox}
    [colframe=gray!80, colback=gray!7, boxrule=0.2pt, arc=2pt, boxsep=2.5pt, left=2pt, right=2pt, top=2pt, bottom=2pt]
\begin{remark}
A smaller $\phi_{kj}^r$ means that class $j$ has a larger alignment score
along the target class's leading direction and hence receives a larger
weight in $\bm B_k^r$. This makes discriminative analysis focus more on
background structure that overlaps with the target class.
\end{remark}
\end{tcolorbox}
\subsection{Trusted Space Selection}
\label{subsec:trusted-modality-selection}

Theorem~\ref{thm:fine-alignment-score-difference} shows that fused and
unimodal spaces differ in their ability to distinguish clean from
noisy examples. At the same time, each space may contain information
unavailable in other spaces. Relying on a single space for sample selection
may lose this complementary information, whereas unconditionally trusting
any space may allow a less discriminative space to influence the decision.
Therefore, for each class, we aim to select a set of trusted representation
spaces from the candidate set $\mathcal R$, which contains the fused and
unimodal spaces, accounting for both their complementarity and
discriminative ability.
To compare the discriminative ability of two distinct spaces
$r,s\in\mathcal R$ for observed class $k$, we form a two-dimensional
representation from the projections of the same instance onto their
respective discriminative eigenvectors:
\begin{equation}
        p_{i,k}^r
        =(\bm u_{k,\mathrm{DE}}^r)^\top\bm z_i^r,
        \qquad \bm p_{i,k}^{r,s}
        =\begin{bmatrix}p_{i,k}^r\\p_{i,k}^s\end{bmatrix}
        \in\mathbb R^2.
    \label{eq:paired-de-projections}
\end{equation}

We apply the discriminative analysis in
Section~\ref{subsec:discriminative-analysis} again in this two-dimensional
space. Specifically, we construct the second-moment matrix of each
observed class,
$\bm C_{j,k}^{r,s}
    =\frac{1}{N_j}\sum_{i:\widetilde y_i=j}
    \bm p_{i,k}^{r,s}(\bm p_{i,k}^{r,s})^\top,$
where $k$ specifies the class eigenvector used for projection and $j$
specifies the observed class of the examples used in the calculation.
We take the leading eigenvector of the target matrix $\bm C_{k,k}^{r,s}$
and compute the background weights using
Eq.~(\ref{eq:background-alignment-score}) and Eq.~(\ref{eq:weighted-background}) to
obtain the weighted background $\bm D_k^{r,s}$. The corresponding
two-dimensional discriminative eigenvector is
\begin{equation}
    \bm a_k^{r,s}
    =\operatorname*{arg\,max}_{\lVert\bm a\rVert_2=1}
    \frac{\bm a^\top\bm C_{k,k}^{r,s}\bm a}
    {\bm a^\top(\bm D_k^{r,s}+\epsilon\bm I)\bm a}.
    \label{eq:two-dimensional-de-eigenvector}
\end{equation}
For the current pair of spaces, write its coordinates as
$\bm a_k^{r,s}=(a_k^r,a_k^s)^\top$. These coordinates are the coefficients
in the combined projection $a_k^r p_{i,k}^r+a_k^s p_{i,k}^s$; a larger
magnitude assigns greater weight to the projection from the corresponding
space. The following result relates this comparison to the alignment-score
differences in the individual spaces.

\begin{theorem}[Ordering of Coefficients and Alignment-Score Differences]
\label{thm:two-dimensional-de-ordering}
Under the model in Section~\ref{subsec:problem-formulation}, if
$\epsilon=0$, then
\begin{equation}
        |a_k^r|>|a_k^s|
        \quad\Longleftrightarrow\quad
        \Delta_{\mathrm{DE}}^r>\Delta_{\mathrm{DE}}^s
        \quad\Longleftrightarrow\quad
        \theta^r>\theta^s.
    \label{eq:two-dimensional-de-ordering}
\end{equation}
\end{theorem}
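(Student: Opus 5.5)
The plan is to reduce the two-dimensional comparison to the same Gaussian model on which Theorems~\ref{thm:fine-alignment-score-difference} and~\ref{thm:de-gap} are built, and then to compare the two coordinates of $\bm a_k^{r,s}$ through an exchange argument.

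\textbf{Step 1: the projected model.} I will assume, as the model implicitly requires, that conditional on $y$ the Gaussian noises in different spaces are independent. Write $\alpha^r=(\bm u_{\mathrm{DE}}^r)^\top\bm v^r$ and $\beta^r=(\bm u_{\mathrm{DE}}^r)^\top\bm w^r$, and fix the sign of $\bm u_{\mathrm{DE}}^r$ so that $\alpha^r\ge 0$. Then $\bm p^{r,s}\mid y=+1\sim\mathcal N\bigl((\alpha^r,\alpha^s)^\top,\sigma^2\bm I\bigr)$ and $\bm p^{r,s}\mid y=-1\sim\mathcal N\bigl((\beta^r,\beta^s)^\top,\sigma^2\bm I\bigr)$.

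From the closed form in the proof of Theorem~\ref{thm:de-gap}, I will extract $\alpha^r,\beta^r$ as explicit functions of $\theta^r$ and $\sigma$. By the statement of that theorem, $(\alpha^r)^2-(\beta^r)^2=\Delta_{\mathrm{DE}}^r$.

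Under Assumption~\ref{assump:class-conditional-corruption}, the target and background second-moment matrices take the form
\begin{equation*}
\bm C=\pi\,\bm\alpha\bm\alpha^\top+(1-\pi)\,\bm\beta\bm\beta^\top+\sigma^2\bm I,
\qquad
\bm D=\pi'\,\bm\alpha\bm\alpha^\top+(1-\pi')\,\bm\beta\bm\beta^\top+\sigma^2\bm I.
\end{equation*}
Here $\pi=1/(1+\tau)>1/2>\pi'$. In the binary case there is a single background class, so $\gamma=1$ and the background weights play no role.

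\textbf{Step 2: rewrite the objective.} Set $X=(\bm a^\top\bm\alpha)^2$ and $Y=(\bm a^\top\bm\beta)^2$. The Rayleigh ratio becomes
\begin{equation*}
R(\bm a)=\frac{\pi X+(1-\pi)Y+\sigma^2}{\pi' X+(1-\pi')Y+\sigma^2}.
\end{equation*}
Since $\pi>\pi'$, a direct derivative computation should show that $R$ is strictly increasing in $X$ and strictly decreasing in $Y$.

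\textbf{Step 3: exchange argument.} Suppose $\Delta^r_{\mathrm{DE}}>\Delta^s_{\mathrm{DE}}$. Take any unit $\bm a=(a^r,a^s)$ with $|a^r|<|a^s|$, and let $\bm a'$ be the vector obtained by swapping the magnitudes of the coordinates, with signs chosen suitably. The goal is to show $X(\bm a')\ge X(\bm a)$ and $Y(\bm a')\le Y(\bm a)$, with at least one inequality strict. Then $R(\bm a')>R(\bm a)$, so no maximizer can have $|a^r|<|a^s|$.

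For this I will use the monotonicity of $\alpha^r$ increasing and $|\beta^r|$ decreasing in $\theta^r$, read off from the closed forms. This gives $\alpha^r>\alpha^s$ and $|\beta^r|<|\beta^s|$, after which the swap inequalities follow from a rearrangement-type argument. Separately, $|a^r|=|a^s|$ is ruled out because the first-order condition $\bm C\bm a=\lambda\bm D\bm a$ is not symmetric under swapping coordinates when $\theta^r\neq\theta^s$. Together these give $\Delta^r_{\mathrm{DE}}>\Delta^s_{\mathrm{DE}}\Rightarrow|a^r|>|a^s|$.

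\textbf{Step 4: close the equivalences.} Exchanging the roles of $r$ and $s$, and treating the tie $\theta^r=\theta^s$ (which forces $|a^r|=|a^s|$ by symmetry), gives the converse. The second equivalence, $\Delta^r_{\mathrm{DE}}>\Delta^s_{\mathrm{DE}}\iff\theta^r>\theta^s$, is immediate from the strict monotonicity of $\Delta^r_{\mathrm{DE}}$ in $\theta^r$ stated in Theorem~\ref{thm:de-gap}.

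\textbf{Main obstacle.} The hardest part is Step 3, specifically the sign bookkeeping for $\beta^r,\beta^s$. When $\beta$ has mixed signs, the swap may increase $X$ while also increasing $Y$, and then the monotonicity of $R$ in each variable separately does not settle the comparison. If that happens, I would instead solve the $2\times2$ generalized eigenproblem explicitly, using $\det(\bm C-\lambda\bm D)=0$, and compare $|a^r|$ with $|a^s|$ through the ratio of the components of the eigenvector.
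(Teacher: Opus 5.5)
\textbf{Gap in Step 1: the independence assumption.} The model does not imply conditional independence across spaces. Assumption~\ref{assump:gaussian-representations} fixes only the \emph{marginal} law of $\bm z^r\mid y$ in each space and says nothing about the joint law across spaces. Independence is also at odds with the setting for the pairs that matter most: $\bm z^{\mathrm F}=\Phi(\bm z^1,\ldots,\bm z^M)$ is a deterministic function of the unimodal representations. Without independence, $\bm p^{r,s}\mid y=\pm1$ has off-diagonal noise covariance $c_\pm=\operatorname{Cov}(p^r,p^s\mid y=\pm1)$, which can be nonzero and can differ between the two classes. Then $\bm a^\top\bm C\bm a$ picks up a term $2a^ra^s\bigl(\pi c_++(1-\pi)c_-\bigr)$, so your $R$ is not a function of $(X,Y)$ alone and Step 2 breaks. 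As written, you prove the theorem only for conditionally independent pairs of spaces. The paper never needs independence. It uses only the diagonal entries $P_k^q=\sigma^2+(\alpha^q)^2$ and $Q_k^q=\sigma^2+(\beta^q)^2$, which are marginal quantities. It controls the off-diagonal of $\bm Q_k^{r,s}$ by Cauchy--Schwarz, giving $\det\bm Q_k^{r,s}\ge\sigma^2(|\nu_k^r|-|\nu_k^s|)^2>0$.

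\textbf{Repairing the exchange argument.} Your exchange idea survives, and your ``main obstacle'' disappears, once you set it up without that assumption. Write $\bm C=\pi\bm P+(1-\pi)\bm Q$ and $\bm D=\pi'\bm P+(1-\pi')\bm Q$, where $\bm P,\bm Q$ are the true-class second-moment matrices of $\bm p^{r,s}$ (the paper's $\bm P_k^{r,s},\bm Q_k^{r,s}$, with arbitrary off-diagonals). Then $R(\bm a)=\frac{\pi t+1-\pi}{\pi' t+1-\pi'}$ with $t(\bm a)=\bm a^\top\bm P\bm a/\bm a^\top\bm Q\bm a$, which is well defined since $\bm Q\succ0$. $R$ is strictly increasing in $t$ because $\pi>\pi'$, so $\bm a_k^{r,s}$ maximizes $t$. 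Now use the plain transposition $\bm a'=(a^s,a^r)^\top$, with no sign changes. For any symmetric $2\times2$ matrix $\bm M$ the cross term $2M_{12}a^ra^s$ is unchanged, so
\begin{equation*}
(\bm a')^\top\bm M\bm a'-\bm a^\top\bm M\bm a=(M_{11}-M_{22})\bigl[(a^s)^2-(a^r)^2\bigr].
\end{equation*}
We have $P_k^r>P_k^s$ and $Q_k^r<Q_k^s$, because $P_k^q+Q_k^q$ is increasing and $Q_k^q$ is decreasing in $\sin^2\theta^q$. Hence if $|a^r|<|a^s|$, the swap strictly increases the numerator of $t$ and strictly decreases its denominator. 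This holds regardless of the signs of $\beta^r,\beta^s$ or of $c_\pm$; your mixed-sign difficulty arose only from flipping one coordinate's sign.

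The case $|a^r|=|a^s|$ needs an actual argument rather than ``not symmetric''. Suppose $\bm a\propto(1,\epsilon)^\top$ with $\epsilon=\pm1$. Subtracting $\epsilon$ times the second row of $\bm P\bm a=t^\ast\bm Q\bm a$ from the first gives $P_k^r-P_k^s=t^\ast(Q_k^r-Q_k^s)$. This is impossible: $t^\ast>0$, so the left side is positive and the right side negative.

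\textbf{Comparison with the paper.} The paper stays with the pencil $(\bm C,\bm D)$. Its residual $\bm G=\lambda\bm D-\bm C$ is positive semidefinite and rank one with null vector $\bm a$, so $(a^r)^2-(a^s)^2=(G_{22}-G_{11})/\operatorname{tr}\bm G$. That gives strictness in one step, but it needs the eigenvalue bound $\frac{1}{1+\tau}-\frac{\tau_-\lambda}{1+\tau_-}>0$. Reducing to $(\bm P,\bm Q)$ first, as above, makes that bound unnecessary.
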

\begin{tcolorbox}
    [colframe=gray!80, colback=gray!7, boxrule=0.2pt, arc=2pt, boxsep=2.5pt, left=2pt, right=2pt, top=2pt, bottom=2pt]
\begin{remark}
Theorem~\ref{thm:two-dimensional-de-ordering} shows that, in discriminative
analysis of spaces, the space with the larger coefficient magnitude has a
larger alignment-score difference, providing a basis for comparing the
discriminative ability of the two spaces.
\end{remark}
\end{tcolorbox}
In practice, we compare all unordered pairs of spaces in $\mathcal R$ for each class. The
space with the strictly larger coordinate magnitude records a win, and the
other records a loss. Equal magnitudes count as neither a win nor a loss
for either space. Let $W_k^r$ and $L_k^r$ denote the numbers of wins and
losses, respectively, for space $r$ on class $k$. The trusted space set is
\begin{equation}
    \mathcal R_k=\{r\in\mathcal R:W_k^r\geq L_k^r\}.
    \label{eq:trusted-space-set}
\end{equation}
Each comparison without a tie contributes one win and one loss. Therefore, at
least one space satisfies this condition. Fused and unimodal spaces follow
the same selection rule, and the resulting combination of trusted spaces
varies across classes.

\subsection{Reliable Sample Filtering}
\label{subsec:trusted-sample-selection}

Prior work fits a GMM to alignment scores within each observed class to
distinguish clean from mislabeled examples. However, a relatively high
score within the observed class does not imply that the instance is most
closely aligned with that class. Assessing the reliability of an observed
label therefore also requires considering its alignment scores for other
classes.

\textbf{Label agreement.}
We directly compute an instance's alignment scores along the
discriminative eigenvectors of all classes and retain the example when its
observed class has the strictly highest score. The set of examples selected
by space $r$ from observed class $k$ is
\begin{equation}
        \mathcal C_k^r
        =\bigl\{ (\bm x_i,\widetilde y_i)\in\mathcal D\mid
        \widetilde y_i=k,
        s_{i,k}^r>\max_{j\neq k}s_{i,j}^r\bigr\}.
    \label{eq:label-agreement-selection}
\end{equation}
This criterion selects examples according to the support for their
observed labels relative to other classes, eliminating classwise GMM
fitting and its probability threshold. The final selected subset is the
union of the subsets selected by the trusted spaces of each class:
\begin{equation}
    \mathcal C
    =\bigcup_{k=1}^{K}\bigcup_{r\in\mathcal R_k}\mathcal C_k^r.
    \label{eq:trusted-selection-union}
\end{equation}
An example is retained if it satisfies LA in at least one trusted space
for its observed class. Different trusted spaces can select examples
missed by one another, while TSS determines which spaces participate in
the final decision for each class.
Algorithm~\ref{alg:refine-selection} summarizes the complete
sample-selection procedure.

\begin{algorithm}[t]
    \caption{REFINE}
    \label{alg:refine-selection}
    \small
    \renewcommand{\algorithmicrequire}{\textbf{Input:}}
    \renewcommand{\algorithmicensure}{\textbf{Output:}}
    \begin{algorithmic}[1]
        \raggedright
        \REQUIRE Noisy multimodal dataset $\mathcal D$, multimodal model,
            \mbox{constant $\epsilon>0$}.
        \ENSURE Selected subset of examples $\mathcal C$.
        \STATE Initialize $\mathcal C\leftarrow\varnothing$.
        \STATE Extract representations
            $\{\bm z_i^r:i=1,\ldots,N,\ r\in\mathcal R\}$ for all
            instances using the multimodal model.
        \FOR{$k=1,\ldots,K$}
            \item[] \textit{/* (1) Construct class DE in each space */}
            \FOR{$r\in\mathcal R$}
                \STATE Compute background weights and $\bm B_k^r$ by
                    Eq.~(\ref{eq:background-alignment-score}) and Eq.~(\ref{eq:weighted-background}).
                \STATE Obtain $\bm u_{k,\mathrm{DE}}^r$ by
                    Eq.~(\ref{eq:de-eigenvector}).
            \ENDFOR
            \item[] \textit{/* (2) Select trusted spaces for each class */}
            \FORALL{unordered pairs $\{r,s\}\subseteq\mathcal R$, $r\neq s$}
                \STATE Construct two-dimensional representations
                    $\{\bm p_{i,k}^{r,s}\}_{i=1}^{N}$ for all instances by
                    Eq.~(\ref{eq:paired-de-projections}).
                \STATE Solve Eq.~(\ref{eq:two-dimensional-de-eigenvector})
                    to obtain $\bm a_k^{r,s}=(a_k^r,a_k^s)^\top$.
                \STATE Record a win for the space with the larger
                    coordinate magnitude and a loss for the other.
            \ENDFOR
            \STATE Form $\mathcal R_k$ from spaces with at least as many
                wins as losses.
        \ENDFOR
        \item[] \textit{/* (3) Form the selected subset of examples */}
        \FOR{$k=1,\ldots,K$}
            \FOR{$r\in\mathcal R_k$}
                \STATE Compute
                    $s_{i,j}^r=[(\bm u_{j,\mathrm{DE}}^r)^\top\bm z_i^r]^2$
                    for all $j=1,\ldots,K$ and all $i$ with
                    $\widetilde y_i=k$.
                \STATE Select examples to form $\mathcal C_k^r$ by
                    Eq.~(\ref{eq:label-agreement-selection}).
            \ENDFOR
        \ENDFOR
        \STATE Obtain $\mathcal C$ by taking the union in
            Eq.~(\ref{eq:trusted-selection-union}).
    \end{algorithmic}
\end{algorithm}

\section{Experiments}
\label{sec:experiments}
In this section, we conduct experiments to address the following research question:
\begin{itemize}
    \item \textbf{RQ1}: Does REFINE consistently improve multimodal classification under diverse label-noise conditions and training settings?
    \item \textbf{RQ2}: How do the proposed sample-selection components and representation-space selection contribute to the reliability of clean-sample selection?
    \item \textbf{RQ3}: Can REFINE serve as a general plug-in sample-selection mechanism across different learning-with-noisy-labels training paradigms?
\end{itemize}
\subsection{Experimental Setup}
\label{subsec:experimental-setup}
\textbf{Datasets.} We use six multimodal datasets for the experiments on \textit{synthetic} label noise, and one multimodal dataset for the experiments on \textit{realistic} label noise. Note that these comprise four image-text datasets, a video-audio dataset, an
audio-text dataset, and a text-video-audio dataset. For the synthetic experiments, we exploit UPMC-Food101~\cite{wang2015recipe}, N24News~\cite{wang2022n24news}, NWPU-Captions~\cite{cheng2022nwpu}, Rakuten France~\cite{amoualian2021commerce}, VGGSound50~\cite{qin2026deep,chen2020vggsound}, and MIntRec2.0~\cite{zhang2024mintrec2}. For the realistic one, we utilize BSD\footnote{\url{https://dcase.community/challenge2026/task-heterogeneous-audio-classification}}, which includes a noisy training set and an expert-annotated clean test set. The key statistics of the datasets are provided in Appendix\ref{app:datasets}.

\textbf{Label noise generation.} We consider four types of synthetic label noise. (1) \textit{Symmetric noise}~(abbreviated as Sym)~\cite{van2015learning}: each true label
is randomly flipped to another class with equal probability. (2) \textit{Asymmetric noise}~(abbreviated as Asym)~\cite{scott2013classification}: the labels of selected examples are
replaced with designated related classes according to a predefined semantic confusion mapping (\textit{e.g.}, waffles$\rightarrow$pancakes). (3) \textit{Full-modality instance-dependent noise}~(abbreviated as F-IDN)~\cite{zhang2026tmnr}: we follow TMNR by using the uncertainty of a full-modality
evidential classifier to determine corruption propensity and class evidence
to assign incorrect labels. (4) \textit{Partial-modality instance-dependent noise}~(abbreviated as P-IDN): to
simulate annotators referring to only some modalities during
annotation, we assign each instance a modality combination
drawn uniformly from a candidate set comprising the full
modality combination and all subsets obtained by omitting
one modality. We then generate noisy labels following the F-IDN rules using an evidential classifier trained for the selected combination.  All experiments on synthetic-noise datasets employ Sym. at a 50\% noise level,
Asym. at a 40\% noise level, and F-IDN and P-IDN at both 40\% and 50\% noise levels.

\begin{table}[!t]
  \caption{Classification accuracy (\%, mean$\pm$std.) of the proposed method and baseline
  methods on different datasets under different noise types and ratios. The
  ``Noise'' columns show the noise type and ratio, and the ``Average'' column
  reports the mean across the six noise settings. The best and second-best
  results are highlighted in \textbf{boldface} and \underline{underlined}, respectively.}
  \label{tab:b1-main-results}
  \centering
  \small
  \renewcommand{\arraystretch}{1.12}
  \newcommand{\acc}[2]{#1\,\textpm\,#2}
  \newcommand{\bestacc}[2]{\textbf{#1\,\textpm\,#2}}
  \newcommand{\secondacc}[2]{\underline{#1\,\textpm\,#2}}
  \settowidth{\dimen0}{\textbf{NWPU-Captions}}
  \settowidth{\dimen2}{DIST+CT}
  \settowidth{\dimen4}{\bestacc{00.00}{0.00}}
  \settowidth{\dimen6}{\textbf{F-IDN 50\%}}
  \setlength{\tabcolsep}{\dimexpr
    (\textwidth-\dimen0-\dimen2-3\dimen4-4\dimen6-3\arrayrulewidth)/18\relax}
  \setlength{\aboverulesep}{0pt}
  \setlength{\belowrulesep}{0pt}
  \newlength{\bOneinset}
  \setlength{\bOneinset}{1.8pt}
  \newcommand{\bOnetopgap}{0pt}
  \newcommand{\bOnebottomgap}{\bOneinset}
  \newcommand{\bOnegaps}[2]{\noalign{%
    \gdef\bOnetopgap{#1}\gdef\bOnebottomgap{#2}}}
  \newcommand{\bOneseparator}{\smash{\rule[
    \dimexpr-\dp\csname @arstrutbox\endcsname+\bOnebottomgap\relax
  ]{\arrayrulewidth}{\dimexpr
    \ht\csname @arstrutbox\endcsname+\dp\csname @arstrutbox\endcsname-
    \bOnetopgap-\bOnebottomgap\relax}}}
  \newcommand{\bOneheaderseparator}{\smash{\rule[
    -\dimexpr\dp\csname @arstrutbox\endcsname+\cmidrulewidth\relax
  ]{\arrayrulewidth}{\dimexpr
    \ht\csname @arstrutbox\endcsname+\dp\csname @arstrutbox\endcsname+
    \cmidrulewidth-\bOneinset\relax}}}
  \newlength{\refineoffset}
  \setlength{\refineoffset}{\dimexpr\dimen0+2\tabcolsep+\arrayrulewidth\relax}
  \newcommand{\refinerowshade}{\noalign{\begingroup
    \color{gray!15}%
    \vskip\bOneinset
    \hbox{\hskip\refineoffset\vrule width\dimexpr\textwidth-\refineoffset\relax
      height0pt depth\dimexpr
        \ht\csname @arstrutbox\endcsname+\dp\csname @arstrutbox\endcsname-2\bOneinset\relax}%
    \vskip-\dimexpr
      \ht\csname @arstrutbox\endcsname+\dp\csname @arstrutbox\endcsname-\bOneinset\relax
    \endgroup}}
  \begin{tabular}{c!{\bOneseparator}c!{\bOneseparator}cccccc!{\bOneseparator}c}
    \toprule
    \multicolumn{1}{c!{\bOneheaderseparator}}{\tableheaderspan{\textbf{Dataset}}} &
    \multicolumn{1}{c!{\bOneheaderseparator}}{\tableheaderspan{\textbf{Method}}} &
    \multicolumn{6}{c!{\bOneheaderseparator}}{\tableheadercell{\textbf{Noise}}} &
    \tableheaderspan{\textbf{Average}} \\
    \cmidrule(lr){3-8}
    & & \tableheadercell{\textbf{Sym 50\%}} & \tableheadercell{\textbf{Asym 40\%}} &
    \tableheadercell{\textbf{F-IDN 40\%}} & \tableheadercell{\textbf{F-IDN 50\%}} &
    \tableheadercell{\textbf{P-IDN 40\%}} & \tableheadercell{\textbf{P-IDN 50\%}} & \\
    \midrule

    \bOnegaps{\bOneinset}{0pt}
    \multirow{8}{*}{\textbf{UPMC-Food101}}
    & Standard & \acc{82.75}{0.33} & \acc{67.86}{0.22} & \acc{84.25}{0.10} & \acc{82.50}{0.11} & \acc{84.09}{0.08} & \acc{82.11}{0.07} & \acc{80.59}{0.05} \\
    \bOnegaps{0pt}{0pt}
    & CRUST & \acc{84.81}{0.16} & \secondacc{81.93}{0.23} & \acc{83.11}{0.25} & \acc{81.75}{0.04} & \acc{82.19}{0.33} & \acc{80.79}{0.20} & \acc{82.43}{0.08} \\
    & AUM & \acc{83.18}{0.12} & \acc{66.08}{0.56} & \acc{83.80}{0.13} & \acc{82.09}{0.30} & \acc{82.01}{0.23} & \acc{79.80}{0.22} & \acc{79.49}{0.15} \\
    & L2D & \acc{84.49}{0.22} & \acc{72.89}{0.37} & \secondacc{85.75}{0.21} & \acc{84.06}{0.11} & \secondacc{85.46}{0.22} & \secondacc{83.83}{0.35} & \acc{82.75}{0.07} \\
    & DIST & \secondacc{85.56}{0.20} & \acc{80.97}{0.38} & \acc{85.61}{0.23} & \secondacc{84.29}{0.15} & \acc{84.37}{0.17} & \acc{82.29}{0.15} & \secondacc{83.85}{0.10} \\
    & DIST+CT & \secondacc{85.56}{0.20} & \acc{80.96}{0.38} & \acc{85.61}{0.22} & \secondacc{84.29}{0.11} & \acc{84.37}{0.17} & \acc{82.31}{0.12} & \secondacc{83.85}{0.08} \\
    \bOnegaps{0pt}{\bOneinset}
    & FINE & \acc{84.78}{0.25} & \acc{69.08}{0.56} & \acc{84.70}{0.11} & \acc{83.57}{0.15} & \acc{83.04}{0.16} & \acc{81.75}{0.11} & \acc{81.15}{0.12} \\
    \cmidrule{2-9}
    \bOnegaps{\bOneinset}{\bOneinset}
    \refinerowshade
    & \textbf{REFINE}
    & \bestacc{86.63}{0.22}
    & \bestacc{85.28}{0.34}
    & \bestacc{86.85}{0.08}
    & \bestacc{85.84}{0.13}
    & \bestacc{86.41}{0.16}
    & \bestacc{85.12}{0.21}
    & \bestacc{86.02}{0.05} \\
    \midrule

    \bOnegaps{\bOneinset}{0pt}
    \multirow{8}{*}{\textbf{MIntRec2.0}}
    & Standard & \acc{28.48}{0.76} & \secondacc{28.90}{0.35} & \acc{35.41}{0.73} & \acc{34.87}{0.75} & \acc{35.93}{0.72} & \acc{34.34}{0.94} & \secondacc{32.99}{0.31} \\
    \bOnegaps{0pt}{0pt}
    & CRUST & \acc{29.79}{1.16} & \acc{27.34}{1.07} & \acc{34.90}{0.64} & \acc{34.36}{0.85} & \acc{35.10}{1.19} & \acc{33.63}{1.73} & \acc{32.52}{0.74} \\
    & AUM & \acc{27.55}{0.74} & \acc{25.15}{0.99} & \acc{31.60}{0.84} & \acc{30.25}{1.07} & \acc{31.82}{1.01} & \acc{29.90}{0.69} & \acc{29.38}{0.55} \\
    & L2D & \acc{27.75}{1.32} & \acc{25.50}{0.88} & \acc{32.72}{0.87} & \acc{30.28}{0.56} & \acc{33.21}{0.86} & \acc{31.02}{1.04} & \acc{30.08}{0.73} \\
    & DIST & \secondacc{30.04}{0.71} & \acc{25.76}{0.64} & \secondacc{35.75}{0.45} & \secondacc{35.22}{0.76} & \bestacc{36.24}{0.85} & \secondacc{34.43}{1.09} & \acc{32.91}{0.42} \\
    & DIST+CT & \secondacc{30.04}{0.71} & \acc{25.67}{0.49} & \secondacc{35.75}{0.45} & \secondacc{35.22}{0.76} & \bestacc{36.24}{0.85} & \secondacc{34.43}{1.09} & \acc{32.89}{0.43} \\
    \bOnegaps{0pt}{\bOneinset}
    & FINE & \acc{29.73}{1.11} & \acc{25.95}{0.70} & \acc{33.77}{1.15} & \acc{33.24}{1.39} & \acc{34.57}{0.40} & \acc{32.14}{0.60} & \acc{31.57}{0.53} \\
    \cmidrule{2-9}
    \bOnegaps{\bOneinset}{\bOneinset}
    \refinerowshade
    & \textbf{REFINE}
    & \bestacc{31.92}{0.85}
    & \bestacc{29.61}{1.09}
    & \bestacc{36.54}{1.34}
    & \bestacc{35.70}{0.40}
    & \secondacc{35.96}{0.91}
    & \bestacc{34.94}{1.81}
    & \bestacc{34.11}{0.78} \\
    \midrule

    \bOnegaps{\bOneinset}{0pt}
    \multirow{8}{*}{\textbf{NWPU-Captions}}
    & Standard & \acc{98.30}{0.16} & \acc{77.23}{0.98} & \acc{97.21}{0.17} & \acc{95.01}{0.51} & \acc{98.13}{0.27} & \acc{96.71}{0.49} & \acc{93.77}{0.28} \\
    \bOnegaps{0pt}{0pt}
    & CRUST & \secondacc{99.19}{0.03} & \acc{94.61}{0.22} & \acc{97.14}{0.40} & \acc{95.77}{0.59} & \acc{97.45}{0.32} & \acc{96.48}{0.35} & \acc{96.78}{0.22} \\
    & AUM & \acc{98.90}{0.22} & \acc{77.31}{0.93} & \acc{98.27}{0.17} & \acc{96.56}{0.50} & \acc{97.69}{0.20} & \acc{96.15}{0.34} & \acc{94.15}{0.19} \\
    & L2D & \acc{98.63}{0.14} & \acc{83.30}{2.38} & \acc{98.39}{0.17} & \acc{97.14}{0.50} & \secondacc{98.48}{0.35} & \secondacc{97.71}{0.23} & \acc{95.61}{0.36} \\
    & DIST & \acc{99.14}{0.07} & \secondacc{97.18}{0.15} & \acc{98.58}{0.19} & \acc{97.09}{0.46} & \acc{97.90}{0.26} & \acc{96.65}{0.29} & \secondacc{97.76}{0.14} \\
    & DIST+CT & \acc{99.14}{0.07} & \acc{97.16}{0.17} & \acc{98.58}{0.19} & \acc{97.09}{0.46} & \acc{97.90}{0.26} & \acc{96.65}{0.29} & \acc{97.75}{0.15} \\
    \bOnegaps{0pt}{\bOneinset}
    & FINE & \acc{98.84}{0.17} & \acc{88.01}{1.07} & \secondacc{98.70}{0.16} & \secondacc{98.10}{0.32} & \acc{98.12}{0.33} & \acc{96.94}{0.51} & \acc{96.45}{0.23} \\
    \cmidrule{2-9}
    \bOnegaps{\bOneinset}{\bOneinset}
    \refinerowshade
    & \textbf{REFINE}
    & \bestacc{99.55}{0.10}
    & \bestacc{98.39}{0.40}
    & \bestacc{99.55}{0.11}
    & \bestacc{99.37}{0.09}
    & \bestacc{99.34}{0.15}
    & \bestacc{99.12}{0.08}
    & \bestacc{99.22}{0.08} \\
    \midrule

    \bOnegaps{\bOneinset}{0pt}
    \multirow{8}{*}{\textbf{VGGSound50}}
    & Standard & \acc{68.22}{1.05} & \acc{57.76}{1.22} & \acc{68.98}{0.35} & \acc{67.58}{0.50} & \acc{69.04}{0.53} & \acc{67.41}{0.92} & \acc{66.50}{0.44} \\
    \bOnegaps{0pt}{0pt}
    & CRUST & \acc{69.72}{0.52} & \acc{59.78}{2.53} & \secondacc{69.79}{0.93} & \bestacc{68.80}{1.14} & \secondacc{69.71}{0.74} & \secondacc{68.32}{1.70} & \acc{67.69}{0.54} \\
    & AUM & \secondacc{70.92}{0.15} & \acc{60.75}{2.68} & \acc{69.19}{0.54} & \acc{67.33}{0.59} & \acc{69.48}{0.66} & \acc{67.53}{0.76} & \acc{67.53}{0.52} \\
    & L2D & \acc{68.16}{0.63} & \acc{60.50}{1.55} & \acc{68.64}{0.77} & \acc{66.54}{0.46} & \acc{68.48}{0.87} & \acc{66.43}{1.07} & \acc{66.46}{0.17} \\
    & DIST & \acc{70.07}{0.46} & \secondacc{64.58}{0.78} & \acc{69.49}{0.16} & \acc{67.11}{0.69} & \acc{69.46}{0.58} & \acc{67.81}{0.84} & \secondacc{68.09}{0.21} \\
    & DIST+CT & \acc{70.07}{0.46} & \acc{64.50}{0.60} & \acc{69.49}{0.16} & \acc{67.11}{0.69} & \acc{69.46}{0.58} & \acc{67.81}{0.84} & \acc{68.07}{0.18} \\
    \bOnegaps{0pt}{\bOneinset}
    & FINE & \acc{70.23}{0.79} & \acc{57.33}{0.88} & \acc{69.24}{0.64} & \acc{67.79}{0.68} & \acc{69.35}{0.45} & \acc{67.60}{1.12} & \acc{66.92}{0.36} \\
    \cmidrule{2-9}
    \bOnegaps{\bOneinset}{\bOneinset}
    \refinerowshade
    & \textbf{REFINE}
    & \bestacc{71.29}{0.31}
    & \bestacc{65.28}{0.83}
    & \bestacc{69.87}{1.02}
    & \secondacc{68.71}{0.95}
    & \bestacc{70.72}{0.78}
    & \bestacc{69.31}{0.62}
    & \bestacc{69.20}{0.49} \\
    \midrule

    \bOnegaps{\bOneinset}{0pt}
    \multirow{8}{*}{\textbf{Rakuten France}}
    & Standard & \acc{73.24}{0.11} & \acc{62.51}{0.65} & \acc{73.54}{0.42} & \acc{70.66}{0.14} & \acc{73.74}{0.05} & \acc{71.49}{0.44} & \acc{70.86}{0.18} \\
    \bOnegaps{0pt}{0pt}
    & CRUST & \acc{73.47}{0.17} & \acc{62.38}{0.62} & \acc{73.44}{0.39} & \acc{70.73}{0.09} & \acc{73.70}{0.11} & \acc{71.49}{0.46} & \acc{70.87}{0.20} \\
    & AUM & \acc{75.11}{0.41} & \secondacc{66.20}{0.88} & \acc{74.41}{0.11} & \acc{71.99}{0.43} & \secondacc{74.04}{0.22} & \acc{71.49}{0.27} & \secondacc{72.21}{0.20} \\
    & L2D & \acc{73.20}{0.42} & \acc{64.13}{0.32} & \acc{74.19}{0.29} & \secondacc{72.36}{0.34} & \acc{73.97}{0.42} & \secondacc{72.06}{0.50} & \acc{71.65}{0.20} \\
    & DIST & \bestacc{75.15}{0.22} & \acc{61.84}{0.92} & \secondacc{74.46}{0.28} & \acc{71.64}{0.25} & \acc{73.89}{0.31} & \acc{71.79}{0.56} & \acc{71.46}{0.17} \\
    & DIST+CT & \secondacc{75.12}{0.52} & \acc{61.84}{0.91} & \bestacc{74.48}{0.30} & \acc{71.66}{0.26} & \acc{73.96}{0.34} & \acc{71.63}{0.43} & \acc{71.45}{0.16} \\
    \bOnegaps{0pt}{\bOneinset}
    & FINE & \acc{73.31}{0.59} & \acc{63.83}{0.67} & \acc{73.44}{0.39} & \acc{70.73}{0.09} & \acc{73.70}{0.11} & \acc{71.54}{0.49} & \acc{71.09}{0.17} \\
    \cmidrule{2-9}
    \bOnegaps{\bOneinset}{\bOneinset}
    \refinerowshade
    & \textbf{REFINE}
    & \acc{74.69}{0.25}
    & \bestacc{66.75}{1.67}
    & \acc{74.36}{0.15}
    & \bestacc{72.72}{0.24}
    & \bestacc{74.05}{0.34}
    & \bestacc{72.45}{0.96}
    & \bestacc{72.50}{0.21} \\
    \midrule

    \bOnegaps{\bOneinset}{0pt}
    \multirow{8}{*}{\textbf{N24News}}
    & Standard & \acc{61.10}{0.30} & \acc{49.56}{0.96} & \acc{63.37}{0.36} & \acc{62.17}{0.29} & \acc{63.64}{0.26} & \acc{61.28}{0.41} & \acc{60.19}{0.16} \\
    \bOnegaps{0pt}{0pt}
    & CRUST & \secondacc{63.68}{0.34} & \acc{55.84}{0.30} & \acc{63.68}{0.45} & \secondacc{62.95}{0.32} & \acc{63.85}{0.42} & \secondacc{62.31}{0.57} & \acc{62.05}{0.17} \\
    & AUM & \acc{63.06}{0.48} & \acc{53.01}{0.46} & \acc{63.80}{0.40} & \acc{62.57}{0.28} & \acc{63.52}{0.41} & \acc{61.74}{0.28} & \acc{61.28}{0.19} \\
    & L2D & \acc{61.05}{0.27} & \acc{52.30}{0.86} & \acc{63.49}{0.15} & \acc{62.00}{0.18} & \secondacc{63.92}{0.28} & \acc{61.64}{0.49} & \acc{60.73}{0.27} \\
    & DIST & \acc{63.09}{0.52} & \secondacc{57.83}{0.49} & \secondacc{64.01}{0.21} & \acc{62.58}{0.26} & \acc{63.82}{0.48} & \acc{62.21}{0.38} & \secondacc{62.26}{0.19} \\
    & DIST+CT & \acc{63.06}{0.49} & \acc{57.82}{0.45} & \acc{64.00}{0.20} & \acc{62.58}{0.26} & \acc{63.82}{0.49} & \acc{62.22}{0.36} & \acc{62.25}{0.18} \\
    \bOnegaps{0pt}{\bOneinset}
    & FINE & \acc{61.81}{0.40} & \acc{53.09}{0.45} & \acc{63.58}{0.39} & \acc{62.35}{0.15} & \acc{63.61}{0.28} & \acc{61.40}{0.26} & \acc{60.97}{0.16} \\
    \cmidrule{2-9}
    \bOnegaps{\bOneinset}{\bOneinset}
    \refinerowshade
    & \textbf{REFINE}
    & \bestacc{64.55}{0.53}
    & \bestacc{60.81}{0.31}
    & \bestacc{64.04}{0.28}
    & \bestacc{63.07}{0.51}
    & \bestacc{64.56}{0.41}
    & \bestacc{63.28}{0.23}
    & \bestacc{63.38}{0.27} \\
    \bottomrule
  \end{tabular}
\end{table}

\begin{table}[t]
  \caption{Classification accuracy (\%, mean$\pm$std.) of the proposed method
  and baseline methods on the BSD dataset. L2D requires a known noise rate,
  which is unavailable under real-world label noise. The best and second-best
  results are highlighted in \textbf{boldface} and \underline{underlined},
  respectively.}
  \label{tab:b2-real-world}
  \centering
  \small
  \renewcommand{\arraystretch}{1.55}
  \setlength{\aboverulesep}{0pt}
  \setlength{\belowrulesep}{0pt}
  \newcommand{\bsdres}[2]{#1\,\textpm\,#2}
  \newcommand{\bestbsdres}[2]{\textbf{#1\,\textpm\,#2}}
  \newcommand{\secondbsdres}[2]{\underline{#1\,\textpm\,#2}}
  \settowidth{\dimen0}{\textbf{Accuracy}}
  \settowidth{\dimen2}{\bsdres{00.00}{0.00}}
  \settowidth{\dimen4}{L2D}
  \settowidth{\dimen6}{\bestbsdres{00.00}{0.00}}
  \setlength{\tabcolsep}{\dimexpr
    (\textwidth-\dimen0-6\dimen2-\dimen4-\dimen6-\arrayrulewidth)/18\relax}
  \newlength{\bsdhighlightwidth}
  \setlength{\bsdhighlightwidth}{\dimexpr\dimen6+2\tabcolsep\relax}
  \newcommand{\bsdcolumnshade}{\noalign{\begingroup
    \color{gray!15}%
    \vskip1.8pt
    \hbox{\hskip\dimexpr\textwidth-\bsdhighlightwidth\relax
      \vrule width\bsdhighlightwidth height0pt depth\dimexpr
        2\ht\csname @arstrutbox\endcsname+2\dp\csname @arstrutbox\endcsname+
        \lightrulewidth-3.6pt\relax}%
    \vskip-\dimexpr
      2\ht\csname @arstrutbox\endcsname+2\dp\csname @arstrutbox\endcsname+
      \lightrulewidth-1.8pt\relax
    \endgroup}}
  \newcommand{\bsdseparator}{\smash{\rule[
    \dimexpr-\dp\csname @arstrutbox\endcsname+1.8pt\relax
  ]{\arrayrulewidth}{\dimexpr
    \ht\csname @arstrutbox\endcsname+\dp\csname @arstrutbox\endcsname-3.6pt\relax}}}
  \begin{tabular}{c!{\bsdseparator}cccccccc}
    \toprule
    \bsdcolumnshade
    \tableheadercell{\textbf{Method}}
    & \tableheadercell{Standard} & \tableheadercell{AUM} & \tableheadercell{CRUST} & \tableheadercell{L2D} & \tableheadercell{DIST} & \tableheadercell{DIST+CT} & \tableheadercell{FINE}
    & \tableheadercell{\textbf{REFINE}} \\
    \midrule
    \tableheadercell{\textbf{Accuracy}}
    & \tableheadercell{\bsdres{60.37}{0.65}}
    & \tableheadercell{\bsdres{58.90}{0.73}}
    & \tableheadercell{\bsdres{58.70}{0.71}}
    & \tableheadercell{--}
    & \tableheadercell{\bsdres{60.27}{0.74}}
    & \tableheadercell{\bsdres{60.28}{0.72}}
    & \tableheadercell{\secondbsdres{60.71}{0.98}}
    & \tableheadercell{\bestbsdres{61.03}{0.67}} \\
    \bottomrule
  \end{tabular}
\end{table}

\begin{table}[t]
  \centering
  
  \caption{Classification accuracy (\%, mean$\pm$std.) with unfrozen encoders
  on UPMC-Food101 under different noise types and ratios. The best and
  second-best results are highlighted in \textbf{boldface} and
  \underline{underlined}, respectively.}
  \label{tab:b3-end-to-end}
  \centering
  \begin{minipage}{0.66\linewidth}
  \centering
  \footnotesize
  \renewcommand{\arraystretch}{1.08}
  \newcommand{\eendres}[2]{#1\,\textpm\,#2}
  \newcommand{\besteendres}[2]{\textbf{#1\,\textpm\,#2}}
  \newcommand{\secondeendres}[2]{\underline{#1\,\textpm\,#2}}
  \settowidth{\dimen0}{DIST+CT}
  \settowidth{\dimen2}{\besteendres{00.00}{0.00}}
  \settowidth{\dimen4}{\textbf{F-IDN 50\%}}
  \setlength{\tabcolsep}{\dimexpr
    (\columnwidth-\dimen0-2\dimen2-2\dimen4-\arrayrulewidth)/10\relax}
  \begin{tabular}{c|cccc}
    \toprule
    \multicolumn{1}{c!{\tableheaderseparator}}{\tableheaderspan{\textbf{Method}}} &
    \multicolumn{4}{c}{\tableheadercell{\textbf{Noise}}} \\
    \cmidrule(l){2-5}
    & \tableheadercell{\textbf{Sym 50\%}} & \tableheadercell{\textbf{Asym 40\%}}
     & \tableheadercell{\textbf{F-IDN 50\%}} & \tableheadercell{\textbf{P-IDN 50\%}} \\
    \midrule

    Standard & \eendres{87.54}{0.20} & \eendres{73.77}{1.06}
             & \eendres{86.72}{0.15} & \eendres{86.05}{0.25} \\
    CRUST & \eendres{88.41}{0.18} & \secondeendres{81.92}{0.79}
          & \eendres{87.10}{0.21} & \eendres{86.36}{0.14} \\
    AUM & \secondeendres{88.52}{0.12} & \eendres{74.13}{1.31}
        & \eendres{87.02}{0.21} & \eendres{86.09}{0.28} \\
    L2D & \eendres{87.91}{0.17} & \eendres{80.39}{0.66}
        & \eendres{86.90}{0.21} & \eendres{86.15}{0.30} \\
    DIST & \eendres{87.53}{0.10} & \eendres{74.15}{1.29}
         & \eendres{86.73}{0.16} & \eendres{86.10}{0.27} \\
    DIST+CT & \eendres{87.50}{0.15} & \eendres{74.12}{1.30}
            & \eendres{86.73}{0.15} & \eendres{85.96}{0.42} \\
    FINE & \eendres{88.49}{0.09} & \eendres{74.13}{1.28}
         & \secondeendres{87.42}{0.12} & \secondeendres{86.76}{0.22} \\
    \midrule
    \noalign{\color{gray!15}\hrule height 2.6ex\vskip-2.6ex}

    \textbf{REFINE} & \besteendres{89.01}{0.25} & \besteendres{83.29}{0.43}
                  & \besteendres{88.12}{0.33} & \besteendres{87.56}{0.13} \\
    \bottomrule
  \end{tabular}%
  \end{minipage}
\end{table}

\textbf{Baselines.} We compare the proposed method with advanced sample-selection strategies. The baselines are commonly used in previous work. (1) \textit{Standard supervised training}:
Standard directly minimizes cross-entropy over all noisy training examples. (2) \textit{Gradient-based selection}: CRUST~\cite{mirzasoleiman2020coresets} dynamically selects representative subsets based on relationships among
example gradients within each observed class. (3) \textit{Training-dynamic-based selection}: AUM~\cite{pleiss2020identifying} selects reliable examples
using the average logit margin over training and constructed
threshold examples. L2D~\cite{jia2023learning} uses a pretrained LSTM detector
to estimate noise scores from the training trajectories of
the predicted probabilities assigned to observed labels. DIST~\cite{li2023disc}
updates per-example thresholds using an exponential moving
average of the maximum predicted class probability and retains
examples whose predicted probability for the observed label
exceeds the threshold. Moreover, DIST+CT~\cite{pan2025enhanced} adds examples that pass an
increasing confidence trend test to the set retained by DIST. (4) \textit{Representation-based selection}: FINE~\cite{kim2021fine} selects reliable examples using alignment scores
between example representations and class-wise leading eigenvectors together
with a two-component Gaussian mixture model.

All methods exploit a common multimodal classifier.
By default, retained examples contribute equally to the cross-entropy loss.
FINE uses the fused representation by default.
L2D uses the known noise rate to determine the number of retained examples
whereas DIST uses its range to set hyperparameters.
Our selection procedure does not require the noise rate.

\textbf{Implementation details.} We conducted all experiments using a GPU cluster, which includes NVIDIA TESLA A100 and NVIDIA RTX 4090 GPUs. PyTorch~\cite{paszke2019pytorch} is used. For the six synthetic-noise datasets, we use frozen DINOv2-Base~\cite{oquab2023dinov2}, XLM-R-Base~\cite{conneau2020unsupervised},
WavLM-Base+~\cite{chen2022wavlm}, and VideoMAEv2-Base~\cite{wang2023videomae} encoders. For BSD, we reused the CLAP audio and text representations provided by dataset providers. Besides, each modality representation obtained from an encoder is mapped to
256 dimensions by an independent projector comprising layer normalization,
a linear mapping, and a GELU activation.
The projected representations are concatenated and passed through a residual
fusion module with a hidden dimension of 512 to obtain a 256-dimensional
fused representation, followed by a linear classification head. We train the final classifier using AdamW~\cite{loshchilov2017decoupled}, with an initial
learning rate of 10$^\text{-4}$, weight decay of 10$^\text{-3}$, dropout of 0.3, and
a batch size of 256.
We used cosine learning rate decay updated at each optimization step.
For our method, we fix the regularization parameter $\epsilon$ to 10$^\text{-4}$,
perform 10 warm-up epochs, and then apply REFINE-based sample selection
every five epochs.
We run each method five times with different random seeds and report the mean values and standard deviations of results.

\subsection{RQ1: Overall Effectiveness under Noisy Supervision}
\label{subsec:rq1-effectiveness}

\textbf{Experiments on synthetic noise.}
Table~\ref{tab:b1-main-results} reports classification results on six datasets
under six noise settings. As can be seen, REFINE achieves the highest average accuracy on all datasets and outperforms competing methods in most settings, demonstrating consistent robustness across modality combinations and noise types. The advantage is particularly evident on MIntRec2.0, where existing selection methods do not improve over Standard on average, while REFINE remains effective despite the limited training set. Under P-IDN, REFINE also provides substantially larger gains over Standard than competing selectors, supporting the benefit of exploiting class-dependent information from multiple representation spaces.

\textbf{Experiments on realistic noise.}
Table~\ref{tab:b2-real-world} shows that REFINE reports results on BSD. REFINE achieves the highest mean accuracy. Notably, FINE and REFINE are the only selection methods that outperform Standard, suggesting that representation-based selection is particularly useful when naturally occurring label errors are not reliably characterized by training dynamics alone~\cite{kim2021fine}.

\textbf{Experiments with unfrozen encoders.}
On UPMC-Food101, we encode images and text using pretrained
ResNet-50~\cite{he2016deep} and BERT~\cite{devlin2019bert}, respectively, and jointly
fine-tune the encoders and classifier.
All methods are trained for 30 epochs using AdamW, with learning rates of
10$^\text{-6}$ and 10$^\text{-4}$ for the encoders and classification head,
respectively. Table~\ref{tab:b3-end-to-end} shows that REFINE achieves the
highest accuracy under all four settings. This result demonstrates that
our sample-selection method continues to improve multimodal classification
when the encoders are fine-tuned during training.

\subsection{RQ2: Selection Reliability and Mechanism Analysis}
\label{subsec:rq2-selection-reliability}

\textbf{Sample-selection performance.}
To evaluate how effectively different methods identify correctly labeled
examples, Table~\ref{tab:c1-selection-quality} compares their sample-selection
performance on UPMC-Food101. REFINE achieves
the highest selection F1 score under all four settings. Compared with FINE,
it improves precision in each setting while maintaining high recall, indicating that our method reduces the inclusion of mislabeled examples
while retaining correct supervision.

\begin{table}[t]
  \caption{Sample-selection precision (P), recall (R), and F1 score
  (\%, mean$\pm$std.) with respect to correctly labeled samples on UPMC-Food101. The best F1 is highlighted in \textbf{boldface}.}
  \label{tab:c1-selection-quality}
  \centering
  \scriptsize
  \renewcommand{\arraystretch}{1.10}
  \newcommand{\selres}[2]{#1\,\textpm\,#2}
  \newcommand{\bestselres}[2]{\textbf{#1\,\textpm\,#2}}
  \settowidth{\dimen0}{DIST+CT}
  \settowidth{\dimen2}{\selres{00.00}{0.00}}
  \settowidth{\dimen4}{\bestselres{00.00}{0.00}}
  \setlength{\tabcolsep}{\dimexpr
    (\textwidth-\dimen0-8\dimen2-4\dimen4-\arrayrulewidth)/26\relax}
  \begin{tabular}{c|cccccccccccc}
    \toprule
    \multicolumn{1}{c!{\tableheaderseparator}}{%
      \tableheaderspan{\textbf{Method}}} &
    \multicolumn{3}{c}{\tableheadercell{\textbf{Sym 50\%}}} &
    \multicolumn{3}{c}{\tableheadercell{\textbf{Asym 40\%}}} &
    \multicolumn{3}{c}{\tableheadercell{\textbf{F-IDN 50\%}}} &
    \multicolumn{3}{c}{\tableheadercell{\textbf{P-IDN 50\%}}} \\
    \cmidrule(lr){2-4}\cmidrule(lr){5-7}
    \cmidrule(lr){8-10}\cmidrule(l){11-13}
    & \tableheadercell{\textbf{P}} & \tableheadercell{\textbf{R}} & \tableheadercell{\textbf{F1}}
    & \tableheadercell{\textbf{P}} & \tableheadercell{\textbf{R}} & \tableheadercell{\textbf{F1}}
    & \tableheadercell{\textbf{P}} & \tableheadercell{\textbf{R}} & \tableheadercell{\textbf{F1}}
    & \tableheadercell{\textbf{P}} & \tableheadercell{\textbf{R}} & \tableheadercell{\textbf{F1}} \\
    \midrule
    CRUST
    & \selres{93.76}{0.08} & \selres{93.67}{0.09} & \selres{93.72}{0.08}
    & \selres{94.24}{0.24} & \selres{78.46}{0.20} & \selres{85.63}{0.21}
    & \selres{85.68}{0.24} & \selres{85.61}{0.23} & \selres{85.65}{0.23}
    & \selres{84.85}{0.18} & \selres{84.78}{0.17} & \selres{84.81}{0.17} \\
    AUM
    & \selres{96.40}{0.74} & \selres{90.50}{0.82} & \selres{93.35}{0.10}
    & \selres{60.62}{0.09} & \selres{91.79}{0.29} & \selres{73.01}{0.06}
    & \selres{82.36}{0.70} & \selres{93.67}{0.41} & \selres{87.65}{0.25}
    & \selres{78.39}{0.56} & \selres{93.50}{0.32} & \selres{85.28}{0.20} \\
    L2D
    & \selres{87.64}{0.25} & \selres{87.63}{0.25} & \selres{87.63}{0.25}
    & \selres{66.40}{0.09} & \selres{66.39}{0.09} & \selres{66.40}{0.09}
    & \selres{81.42}{0.09} & \selres{81.42}{0.09} & \selres{81.42}{0.09}
    & \selres{82.40}{0.13} & \selres{82.40}{0.13} & \selres{82.40}{0.13} \\
    DIST
    & \selres{94.80}{0.11} & \selres{95.98}{0.13} & \selres{95.39}{0.07}
    & \selres{93.22}{0.37} & \selres{85.92}{0.30} & \selres{89.42}{0.31}
    & \selres{81.45}{0.17} & \selres{97.04}{0.15} & \selres{88.56}{0.08}
    & \selres{80.66}{0.35} & \selres{95.89}{0.12} & \selres{87.61}{0.18} \\
    DIST+CT
    & \selres{94.80}{0.11} & \selres{95.98}{0.15} & \selres{95.39}{0.07}
    & \selres{93.16}{0.39} & \selres{85.97}{0.34} & \selres{89.42}{0.33}
    & \selres{81.49}{0.15} & \selres{97.02}{0.16} & \selres{88.58}{0.06}
    & \selres{80.74}{0.38} & \selres{95.88}{0.11} & \selres{87.66}{0.21} \\
    FINE
    & \selres{84.61}{0.17} & \selres{97.43}{0.23} & \selres{90.57}{0.11}
    & \selres{63.18}{0.26} & \selres{68.06}{0.23} & \selres{65.53}{0.20}
    & \selres{74.96}{0.86} & \selres{96.11}{0.27} & \selres{84.22}{0.50}
    & \selres{72.24}{1.44} & \selres{94.30}{0.68} & \selres{81.80}{0.81} \\
    \cmidrule{1-13}
    \noalign{\color{gray!15}\hrule height 2.6ex\vskip-2.6ex}
    \textbf{REFINE}
    & \selres{98.87}{0.06}
    & \selres{92.37}{0.18}
    & \bestselres{95.51}{0.09}
    & \selres{93.36}{0.35}
    & \selres{89.05}{0.89}
    & \bestselres{91.15}{0.54}
    & \selres{89.30}{0.14}
    & \selres{93.48}{0.38}
    & \bestselres{91.34}{0.14}
    & \selres{88.08}{0.39}
    & \selres{92.10}{0.47}
    & \bestselres{90.04}{0.09} \\
    \bottomrule
  \end{tabular}%
\end{table}

\textbf{Training dynamics of sample selection.}
We evaluate sample-selection performance at each training epoch under
asymmetric and instance-dependent noise, comparing REFINE with FINE and
DIST. Fig.~\ref{fig:c1-selection-dynamics} shows that REFINE achieves a higher
F1 score than both comparison methods at the first selection after warm-up
and maintains this advantage throughout subsequent training.

\begin{figure}[!tbp]
  \centering
  \begin{minipage}[t]{0.5717\linewidth}
    \vspace{0pt}\centering
    \includegraphics[width=\linewidth]{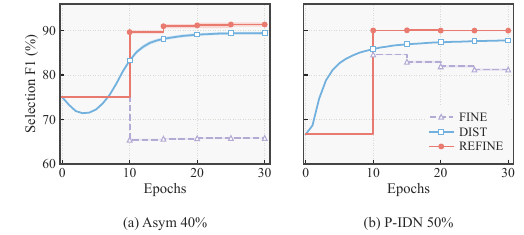}
\caption{Sample-selection F1 scores over training on UPMC-Food101 under
  Asym 40\% and P-IDN 50\%.}
  \label{fig:c1-selection-dynamics}
  \end{minipage}\hfill
  \begin{minipage}[t]{0.3983\linewidth}
    \vspace{0pt}\centering
    \includegraphics[width=\linewidth]{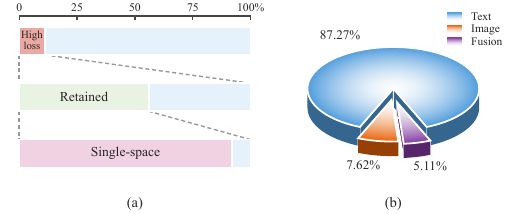}
\caption{The first selection after 10 warm-up epochs on UPMC-Food101
  under P-IDN 50\%. (a) Retention and single-space support of high-loss
  clean examples. (b) Sources of support for retained high-loss clean
  examples supported by exactly one trusted space.}
  \label{fig:c2-high-loss-clean}
  \end{minipage}
\end{figure}

\textbf{Retention of hard clean examples.}
Correctly labeled examples with high losses are commonly regarded as
hard-clean examples and are prone to being excluded by small-loss
selection~\cite{yuan2023late}. On UPMC-Food101 under P-IDN 50\%, we define the
hard-clean subset as the correctly labeled examples whose losses exceed
the median loss over the entire training set.
Fig.~\ref{fig:c2-high-loss-clean}(a) shows that this subset accounts for
11.04\% of all correctly labeled examples. REFINE retains 55.93\% of this
subset, and 91.88\% of the retained examples are supported by exactly one
trusted space. Fig.~\ref{fig:c2-high-loss-clean}(b) further shows that this
single-space support comes primarily from the text and image spaces.
This indicates that correct labels that are difficult for the current
classifier to fit can still be identified using unimodal representations.
Retaining these hard-clean examples preserves more useful supervision,
thereby improving generalization performance~\cite{yuan2023late,xia2023combating}.

\begin{table}[!t]
  \centering
  \begin{minipage}[t]{0.6035\linewidth}
    \vspace{0pt}\centering
  \caption{Component ablation results on three datasets. Except for TSS, all
  variants operate on the fused representation. The best result under
  each setting is highlighted in \textbf{boldface}.}
  \label{tab:c2-component-ablation}
  \label{tab:q3-component-ablation}% Existing cross-reference in the method section.
  \centering
  \fontsize{8}{9.6}\selectfont
  \settowidth{\dimen0}{\textbf{00.00\,\textpm\,0.00}}
  \settowidth{\dimen2}{\textbf{MIntRec2.0}\textbf{FINE}\textbf{LA}\textbf{LC}\textbf{TSS}}
  \settowidth{\dimen4}{\textbf{F-IDN 50\%}}
  \setlength{\tabcolsep}{1.5pt}
  \renewcommand{\arraystretch}{1.10}
  \newcommand{\present}{\ding{51}}
  \newcommand{\absent}{\ding{55}}
  \newcommand{\abres}[2]{#1\,\textpm\,#2}
  \newcommand{\bestabres}[2]{\textbf{#1\,\textpm\,#2}}
  \begin{adjustbox}{max width=\linewidth}
\begin{tabular}{c|cccc|cccc}
    \toprule
    \multicolumn{1}{c!{\tableheaderseparator}}{\tableheaderspan{\textbf{Dataset}}} &
    \multicolumn{4}{c!{\tableheaderseparator}}{\tableheadercell{\textbf{Method}}} &
    \multicolumn{4}{c}{\tableheadercell{\textbf{Noise}}} \\
    \cmidrule(lr){2-5}\cmidrule(l){6-9}
    & \tableheadercell{\textbf{FINE}} & \tableheadercell{\textbf{LA}} & \tableheadercell{\textbf{DE}} & \tableheadercell{\textbf{TSS}}
    & \tableheadercell{\textbf{Sym 50\%}} & \tableheadercell{\textbf{Asym 40\%}}
    & \tableheadercell{\textbf{F-IDN 50\%}} & \tableheadercell{\textbf{P-IDN 50\%}} \\
    \midrule
    \multirow{4}{*}{\textbf{N24News}}
    & \present & \absent  & \absent  & \absent
    & \abres{61.81}{0.40} & \abres{53.09}{0.45}
    & \abres{62.35}{0.15} & \abres{61.40}{0.26} \\
    & \present & \present & \absent  & \absent
    & \abres{63.15}{0.39} & \abres{56.40}{0.77}
    & \abres{62.46}{0.33} & \abres{62.72}{0.77} \\
    & \present & \present & \present & \absent
    & \abres{63.32}{0.42} & \abres{58.45}{0.80}
    & \abres{62.91}{0.31} & \abres{63.04}{0.26} \\
    & \present & \present & \present & \present
    & \bestabres{64.55}{0.53} & \bestabres{60.81}{0.31}
    & \bestabres{63.07}{0.51} & \bestabres{63.28}{0.23} \\
    \midrule
    \multirow{4}{*}{\makecell[c]{\textbf{UPMC-}\\\textbf{Food101}}}
    & \present & \absent  & \absent  & \absent
    & \abres{84.78}{0.25} & \abres{69.08}{0.56}
    & \abres{83.57}{0.15} & \abres{81.75}{0.11} \\
    & \present & \present & \absent  & \absent
    & \abres{84.85}{0.12} & \abres{82.04}{0.15}
    & \abres{83.47}{0.16} & \abres{82.64}{0.13} \\
    & \present & \present & \present & \absent
    & \abres{85.10}{0.12} & \abres{83.23}{0.35}
    & \abres{83.79}{0.52} & \abres{82.89}{0.62} \\
    & \present & \present & \present & \present
    & \bestabres{86.63}{0.22} & \bestabres{85.28}{0.34}
    & \bestabres{85.84}{0.13} & \bestabres{85.12}{0.21} \\
    \midrule
    \multirow{4}{*}{\textbf{MIntRec2.0}}
    & \present & \absent  & \absent  & \absent
    & \abres{29.73}{1.11} & \abres{25.95}{0.70}
    & \abres{33.24}{1.39} & \abres{32.14}{0.60} \\
    & \present & \present & \absent  & \absent
    & \abres{29.83}{0.74} & \abres{26.33}{1.28}
    & \abres{33.13}{1.14} & \abres{32.15}{1.30} \\
    & \present & \present & \present & \absent
    & \abres{30.41}{0.66} & \abres{28.82}{0.38}
    & \abres{33.62}{0.56} & \abres{32.85}{0.94} \\
    & \present & \present & \present & \present
    & \bestabres{31.92}{0.85} & \bestabres{29.61}{1.09}
    & \bestabres{35.70}{0.40} & \bestabres{34.94}{1.81} \\
    \bottomrule
  \end{tabular}
\end{adjustbox}

  \end{minipage}\hfill
  \begin{minipage}[t]{0.3665\linewidth}
    \vspace{7.6pt}
\centering
  \includegraphics[width=\linewidth]{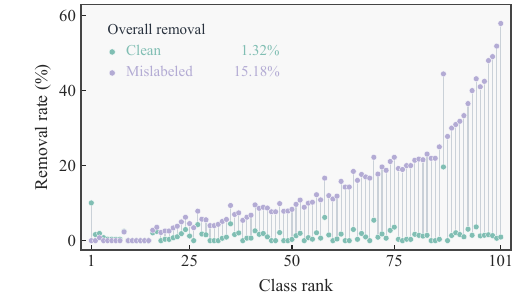}
\captionof{figure}{Removal of clean and mislabeled examples from the set selected
  by ALL through trusted space selection on UPMC-Food101 under P-IDN 50\%.
  The removal rates use the numbers of clean and mislabeled examples
  selected by ALL as their respective denominators. All 101 classes are shown,
  ordered by the difference between the removal rates for mislabeled and
  clean examples.}
  \label{fig:c3-trusted-space-filtering}

  \end{minipage}
\end{table}
\textbf{Component ablation.}
Table~\ref{tab:c2-component-ablation} progressively introduces
LA, DE, and TSS into FINE. The results show clear cumulative gains
overall, with LA and DE providing particularly large improvements
under asymmetric noise. These results support the two central design
choices: cross-class comparison for label agreement and the analysis of the target and background
to estimate discriminative eigenvectors.

\textbf{Trusted space selection.}
Table~\ref{tab:c3-modality-ablation} compares individual spaces,
their unrestricted union (abbreviated as ALL), and TSS. No single representation
space is consistently best across datasets and noise settings,
confirming that useful selection cues are class- and condition-dependent.
Moreover, TSS consistently improves over ALL, showing that exploiting
multimodal complementarity requires excluding unreliable spaces rather
than simply aggregating all available evidence. Fig.~\ref{fig:c3-trusted-space-filtering}
further shows that, relative to ALL, TSS removes 15.18\% of selected noisy
examples while discarding only 1.32\% of selected clean examples on UPMC-Food101
under P-IDN 50\%. Fig.~\ref{fig:c4-trusted-space-combinations} shows that
the most discriminative space combination also varies across classes,
further supporting class-wise trusted-space selection.

\begin{table}[!tbp]
  \centering
  \sbox{\layoutleftbox}{\begin{minipage}[t]{0.52\linewidth}
    \vspace{0pt}\caption{Representation-space comparison results on three datasets. Text,
  Image, Video, and Audio denote DE-based label agreement applied to the
  projected representation of the corresponding modality, whereas Fusion
  applies it to the fused representation. ALL independently selects examples in all available fused and
  unimodal spaces and takes their union, whereas TSS uses trusted space
  selection. The best result under each setting is highlighted in \textbf{boldface}.}
  \label{tab:c3-modality-ablation}
  \label{tab:b6-modality-ablation}% Preserve existing references.
  \centering
  \fontsize{8}{9.6}\selectfont
  \settowidth{\dimen0}{\textbf{00.00\,\textpm\,0.00}}
  \settowidth{\dimen2}{\textbf{MIntRec2.0}\textbf{Modality}}
  \settowidth{\dimen4}{\textbf{F-IDN 50\%}}
  \setlength{\tabcolsep}{3pt}
  \renewcommand{\arraystretch}{1.10}
  \newcommand{\modres}[2]{#1\,\textpm\,#2}
  \newcommand{\bestmodres}[2]{\textbf{#1\,\textpm\,#2}}
  \begin{adjustbox}{max width=\linewidth}
\begin{tabular}{c|c|cccc}
    \toprule
    \multicolumn{1}{c!{\tableheaderseparator}}{\tableheaderspan{\textbf{Dataset}}} &
    \multicolumn{1}{c!{\tableheaderseparator}}{\tableheaderspan{\textbf{Space}}} &
    \multicolumn{4}{c}{\tableheadercell{\textbf{Noise}}} \\
    \cmidrule(l){3-6}
    & & \tableheadercell{\textbf{Sym 50\%}} & \tableheadercell{\textbf{Asym 40\%}}
    & \tableheadercell{\textbf{F-IDN 50\%}} & \tableheadercell{\textbf{P-IDN 50\%}} \\
    \midrule
    \multirow{5}{*}{\textbf{N24News}}
    & Text   & \modres{63.02}{0.48} & \modres{60.21}{0.77}
             & \modres{62.17}{0.29} & \modres{61.51}{0.30} \\
    & Image  & \modres{61.08}{0.35} & \modres{50.11}{0.82}
             & \modres{62.17}{0.29} & \modres{61.51}{0.30} \\
    & Fusion & \modres{63.32}{0.42} & \modres{58.45}{0.80}
             & \modres{62.91}{0.31} & \modres{63.04}{0.26} \\
    & ALL    & \modres{63.77}{0.29} & \modres{59.48}{0.70}
             & \modres{62.51}{0.56} & \modres{62.23}{0.30} \\
    & TSS    & \bestmodres{64.55}{0.53} & \bestmodres{60.81}{0.31}
             & \bestmodres{63.07}{0.51} & \bestmodres{63.28}{0.23} \\
    \midrule
    \multirow{5}{*}{\makecell[c]{\textbf{UPMC-}\\\textbf{Food101}}}
    & Text   & \modres{84.30}{0.52} & \modres{82.53}{0.68}
             & \modres{83.79}{0.31} & \modres{83.42}{0.38} \\
    & Image  & \modres{80.63}{0.24} & \modres{74.61}{3.64}
             & \modres{81.58}{0.09} & \modres{80.56}{0.27} \\
    & Fusion & \modres{85.10}{0.12} & \modres{83.23}{0.35}
             & \modres{83.79}{0.52} & \modres{82.89}{0.62} \\
    & ALL    & \modres{86.60}{0.29} & \modres{85.17}{0.26}
             & \modres{85.66}{0.16} & \modres{84.63}{0.06} \\
    & TSS    & \bestmodres{86.63}{0.22} & \bestmodres{85.28}{0.34}
             & \bestmodres{85.84}{0.13} & \bestmodres{85.12}{0.21} \\
    \midrule
    \multirow{6}{*}{\textbf{MIntRec2.0}}
    & Text   & \modres{30.93}{1.51} & \modres{27.98}{1.40}
             & \modres{33.66}{1.42} & \modres{34.41}{2.09} \\
    & Video  & \modres{26.70}{0.83} & \modres{26.32}{0.43}
             & \modres{31.57}{0.70} & \modres{31.85}{1.30} \\
    & Audio  & \modres{26.34}{1.20} & \modres{26.65}{0.95}
             & \modres{31.57}{0.70} & \modres{31.85}{1.30} \\
    & Fusion & \modres{30.41}{0.66} & \modres{28.82}{0.38}
             & \modres{33.62}{0.56} & \modres{32.85}{0.94} \\
    & ALL    & \modres{31.12}{1.31} & \modres{29.40}{0.56}
             & \modres{35.14}{0.33} & \modres{33.24}{0.97} \\
    & TSS    & \bestmodres{31.92}{0.85} & \bestmodres{29.61}{1.09}
             & \bestmodres{35.70}{0.40} & \bestmodres{34.94}{1.81} \\
    \bottomrule
  \end{tabular}
\end{adjustbox}

  \end{minipage}}
  \sbox{\layoutrightbox}{\begin{minipage}[t]{0.46\linewidth}
    \vspace{5pt}
\centering
  \includegraphics[width=\linewidth]{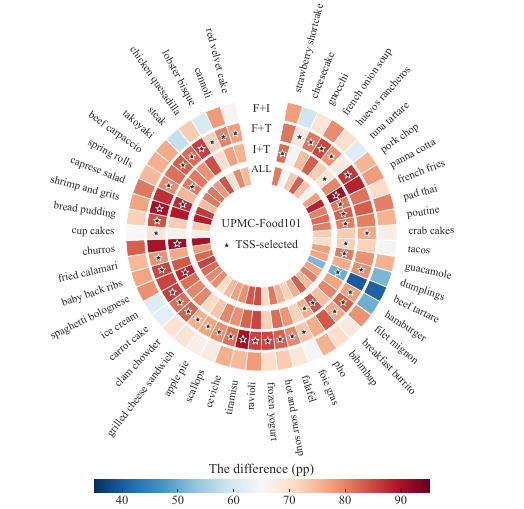}
\captionof{figure}{Results for 50 classes on UPMC-Food101 under P-IDN 50\%.
  From outermost to innermost, the rings correspond to F+I, F+T, I+T, and
  ALL. Color indicates the difference between the retention rates
  for clean and mislabeled examples, in percentage points. Stars mark the
  combinations selected by TSS. F, I, and T denote the fused, image, and
  text spaces, respectively.}
  \label{fig:c4-trusted-space-combinations}

  \end{minipage}}
  \usebox{\layoutleftbox}\hfill\usebox{\layoutrightbox}
\end{table}

\begin{figure}[!t]
  \centering
  \includegraphics[width=\textwidth]{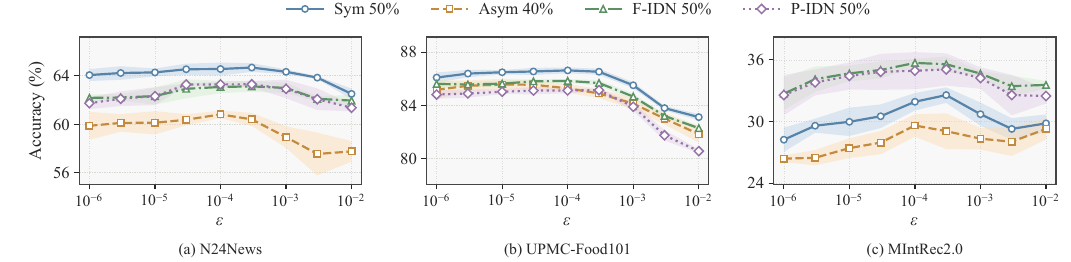}
  \caption{Sensitivity of classification accuracy (\%) to the ridge
  parameter $\epsilon$ on three datasets.}
  \label{fig:c5-rho-sensitivity}
\end{figure}

\textbf{Sensitivity analysis.}
We also analyze the sensitivity of our method to the hyperparameter $\epsilon$.
Fig.~\ref{fig:c5-rho-sensitivity} shows classification performance as $\epsilon$
varies from 10$^\text{-6}$ to 10$^\text{-2}$. In each setting, performance varies little. However, an excessively small $\epsilon$ provides
insufficient regularization of the background matrix, whereas
an excessively large $\epsilon$ increases the influence of the isotropic term,
weakening the constraint imposed by the background on class
directions.

\subsection{RQ3: Generality across LNL Training Paradigms}
\label{subsec:rq3-generality}

%\begin{wrapfigure}{r}{0.5\textwidth}
  %\centering
  %\includegraphics[width=\linewidth]{figures/d1_coteaching.pdf}
%\caption{Classification accuracy (\%) of Co-teaching and Co-teaching combined
  %with REFINE (\textit{i.e.}, R-Coteaching) on UPMC-Food101 under four noise settings.}
  %\label{fig:d1-coteaching}
%\end{wrapfigure}
\begin{figure}
    \centering
    \includegraphics[width=0.6\linewidth]{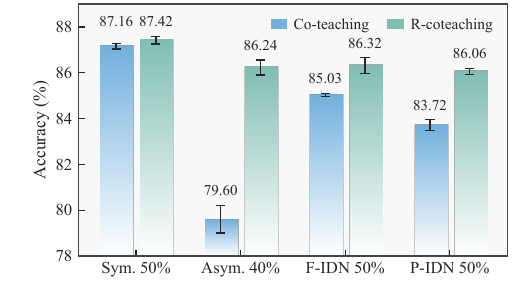}
    \caption{Classification accuracy (\%) of Co-teaching and Co-teaching combined
  with REFINE (\textit{i.e.}, R-Coteaching) on UPMC-Food101 under four noise settings.}
    \label{fig:d1-coteaching}
\end{figure}

\textbf{Extension to Co-teaching.}
We replace the small-loss selection rule in Co-teaching~\cite{han2018co}
with REFINE while retaining its cross-update mechanism: one model selects
training examples to update the other model.
Fig.~\ref{fig:d1-coteaching} shows that this replacement improves
classification accuracy under all settings.

\textbf{Extension to semi-supervised learning.}
Sample selection can also partition training data into labeled and
unlabeled subsets for semi-supervised learning. We integrate our algorithm
into FreeMatch~\cite{wang2022freematch} and DSS+~\cite{Pan_2026_CVPR}, respectively, and
compare with existing semi-supervised methods~\cite{li2020dividemix,tarvainen2017mean,verma2022interpolation,cheng2025cgmatch}.
Table~\ref{tab:d1-ssl-compatibility} shows that both combinations achieve
higher accuracy than their respective original methods, and R-DSS+ achieves
the highest results under all settings. These results demonstrate that
REFINE's selections can support subsequent semi-supervised training.

\begin{table}[!t]
  \centering
  
  \captionsetup{skip=4pt}
  \caption{Test accuracy (\%, mean$\pm$std.) of REFINE combined with
  FreeMatch (R-FreeMatch) and DSS+ (R-DSS+), compared with existing
  semi-supervised methods on UPMC-Food101 under different noise types and
  rates.
  The best and
  second-best results are highlighted in \textbf{boldface} and
  \underline{underlined}, respectively.}
  \label{tab:d1-ssl-compatibility}
  \centering
  \begin{minipage}{0.76\linewidth}
  \centering
  \footnotesize
  \renewcommand{\arraystretch}{1.08}
  \newcommand{\sslres}[2]{#1\,\textpm\,#2}
  \newcommand{\bestsslres}[2]{\textbf{#1\,\textpm\,#2}}
  \newcommand{\secondsslres}[2]{\underline{#1\,\textpm\,#2}}
  \settowidth{\dimen0}{Mean TeacherBest}
  \settowidth{\dimen2}{\bestsslres{00.00}{0.00}}
  \settowidth{\dimen4}{\textbf{F-IDN 50\%}}
  \setlength{\tabcolsep}{\dimexpr
    (\columnwidth-\dimen0-2\dimen2-2\dimen4-\arrayrulewidth-1pt)/12\relax}
  \begin{tabular}{cc|cccc}
    \toprule
    \multicolumn{2}{c!{\tableheaderseparator}}{\tableheaderspan{\textbf{Method}}} &
    \multicolumn{4}{c}{\tableheadercell{\textbf{Noise}}} \\
    \cmidrule(l){3-6}
    & & \tableheadercell{\textbf{Sym 50\%}} & \tableheadercell{\textbf{Asym 40\%}}
     & \tableheadercell{\textbf{F-IDN 50\%}} & \tableheadercell{\textbf{P-IDN 50\%}} \\
    \midrule
    \multirow{2}{*}{DivideMix}
    & Best & \sslres{86.98}{0.16} & \sslres{75.36}{0.46}
           & \sslres{86.48}{0.07} & \sslres{86.41}{0.12} \\
    & Last & \sslres{86.97}{0.17} & \sslres{72.30}{0.18}
           & \sslres{86.44}{0.06} & \sslres{86.39}{0.10} \\
    \multirow{2}{*}{Mean Teacher}
    & Best & \sslres{87.23}{0.23} & \sslres{75.26}{0.31}
           & \sslres{86.49}{0.11} & \sslres{86.36}{0.19} \\
    & Last & \sslres{87.23}{0.23} & \sslres{71.32}{0.41}
           & \sslres{86.49}{0.11} & \sslres{86.36}{0.19} \\
    \multirow{2}{*}{ICT}
    & Best & \sslres{87.09}{0.21} & \sslres{75.30}{0.37}
           & \sslres{86.41}{0.11} & \sslres{86.21}{0.13} \\
    & Last & \sslres{87.09}{0.21} & \sslres{71.47}{0.25}
           & \sslres{86.39}{0.11} & \sslres{86.21}{0.12} \\
    \multirow{2}{*}{CGMatch}
    & Best & \sslres{88.32}{0.19} & \sslres{74.19}{0.47}
           & \sslres{87.77}{0.09} & \sslres{87.58}{0.09} \\
    & Last & \sslres{88.32}{0.19} & \sslres{67.09}{0.60}
           & \sslres{87.77}{0.09} & \sslres{87.58}{0.09} \\
    \multirow{2}{*}{FreeMatch}
    & Best & \sslres{88.37}{0.21} & \sslres{73.22}{0.19}
           & \sslres{87.72}{0.13} & \sslres{87.59}{0.09} \\
    & Last & \sslres{88.37}{0.21} & \sslres{66.66}{0.68}
           & \sslres{87.71}{0.13} & \sslres{87.58}{0.09} \\
    \multirow{2}{*}{DSS+}
    & Best & \secondsslres{89.32}{0.09} & \secondsslres{87.75}{0.86}
           & \secondsslres{89.12}{0.07} & \secondsslres{89.15}{0.04} \\
    & Last & \secondsslres{89.32}{0.09} & \secondsslres{87.70}{0.91}
           & \secondsslres{89.09}{0.07} & \secondsslres{89.13}{0.04} \\
    \midrule
    \multirow{2}{*}{\textbf{R-FreeMatch}}
    & Best & \sslres{89.07}{0.10} & \sslres{87.08}{0.95}
           & \sslres{88.25}{0.08} & \sslres{87.93}{0.06} \\
    & Last & \sslres{89.06}{0.10} & \sslres{86.97}{0.90}
           & \sslres{88.23}{0.08} & \sslres{87.91}{0.09} \\
    \multirow{2}{*}{\textbf{R-DSS+}}
    & Best & \bestsslres{89.69}{0.06} & \bestsslres{89.53}{0.08}
           & \bestsslres{89.30}{0.11} & \bestsslres{89.27}{0.13} \\
    & Last & \bestsslres{89.68}{0.05} & \bestsslres{89.50}{0.09}
           & \bestsslres{89.29}{0.11} & \bestsslres{89.24}{0.14} \\
    \bottomrule
  \end{tabular}%
  \end{minipage}
\end{table}

\textbf{Combination with noise-tolerance loss functions.}
Noise-robust losses mitigate the influence of incorrect labels on training by
adjusting the optimization objective. We combine REFINE with different
noise-robust losses~\cite{zhang2018generalized,wang2019symmetric,wang2025joint}
and use CE as a reference. Fig.~\ref{fig:d2-robust-losses} shows
that incorporating REFINE increases the accuracy for every training
objective, demonstrating that our selection method can accommodate
different training objectives. Further comparison shows that combining
REFINE with GCE or JAL-CE yields slightly
higher accuracy than REFINE+CE under symmetric noise, but lower accuracy
under asymmetric and both types of instance-dependent noise.
We believe this is because the training loss alters the representations
used for selection, which in turn affects the degree of support for correct and
incorrect labels and changes how robust optimization and sample selection
work together. How training objectives affect representation
discriminability and sample-selection quality under different noise
conditions warrants further investigation.

\begin{figure}[!tbp]
  \centering
  \includegraphics[width=0.22\linewidth,trim=90bp 220.740158bp 53.058533bp 0bp,clip]{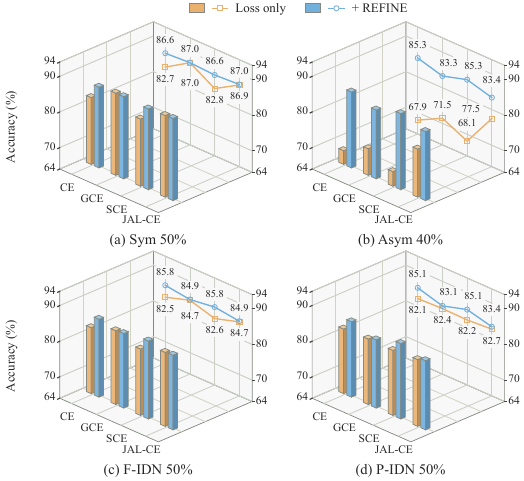}\\[2pt]
  \includegraphics[height=0.212\linewidth,trim=0bp 110.740158bp 121.058533bp 10bp,clip]{figures/d2_robust_losses.pdf}\hfill
  \includegraphics[height=0.212\linewidth,trim=130bp 110.740158bp 0bp 10bp,clip]{figures/d2_robust_losses.pdf}\hfill
  \includegraphics[height=0.212\linewidth,trim=0bp 0.700158bp 121.058533bp 120.04bp,clip]{figures/d2_robust_losses.pdf}\hfill
  \includegraphics[height=0.212\linewidth,trim=130bp 0.700158bp 0bp 120.04bp,clip]{figures/d2_robust_losses.pdf}
\caption{Classification accuracy (\%) of different training losses with and
  without REFINE on UPMC-Food101 under four noise settings.}
  \label{fig:d2-robust-losses}
\end{figure}

\FloatBarrier
\section{Conclusion}
\label{sec:conclusion}

This work studies sample selection for multimodal learning under noisy supervision,
focusing on which representation spaces to trust for identifying reliable training data.
We show theoretically that fused and unimodal representations can differ in their clean-noisy
separation ability across classes. Based on this insight, we develop REFINE,
which constructs discriminative eigenvectors through analysis of the target and background,
selects trusted representation spaces in a class-wise manner,
and identifies reliable examples through label agreement.
Extensive experiments across diverse datasets, noise settings,
and training paradigms demonstrate consistent improvements
in both sample-selection quality and classification performance.
Overall, our results highlight the importance of explicitly determining what to trust and where to trust it in multimodal learning with noisy labels. Future work will extend this perspective toward more general forms of imperfect multimodal supervision, including partially corrupted, missing, and inconsistent modality-level signals. More broadly, we aim to develop a unified framework for trustworthy multimodal learning that jointly determines which examples, representation spaces, and supervision signals to be trusted under imperfect supervision.

\clearpage
\onecolumn

\renewcommand{\cftsecfont}{\normalsize}
\renewcommand{\cftsubsecfont}{\normalsize}
\renewcommand{\cftbeforesecskip}{12pt}
\renewcommand{\cftbeforesubsecskip}{12pt}

\renewcommand{\contentsname}{Appendix}
\addtocontents{toc}{\protect\setcounter{tocdepth}{2}}

  \appendix
  \tableofcontents

    \clearpage

\renewcommand{\thesubsection}{\thesection.\arabic{subsection}}

\section{Theoretical Analysis}
\label{app:theory}

\subsection{Proof of Theorem~\ref{thm:fine-alignment-score-difference}}
\label{app:proof-fine-gap}

\textit{Recall.}
Under the model in Section~\ref{subsec:problem-formulation}, the
alignment-score difference produced by FINE in space $r\in\mathcal R$ is
\begin{equation}
\Delta_{\mathrm{FINE}}^r
:=\Delta^r(\bm{u}_{\mathrm{FINE}}^r)
=\frac{(1-\tau)\sin^2\theta^r}
{\sqrt{(1-\tau)^2+4\tau\cos^2\theta^r}},
\label{eq:app-fine-gap}
\end{equation}
where $\tau$ is the ratio of mislabeled to clean examples, \textit{i.e.},
$\frac{\mathbb P(y=-1\mid\widetilde y=+1)}
{\mathbb P(y=+1\mid\widetilde y=+1)}$.
For any fixed $\tau\in[0,1)$, $\Delta_{\mathrm{FINE}}^r$ is
strictly increasing in $\theta^r\in[0,\pi/2]$. For any fixed
$\theta^r\in(0,\pi/2)$, it is strictly decreasing in $\tau\in[0,1)$.
In particular, all representation spaces share the same $\tau$, so for
any $r_1,r_2\in\mathcal R$,
\begin{equation}
\Delta_{\mathrm{FINE}}^{r_1}>\Delta_{\mathrm{FINE}}^{r_2}
\quad\Longleftrightarrow\quad
\theta^{r_1}>\theta^{r_2}.
\label{eq:app-fine-space-ordering}
\end{equation}

\begin{proof}
Within the observed class $\widetilde y=+1$, the probabilities of
correctly labeled and mislabeled examples are $1/(1+\tau)$ and
$\tau/(1+\tau)$, respectively. The class matrix in space $r$
is therefore
\begin{equation}
\begin{aligned}
\bm{A}_+^r
&=\frac{
\mathbb E[\bm{z}^r(\bm{z}^r)^\top\mid y=+1]
+\tau\mathbb E[\bm{z}^r(\bm{z}^r)^\top\mid y=-1]}{1+\tau}\\
&=\sigma^2 \bm{I}+
\frac{\bm{v}^r(\bm{v}^r)^\top+\tau \bm{w}^r(\bm{w}^r)^\top}{1+\tau}.
\end{aligned}
\label{eq:app-target-second-moment}
\end{equation}

Its leading eigenvector is the same as that of
$\bm{v}^r(\bm{v}^r)^\top+\tau \bm{w}^r(\bm{w}^r)^\top$. Choose the signs of the directions
so that their inner product is nonnegative. In orthonormal coordinates
on the plane spanned by the mean directions, $\bm{w}^r$ and $\bm{v}^r$ have
coordinates $(1,0)^\top$ and $(\cos\theta^r,\sin\theta^r)^\top$,
respectively. The matrix restricted to this plane is
\begin{equation}
\begin{bmatrix}
\cos^2\theta^r+\tau&\sin\theta^r\cos\theta^r\\
\sin\theta^r\cos\theta^r&\sin^2\theta^r
\end{bmatrix}.
\label{eq:app-fine-planar-matrix}
\end{equation}

The leading eigenvector $\bm{u}_{\mathrm{FINE}}^r$ has coordinates
$(\cos\varphi^r,\sin\varphi^r)^\top$ on this plane. Its Rayleigh
quotient can be written as
\begin{equation}
\begin{aligned}
(\bm{u}_{\mathrm{FINE}}^r)^\top
\bigl[\bm{v}^r(\bm{v}^r)^\top+\tau \bm{w}^r(\bm{w}^r)^\top\bigr]
\bm{u}_{\mathrm{FINE}}^r&=\cos^2(\theta^r-\varphi^r)+\tau\cos^2\varphi^r\\
&=\frac{1+\tau}{2}
+\frac{(\tau+\cos2\theta^r)\cos2\varphi^r
+\sin2\theta^r\sin2\varphi^r}{2}.
\end{aligned}
\label{eq:app-fine-rayleigh}
\end{equation}

Since $(\cos2\varphi^r,\sin2\varphi^r)^\top$ is a unit vector,
the maximum is attained when it points in the same direction as
$(\tau+\cos2\theta^r,\sin2\theta^r)^\top$. Moreover,
\begin{equation}
(\tau+\cos2\theta^r)^2+\sin^2(2\theta^r)
=(1-\tau)^2+4\tau\cos^2\theta^r,
\label{eq:app-fine-double-angle-normalizer}
\end{equation}
and hence
\begin{equation}
\cos2\varphi^r=\frac{\tau+\cos2\theta^r}{\sqrt{(1-\tau)^2+4\tau\cos^2\theta^r}},
\qquad
\sin2\varphi^r=\frac{\sin2\theta^r}{\sqrt{(1-\tau)^2+4\tau\cos^2\theta^r}}.
\label{eq:app-fine-eigenvector-angle}
\end{equation}

By the definition of the alignment-score difference,
\begin{equation}
\begin{aligned}
\Delta_{\mathrm{FINE}}^r
&=\bigl[(\bm{u}_{\mathrm{FINE}}^r)^\top \bm{v}^r\bigr]^2
-\bigl[(\bm{u}_{\mathrm{FINE}}^r)^\top \bm{w}^r\bigr]^2\\
&=\frac{(1-\tau)\sin^2\theta^r}{\sqrt{(1-\tau)^2+4\tau\cos^2\theta^r}}.
\end{aligned}
\label{eq:app-fine-gap-derivation}
\end{equation}

Differentiating the closed-form expression gives
\begin{equation}
\begin{aligned}
\frac{\partial\Delta_{\mathrm{FINE}}^r}{\partial\theta^r}
&=\frac{2(1-\tau)(1+\tau^2+2\tau\cos^2\theta^r)
\sin\theta^r\cos\theta^r}{[(1-\tau)^2+4\tau\cos^2\theta^r]^{3/2}}>0,\\
\frac{\partial\Delta_{\mathrm{FINE}}^r}{\partial\tau}
&=-\frac{2(1+\tau)\sin^2\theta^r\cos^2\theta^r}{[(1-\tau)^2+4\tau\cos^2\theta^r]^{3/2}}<0,
\qquad 0<\theta^r<\pi/2.
\end{aligned}
\label{eq:app-fine-gap-derivatives}
\end{equation}
Continuity of the closed-form expression extends the angular
monotonicity to the entire interval. All spaces use the same observed
class and therefore share $\tau$, which establishes the ordering
across spaces.
\end{proof}

\subsection{Proof of Theorem~\ref{thm:de-gap}}
\label{app:proof-de-gap}

\textit{Recall.}
Under the model in Section~\ref{subsec:problem-formulation}, if
$\epsilon=0$, then
\begin{equation}
\Delta_{\mathrm{DE}}^r
:=\Delta^r(\bm{u}_{\mathrm{DE}}^r)
=\sin\theta^r
\sqrt{1-\frac{\cos^2\theta^r}{(1+2\sigma^2)^2}}
\geq\Delta_{\mathrm{FINE}}^r.
\label{eq:app-de-gap}
\end{equation}

The inequality is strict when $0<\theta^r<\pi/2$, and
$\Delta_{\mathrm{DE}}^r-\Delta_{\mathrm{FINE}}^r$ strictly increases
with $\tau$. For fixed $\sigma>0$, $\Delta_{\mathrm{DE}}^r$ is
strictly increasing in $\theta^r\in[0,\pi/2]$.

\begin{proof}
Let $\tau_-\in[0,1)$ be the ratio of mislabeled to correctly labeled
examples in the observed class $\widetilde y=-1$. In space $r$,
choose an orthonormal basis along the angle bisectors of the mean
directions such that
\begin{equation}
\bm{v}^r=\cos\frac{\theta^r}{2}\,\bm{e}_1^r
+\sin\frac{\theta^r}{2}\,\bm{e}_2^r,
\qquad \bm{w}^r=\cos\frac{\theta^r}{2}\,\bm{e}_1^r
-\sin\frac{\theta^r}{2}\,\bm{e}_2^r.
\label{eq:app-mean-bisector-coordinates}
\end{equation}

The second-moment matrices of the two observed classes,
restricted to this plane, are
\begin{equation}
\begin{aligned}
A_+^r&=
\begin{bmatrix}
\sigma^2+\cos^2(\theta^r/2)&
\dfrac{1-\tau}{2(1+\tau)}\sin\theta^r\\
\dfrac{1-\tau}{2(1+\tau)}\sin\theta^r&
\sigma^2+\sin^2(\theta^r/2)
\end{bmatrix},\\
A_-^r&=
\begin{bmatrix}
\sigma^2+\cos^2(\theta^r/2)&
-\dfrac{1-\tau_-}{2(1+\tau_-)}\sin\theta^r\\
-\dfrac{1-\tau_-}{2(1+\tau_-)}\sin\theta^r&
\sigma^2+\sin^2(\theta^r/2)
\end{bmatrix}.
\end{aligned}
\label{eq:app-observed-planar-moments}
\end{equation}

DE is obtained by solving for the eigenvector associated with
the largest generalized eigenvalue,
$\bm{A}_+^r \bm{u}_{\mathrm{DE}}^r=\lambda_+^r \bm{A}_-^r \bm{u}_{\mathrm{DE}}^r$.
Expanding this equation along the two basis directions gives
\begin{equation}
\begin{aligned}
&(1-\lambda_+^r)
[\sigma^2+\cos^2(\theta^r/2)]
[(\bm{e}_1^r)^\top \bm{u}_{\mathrm{DE}}^r]+\left(\frac{1-\tau}{1+\tau}
+\lambda_+^r\frac{1-\tau_-}{1+\tau_-}\right)
\frac{\sin\theta^r}{2}
[(\bm{e}_2^r)^\top \bm{u}_{\mathrm{DE}}^r]=0,\\
&\left(\frac{1-\tau}{1+\tau}
+\lambda_+^r\frac{1-\tau_-}{1+\tau_-}\right)
\frac{\sin\theta^r}{2}
[(\bm{e}_1^r)^\top \bm{u}_{\mathrm{DE}}^r]+(1-\lambda_+^r)
[\sigma^2+\sin^2(\theta^r/2)]
[(\bm{e}_2^r)^\top \bm{u}_{\mathrm{DE}}^r]=0.
\end{aligned}
\label{eq:app-de-coordinate-equations}
\end{equation}

Multiply the first equation by
$[\sigma^2+\sin^2(\theta^r/2)][(\bm{e}_2^r)^\top \bm{u}_{\mathrm{DE}}^r]$
and the second by
$[\sigma^2+\cos^2(\theta^r/2)][(\bm{e}_1^r)^\top \bm{u}_{\mathrm{DE}}^r]$,
and then subtract. Since $\bm{A}_\pm^r\succ0$, we have
$\lambda_+^r>0$. Together with $\tau,\tau_-<1$, this yields

\begin{equation}
\bigl[\sigma^2+\sin^2(\theta^r/2)\bigr]
\bigl[(\bm{e}_2^r)^\top \bm{u}_{\mathrm{DE}}^r\bigr]^2
=
\bigl[\sigma^2+\cos^2(\theta^r/2)\bigr]
\bigl[(\bm{e}_1^r)^\top \bm{u}_{\mathrm{DE}}^r\bigr]^2.
\label{eq:app-de-coordinate-relation}
\end{equation}

Choose the maximizing direction whose coordinates have the same
sign. Normalizing it to unit length gives

\begin{equation}
\bm{u}_{\mathrm{DE}}^r
=\frac{
\sqrt{\sigma^2+\sin^2(\theta^r/2)}\,\bm{e}_1^r
+\sqrt{\sigma^2+\cos^2(\theta^r/2)}\,\bm{e}_2^r}
{\sqrt{1+2\sigma^2}}.
\label{eq:app-de-direction}
\end{equation}

Substituting into the definition of the alignment-score
difference gives

\begin{equation}
\begin{aligned}
\Delta_{\mathrm{DE}}^r
&=\frac{2\sin\theta^r
\sqrt{\bigl[\sigma^2+\sin^2(\theta^r/2)\bigr]
\bigl[\sigma^2+\cos^2(\theta^r/2)\bigr]}}
{1+2\sigma^2}\\
&=\sin\theta^r
\sqrt{1-\frac{\cos^2\theta^r}{(1+2\sigma^2)^2}}.
\end{aligned}
\label{eq:app-de-gap-derivation}
\end{equation}

Moreover,

\begin{equation}
(\Delta_{\mathrm{DE}}^r)^2-\sin^4\theta^r
=\sin^2\theta^r\cos^2\theta^r
\left[1-(1+2\sigma^2)^{-2}\right]\geq0,
\label{eq:app-de-squared-gap-comparison}
\end{equation}

which, together with (\ref{eq:app-fine-gap}), implies
\nopagebreak[4]

\begin{equation}
\Delta_{\mathrm{DE}}^r\geq\sin^2\theta^r
\geq\frac{(1-\tau)\sin^2\theta^r}
{\sqrt{(1-\tau)^2+4\tau\cos^2\theta^r}}
=\Delta_{\mathrm{FINE}}^r.
\label{eq:app-de-fine-comparison}
\end{equation}

Finally,

\begin{equation}
\frac{\partial(\Delta_{\mathrm{DE}}^r)^2}{\partial\theta^r}
=2\sin\theta^r\cos\theta^r
\left[1-\frac{\cos2\theta^r}{(1+2\sigma^2)^2}\right]>0,
\qquad 0<\theta^r<\pi/2.
\label{eq:app-de-gap-derivative}
\end{equation}

Thus, $\Delta_{\mathrm{DE}}^r$ strictly increases with the class
angle. Together with (\ref{eq:app-fine-gap-derivatives}), this gives
$\partial_\tau(\Delta_{\mathrm{DE}}^r-\Delta_{\mathrm{FINE}}^r)
=-\partial_\tau\Delta_{\mathrm{FINE}}^r>0$.
\end{proof}

\subsection{Proof of Proposition~\ref{prop:background-weight-ordering}}
\label{app:proof-background-weights}

\textit{Recall.}
Fix target class $k$ and space $r$. Suppose that each non-target observed
class satisfies
$\bm A_j^r=\bm v_j^r(\bm v_j^r)^\top+\sigma^2\bm I$, where
$\lVert\bm v_j^r\rVert_2=1$ and $\sigma>0$. Let
$\phi_{kj}^r=\arccos\bigl|(\bm u_{k,\mathrm{FINE}}^r)^\top
\bm v_j^r\bigr|$. For any $j,\ell\neq k$,
\begin{equation}
    d_{kj}^r=\sigma^2+\cos^2\phi_{kj}^r,
    \qquad \phi_{kj}^r<\phi_{k\ell}^r
    \iff\gamma_{kj}^r>\gamma_{k\ell}^r.
    \label{eq:background-weight-ordering-appendix}
\end{equation}

\begin{proof}
Since $\lVert\bm{u}_{k,\mathrm{FINE}}^r\rVert_2=1$,

\begin{equation}
\begin{aligned}
d_{kj}^r
&=(\bm{u}_{k,\mathrm{FINE}}^r)^\top \bm{A}_j^r \bm{u}_{k,\mathrm{FINE}}^r\\
&=\sigma^2+\bigl[(\bm{u}_{k,\mathrm{FINE}}^r)^\top \bm{v}_j^r\bigr]^2\\
&=\sigma^2+\cos^2\phi_{kj}^r.
\end{aligned}
\label{eq:app-background-score-expansion}
\end{equation}

$\cos^2\phi_{kj}^r$ strictly decreases with
$\phi_{kj}^r\in[0,\pi/2]$, and all weights share the positive
denominator $\sum_{j\neq k}d_{kj}^r\geq(K-1)\sigma^2>0$.
Therefore, $\phi_{kj}^r<\phi_{k\ell}^r$ is equivalent to
$d_{kj}^r>d_{k\ell}^r$, and hence to
$\gamma_{kj}^r>\gamma_{k\ell}^r$.
\end{proof}

\subsection{Proof of Theorem~\ref{thm:two-dimensional-de-ordering}}
\label{app:proof-space-ordering}

\textit{Recall.}
Under the model in Section~\ref{subsec:problem-formulation}, if
$\epsilon=0$, then
\begin{equation}
|a_k^r|>|a_k^s|
\quad\Longleftrightarrow\quad
\Delta_{\mathrm{DE}}^r>\Delta_{\mathrm{DE}}^s
\quad\Longleftrightarrow\quad
\theta^r>\theta^s.
\label{eq:app-space-ordering}
\end{equation}

\begin{proof}
Without loss of generality, assume $\theta^r>\theta^s$. For
$q\in\{r,s\}$, define the projection onto the DE direction used in the
main text and its second moments under the two true classes as
\begin{equation}
p_k^q=(\bm{u}_{k,\mathrm{DE}}^q)^\top \bm{z}^q,
\qquad
P_k^q=\mathbb E[(p_k^q)^2\mid y=+1],
\quad Q_k^q=\mathbb E[(p_k^q)^2\mid y=-1].
\label{eq:app-projection-moment-definitions}
\end{equation}

Using (\ref{eq:app-de-direction}) and $P_k^q-Q_k^q=\Delta_{\mathrm{DE}}^q$, we obtain
\begin{equation}
\begin{aligned}
P_k^q+Q_k^q
&=2\sigma^2+
[(\bm{u}_{k,\mathrm{DE}}^q)^\top \bm{v}^q]^2+
[(\bm{u}_{k,\mathrm{DE}}^q)^\top \bm{w}^q]^2\\
&=\frac{\sin^2\theta^q+4\sigma^2(1+\sigma^2)}{1+2\sigma^2},\\
Q_k^q
&=\frac{\sin^2\theta^q+4\sigma^2(1+\sigma^2)}{2(1+2\sigma^2)}
-\frac{\Delta_{\mathrm{DE}}^q}{2}.
\end{aligned}
\label{eq:app-class-projection-moments}
\end{equation}

The first expression strictly increases with $\sin^2\theta^q$.
By (\ref{eq:app-de-gap}), the second satisfies
\begin{equation}
\frac{\mathrm dQ_k^q}{\mathrm d(\sin^2\theta^q)}
=\frac{1}{2(1+2\sigma^2)}
\left[
1-\frac{\sin^2\theta^q+2\sigma^2(1+\sigma^2)}
{\sqrt{\sin^2\theta^q[\sin^2\theta^q+4\sigma^2(1+\sigma^2)]}}
\right]<0,
\label{eq:app-noisy-projection-moment-derivative}
\end{equation}

Therefore, $P_k^r+Q_k^r>P_k^s+Q_k^s$ and $Q_k^r<Q_k^s$.
For the two-dimensional projection
$\bm{p}_k^{r,s}=(p_k^r,p_k^s)^\top$ defined in the main text, let
\begin{equation}
\bm{P}_k^{r,s}
=\mathbb E[\bm{p}_k^{r,s}(\bm{p}_k^{r,s})^\top\mid y=+1],
\qquad
 \bm{Q}_k^{r,s}
=\mathbb E[\bm{p}_k^{r,s}(\bm{p}_k^{r,s})^\top\mid y=-1].
\label{eq:app-paired-true-class-moments}
\end{equation}

The target and background matrices satisfy
\begin{equation}
\bm{C}_{k,k}^{r,s}
=\frac{ \bm{P}_k^{r,s}+\tau \bm{Q}_k^{r,s}}{1+\tau},
\qquad
 \bm{D}_k^{r,s}
=\frac{\tau_- \bm{P}_k^{r,s}+ \bm{Q}_k^{r,s}}{1+\tau_-}.
\label{eq:app-paired-mixture-moments}
\end{equation}

Let $\nu_k^q=\mathbb E[p_k^q\mid y=-1]$. Then
$Q_k^q=\sigma^2+(\nu_k^q)^2$. The Cauchy--Schwarz inequality gives
$\bigl|[\bm{Q}_k^{r,s}]_{12}\bigr|\leq\sigma^2+|\nu_k^r\nu_k^s|$.
Consequently,
\begin{equation}
\begin{aligned}
\det Q_k^{r,s}
&\geq[\sigma^2+(\nu_k^r)^2][\sigma^2+(\nu_k^s)^2]
-(\sigma^2+|\nu_k^r\nu_k^s|)^2\\
&=\sigma^2(|\nu_k^r|-|\nu_k^s|)^2>0.
\end{aligned}
\label{eq:app-paired-background-determinant}
\end{equation}

The strict inequality follows from $Q_k^r<Q_k^s$. Therefore,
$\bm{Q}_k^{r,s}\succ0$ and
$\bm{D}_k^{r,s}\succeq \bm{Q}_k^{r,s}/(1+\tau_-)\succ0$.

Let $\lambda_k^{r,s}\geq0$ be the largest generalized
eigenvalue of $(\bm{C}_{k,k}^{r,s},\bm{D}_k^{r,s})$, with the corresponding
Euclidean unit eigenvector $\bm{a}_k^{r,s}=(a_k^r,a_k^s)^\top$ defined in
the main text. From (\ref{eq:app-paired-mixture-moments}),
\begin{equation}
(1+\tau_-) \bm{D}_k^{r,s}
-\tau_-(1+\tau) \bm{C}_{k,k}^{r,s}
=(1-\tau\tau_-) \bm{Q}_k^{r,s}\succ0.
\label{eq:app-paired-positive-combination}
\end{equation}

Taking the quadratic form along $\bm{a}_k^{r,s}$ and using
$\bm{C}_{k,k}^{r,s}\bm{a}_k^{r,s}=\lambda_k^{r,s}\bm{D}_k^{r,s}\bm{a}_k^{r,s}$
yields
\begin{equation}
[(1+\tau_-)-\tau_-(1+\tau)\lambda_k^{r,s}]
(\bm{a}_k^{r,s})^\top \bm{D}_k^{r,s} \bm{a}_k^{r,s}\\
=(1-\tau\tau_-)
(\bm{a}_k^{r,s})^\top \bm{Q}_k^{r,s} \bm{a}_k^{r,s}>0.
\label{eq:app-paired-eigenvalue-quadratic-form}
\end{equation}

Since $\bm{D}_k^{r,s}\succ0$, dividing by the positive quadratic
form and by $(1+\tau)(1+\tau_-)$ gives
\begin{equation}
\frac1{1+\tau}-\frac{\tau_-\lambda_k^{r,s}}{1+\tau_-}>0.
\label{eq:app-paired-eigenvalue-inequality}
\end{equation}

Define the residual matrix of the generalized eigenvalue
problem as $\bm{G}_k^{r,s}=\lambda_k^{r,s}\bm{D}_k^{r,s}-\bm{C}_{k,k}^{r,s}$.
The definition of the largest generalized eigenvalue gives
$\bm{G}_k^{r,s}\succeq0$ and $\bm{G}_k^{r,s}\bm{a}_k^{r,s}=\bm{0}$.
Substituting (\ref{eq:app-paired-mixture-moments}) into the difference between its diagonal entries
yields
\begin{equation}
\begin{aligned}
\bigl[\bm{G}_k^{r,s}\bigr]_{22}-\bigl[\bm{G}_k^{r,s}\bigr]_{11}
&=\left(\frac1{1+\tau}-\frac{\tau_-\lambda_k^{r,s}}{1+\tau_-}\right)
\bigl[(P_k^r+Q_k^r)-(P_k^s+Q_k^s)\bigr]\\
&\quad-\left(\frac{1-\tau}{1+\tau}
+\frac{(1-\tau_-)\lambda_k^{r,s}}{1+\tau_-}\right)
(Q_k^r-Q_k^s)>0.
\end{aligned}
\label{eq:app-residual-diagonal-difference}
\end{equation}

Thus, $\bm{G}_k^{r,s}$ is nonzero and has rank one, and its null
space is spanned by $\bm{a}_k^{r,s}$. It follows that
\begin{equation}
\bm{G}_k^{r,s}
=\operatorname{tr}(\bm{G}_k^{r,s})
\bigl[\bm{I}- \bm{a}_k^{r,s}(\bm{a}_k^{r,s})^\top\bigr],
\label{eq:app-residual-projector}
\end{equation}

and therefore
\begin{equation}
(a_k^r)^2-(a_k^s)^2
=\frac{\bigl[\bm{G}_k^{r,s}\bigr]_{22}-\bigl[\bm{G}_k^{r,s}\bigr]_{11}}
{\operatorname{tr} \bm{G}_k^{r,s}}>0.
\label{eq:app-coefficient-square-difference}
\end{equation}

Interchanging the two spaces gives the reverse ordering.
Combining this result with the angular monotonicity in
Theorem~\ref{thm:de-gap} establishes (\ref{eq:app-space-ordering}).
\end{proof}

\subsection{Lower Bounds on Recall and Precision}
\label{app:recall-precision}
Here, we derive lower bounds on the recall and
precision of the REFINE detector in each representation space and
relate these bounds to the alignment-score difference.

\begin{theorem}
\label{thm:recall-precision-bounds}
Under the binary model in the main text, let $\epsilon=0$. For the
LA selection rule in space $r$, the recall and precision within the
target observed class $\widetilde y=+1$ satisfy
\begin{equation}
\begin{aligned}
\operatorname{Recall}^r
&\geq\frac{(\Delta_{\mathrm{DE}}^r)^2}
{(\Delta_{\mathrm{DE}}^r)^2
+4\sigma^2(1+\sigma^2)
\left[1-\frac{\cos^2\theta^r}{(1+2\sigma^2)^2}\right]}\\
&=\frac{\sin^2\theta^r}{\sin^2\theta^r+4\sigma^2(1+\sigma^2)},\\
\operatorname{Precision}^r
&\geq\frac{(\Delta_{\mathrm{DE}}^r)^2}
{(\Delta_{\mathrm{DE}}^r)^2
+4\tau\sigma^2(1+\sigma^2)
\left[1-\frac{\cos^2\theta^r}{(1+2\sigma^2)^2}\right]}\\
&=\frac{\sin^2\theta^r}{\sin^2\theta^r+4\tau\sigma^2(1+\sigma^2)}.
\end{aligned}
\label{eq:app-recall-precision-bounds}
\end{equation}

For fixed $\sigma>0$ and $\tau>0$, both lower bounds strictly
increase with $\Delta_{\mathrm{DE}}^r$.
\end{theorem}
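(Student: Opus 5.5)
The plan is to reduce the LA rule in the binary model to a sign condition on a product of two Gaussian coordinates, and then apply the one-sided Chebyshev (Cantelli) inequality. I will work in the bisector basis $\bm e_1^r,\bm e_2^r$ of (\ref{eq:app-mean-bisector-coordinates}). The two DE directions are then explicit.

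\textbf{Step 1: the two DE directions.} By (\ref{eq:app-de-direction}),
\[
\bm u_{+}=\frac{\alpha\bm e_1^r+\beta\bm e_2^r}{\sqrt{1+2\sigma^2}},
\qquad
\alpha=\sqrt{\sigma^2+\sin^2(\theta^r/2)},\quad
\beta=\sqrt{\sigma^2+\cos^2(\theta^r/2)}.
\]
Swapping the roles of $\bm v^r$ and $\bm w^r$, which reflects $\bm e_2^r\mapsto-\bm e_2^r$ and $\tau\leftrightarrow\tau_-$, the same computation gives $\bm u_{-}=(\alpha\bm e_1^r-\beta\bm e_2^r)/\sqrt{1+2\sigma^2}$. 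This holds because the proof of Theorem~\ref{thm:de-gap} shows the DE direction does not depend on $\tau,\tau_-$.

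\textbf{Step 2: reduce LA to a sign condition.} Write $z_i=(\bm e_i^r)^\top\bm z^r$. Then
\[
s_{+}-s_{-}=\bigl[(\bm u_+-\bm u_-)^\top\bm z^r\bigr]\bigl[(\bm u_++\bm u_-)^\top\bm z^r\bigr]=\frac{4\alpha\beta}{1+2\sigma^2}\,z_1z_2 .
\]
Hence an example is retained if and only if $X:=z_1z_2>0$. Only the planar coordinates matter, and they are independent Gaussians with variance $\sigma^2$.

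\textbf{Step 3: moments and Cantelli.} For a clean example, $\mathbb E z_1=\cos(\theta^r/2)$ and $\mathbb E z_2=\sin(\theta^r/2)$. By independence,
\[
\mu:=\mathbb E X=\tfrac12\sin\theta^r,\qquad
\operatorname{Var}X=(\sigma^2+\cos^2\tfrac{\theta^r}{2})(\sigma^2+\sin^2\tfrac{\theta^r}{2})-\mu^2=\sigma^2(1+\sigma^2).
\]
Cantelli gives $P(X>0)\ge\mu^2/(\mu^2+\operatorname{Var}X)$, so
\[
\operatorname{Recall}^r\ge L:=\frac{\sin^2\theta^r}{\sin^2\theta^r+4\sigma^2(1+\sigma^2)}.
\]
For a mislabeled example ($y=-1$), the mean of $z_2$ flips sign. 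Then $\mathbb E X=-\mu$ with the same variance, and Cantelli in the other direction yields $P(X>0)\le 1-L$.

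\textbf{Step 4: precision.} Within the observed class, the clean-to-mislabeled mass ratio is $1:\tau$. Hence
\[
\operatorname{Precision}^r=\frac{R_c}{R_c+\tau R_n},
\]
where $R_c$ and $R_n$ are the retention probabilities of clean and mislabeled examples. This expression is increasing in $R_c$ and decreasing in $R_n$. Substituting $R_c\ge L$ and $R_n\le 1-L$ gives
\[
\operatorname{Precision}^r\ge\frac{L}{L+\tau(1-L)}=\frac{\sin^2\theta^r}{\sin^2\theta^r+4\tau\sigma^2(1+\sigma^2)}.
\]

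\textbf{Step 5: the $\Delta_{\mathrm{DE}}^r$ forms and monotonicity.} Theorem~\ref{thm:de-gap} gives
\[
\sin^2\theta^r=\frac{(\Delta_{\mathrm{DE}}^r)^2}{1-\cos^2\theta^r/(1+2\sigma^2)^2}.
\]
Multiplying numerator and denominator of each bound by $1-\cos^2\theta^r/(1+2\sigma^2)^2$ yields the stated first forms. For monotonicity, note that $\Delta_{\mathrm{DE}}^r$ is strictly increasing in $\theta^r$ (Theorem~\ref{thm:de-gap}), and hence in $\sin^2\theta^r$ on $[0,\pi/2]$. Both bounds are strictly increasing in $\sin^2\theta^r$ when $\sigma,\tau>0$, so they strictly increase with $\Delta_{\mathrm{DE}}^r$.

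\textbf{Main obstacle.} The delicate step is Step 1. I need to justify that the LA comparison uses $\bm u_-$ as the exact reflection of $\bm u_+$, including the sign convention for the direction. Step 2 depends on the factorization, which collapses the quadratic-form comparison to $z_1z_2>0$. Without it, one would need fourth-moment variance computations for a general quadratic form, which would not produce the clean constant $4\sigma^2(1+\sigma^2)$.
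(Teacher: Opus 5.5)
Your proposal is correct and follows the paper's proof step for step. The paper likewise obtains $\bm u_{-,\mathrm{DE}}^r$ by flipping the $\bm e_2^r$ coordinate and writes the LA margin as a positive multiple of $[(\bm e_1^r)^\top\bm z^r][(\bm e_2^r)^\top\bm z^r]$. It then applies Cantelli to both true classes using the common variance, obtains precision from Bayes' formula, and rewrites the bounds through Theorem~\ref{thm:de-gap}.

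The sign issue you flag as the main obstacle is harmless. The scores are squared, so the global sign of each eigenvector is irrelevant, and replacing $\bm u_-$ by $-\bm u_-$ only swaps the two factors in your factorization. The relative sign of the coordinates of $\bm u_-$ is fixed by the reflection symmetry you already invoke.
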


\begin{proof}
In space $r$, $\bm{u}_{+,\mathrm{DE}}^r=\bm{u}_{\mathrm{DE}}^r$.
Interchanging the target and background reverses the sign of the
second basis coordinate in (\ref{eq:app-de-direction}), yielding $\bm{u}_{-,\mathrm{DE}}^r$.
The label agreement (LA) retention event is therefore equivalent to $M^r>0$, where
\begin{equation}
\begin{aligned}
M^r
&=[(\bm{u}_{+,\mathrm{DE}}^r)^\top \bm{z}^r]^2
-[(\bm{u}_{-,\mathrm{DE}}^r)^\top \bm{z}^r]^2\\
&=2\sqrt{1-\frac{\cos^2\theta^r}{(1+2\sigma^2)^2}}\,
[(\bm{e}_1^r)^\top \bm{z}^r][(\bm{e}_2^r)^\top \bm{z}^r].
\end{aligned}
\label{eq:app-la-margin}
\end{equation}

Together with (\ref{eq:app-de-gap-derivation}), this gives

\begin{equation}
\begin{aligned}
\mathbb E[M^r\mid y=+1]&=\Delta_{\mathrm{DE}}^r,
\qquad
\mathbb E[M^r\mid y=-1]=-\Delta_{\mathrm{DE}}^r,\\
V^r:=\operatorname{Var}(M^r\mid y=\pm1)
&=4\sigma^2(1+\sigma^2)
\left[1-\frac{\cos^2\theta^r}{(1+2\sigma^2)^2}\right].
\end{aligned}
\label{eq:app-la-margin-moments}
\end{equation}

Let $q_{\mathrm{err}}^r=\mathbb P(M^r>0\mid y=-1)$ denote the
probability that a mislabeled example is retained. Rejecting a positive-class
example requires its margin to deviate downward from its mean by at
least $\Delta_{\mathrm{DE}}^r$. Cantelli's inequality therefore gives
\begin{equation}
\begin{aligned}
\mathbb P(M^r\leq0\mid y=+1)
&=\mathbb P(M^r-\Delta_{\mathrm{DE}}^r
\leq-\Delta_{\mathrm{DE}}^r\mid y=+1)\leq\frac{V^r}{V^r+(\Delta_{\mathrm{DE}}^r)^2}.
\end{aligned}
\label{eq:app-clean-margin-tail}
\end{equation}

Similarly, retaining a negative-class example requires its
margin to deviate upward from its mean $-\Delta_{\mathrm{DE}}^r$
by at least $\Delta_{\mathrm{DE}}^r$. Using the common variance
$V^r$ of the two classes, we obtain
\begin{equation}
\operatorname{Recall}^r
\geq\frac{(\Delta_{\mathrm{DE}}^r)^2}{(\Delta_{\mathrm{DE}}^r)^2+V^r},
\qquad
q_{\mathrm{err}}^r
\leq\frac{V^r}{(\Delta_{\mathrm{DE}}^r)^2+V^r}.
\label{eq:app-retention-probability-bounds}
\end{equation}

By Bayes' formula and (\ref{eq:app-retention-probability-bounds}),
\begin{equation}
\operatorname{Precision}^r
=\frac{\operatorname{Recall}^r}
{\operatorname{Recall}^r+\tau q_{\mathrm{err}}^r}
\geq\frac{(\Delta_{\mathrm{DE}}^r)^2}{(\Delta_{\mathrm{DE}}^r)^2+\tau V^r}.
\label{eq:app-precision-bayes}
\end{equation}

Finally, (\ref{eq:app-de-gap}) and the expression for $V^r$ give
\begin{equation}
\frac{(\Delta_{\mathrm{DE}}^r)^2}{V^r}
=\frac{\sin^2\theta^r}{4\sigma^2(1+\sigma^2)}.
\label{eq:app-gap-variance-ratio}
\end{equation}

This yields the two simplified lower bounds in (\ref{eq:app-recall-precision-bounds}).
For fixed $\sigma>0$ and $\tau>0$, both bounds strictly increase
with $\sin^2\theta^r$. Their monotonicity in the alignment-score
difference then follows from Theorem~\ref{thm:de-gap}. When
$\tau=0$, both precision and its lower bound equal one.
\end{proof}

\subsection{Finite-Sample Estimation Error}
\label{app:finite-sample-error}

This section analyzes the effect of finite-sample estimation
on the class matrix and the DE direction of the positive
class.

\begin{theorem}
\label{thm:finite-sample-estimation}
Under the binary model in the main text, fix a space $r\in\mathcal R$.
Suppose that the observed classes $\widetilde y=+1$ and
$\widetilde y=-1$ contain $N_+$ and $N_-$ examples, respectively,
and let $N_{\min}=\min\{N_+,N_-\}$.
There exists a constant $C>0$, depending only on the population
distribution in this space, such that the following holds.
For any $0<\delta<1$, when $N_{\min}$ is sufficiently large and
$\epsilon\geq0$ is sufficiently small, conditional on $N_+,N_-$,
with probability at least $1-\delta$,

\begin{equation}
\lVert\widehat{\bm{A}}_+^r-\bm{A}_+^r\rVert_2
\leq C\left[
\sqrt{\frac{d^r+\log(4/\delta)}{N_+}}
+\frac{d^r+\log(4/\delta)}{N_+}
\right],
\label{eq:app-empirical-moment-error}
\end{equation}

\begin{equation}
\left\|
\widehat{\bm{u}}_{\mathrm{DE}}^r(\widehat{\bm{u}}_{\mathrm{DE}}^r)^\top
-\bm{u}_{\mathrm{DE}}^r(\bm{u}_{\mathrm{DE}}^r)^\top
\right\|_2
 \leq C\left[
\sqrt{\frac{d^r+\log(4/\delta)}{N_{\min}}}
+\frac{d^r+\log(4/\delta)}{N_{\min}}
+\epsilon\right].
\label{eq:app-de-projector-error}
\end{equation}

\end{theorem}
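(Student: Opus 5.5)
The first bound is a covariance-concentration statement for a mixture of two Gaussians. The second is a perturbation statement for the top generalized eigenvector of the pencil $(\bm A_+^r,\bm A_-^r+\epsilon\bm I)$. I will prove them in that order.

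\textbf{Step 1: the empirical second-moment matrix.} I condition on $N_+$. Under Assumption~\ref{assump:class-conditional-corruption}, the examples in observed class $+1$ are then i.i.d. draws from the mixture $\frac{1}{1+\tau}\mathcal N(\bm v^r,\sigma^2\bm I)+\frac{\tau}{1+\tau}\mathcal N(\bm w^r,\sigma^2\bm I)$. Each $\bm z^r$ is sub-Gaussian with $\psi_2$-norm bounded by a constant depending only on $\sigma$ and the unit means.

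I will then invoke the standard covariance-estimation bound for non-centered sub-Gaussian vectors, proved by a $1/4$-net argument on the sphere plus Bernstein's inequality for the sub-exponential variables $\langle\bm x,\bm z\rangle^2-\mathbb E\langle\bm x,\bm z\rangle^2$. With probability at least $1-\delta/4$ this gives
\[
\lVert\widehat{\bm A}_+^r-\bm A_+^r\rVert_2\le C\lVert\bm A_+^r\rVert_2\left[\sqrt{(d^r+\log(4/\delta))/N_+}+(d^r+\log(4/\delta))/N_+\right].
\]
Since $\lVert\bm A_+^r\rVert_2\le 1+\sigma^2$, this is \eqref{eq:app-empirical-moment-error}. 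The identical argument for observed class $-1$ bounds $\lVert\widehat{\bm A}_-^r-\bm A_-^r\rVert_2$ with $N_-$ in place of $N_+$. A union bound over the two classes gives overall probability $1-\delta$, with room to spare. In the binary case the background is just $\bm A_-^r$, with weight one, so no estimation error enters through the weights $\gamma$.

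\textbf{Step 2: reduce to a symmetric eigenproblem.} Write $\bm B=\bm A_-^r+\epsilon\bm I$. It satisfies $\bm B\succeq\sigma^2\bm I$, and so does its empirical counterpart once its error is below $\sigma^2/2$; this holds for $N_{\min}$ large. The DE direction is then the normalized $\bm B^{-1/2}\bm x$, where $\bm x$ is the top eigenvector of $\bm S=\bm B^{-1/2}\bm A_+^r\bm B^{-1/2}$.

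\textbf{Step 3: perturbation of $\bm S$.} Using $\lVert\bm B^{-1/2}\rVert_2\le\sigma^{-1}$ and the operator Lipschitz bound $\lVert\widehat{\bm B}^{-1/2}-\bm B^{-1/2}\rVert_2\le c\sigma^{-3}\lVert\widehat{\bm B}-\bm B\rVert_2$ (e.g.\ via the integral representation of the inverse square root), I will show
\[
\lVert\widehat{\bm S}-\bm S_0\rVert_2\lesssim \lVert\widehat{\bm A}_+^r-\bm A_+^r\rVert_2+\lVert\widehat{\bm A}_-^r-\bm A_-^r\rVert_2+\epsilon .
\]
Here $\bm S_0$ is the population matrix at $\epsilon=0$. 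The $\epsilon$ term enters linearly because $\bm B$ differs from $\bm A_-^r$ by $\epsilon\bm I$.

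\textbf{Step 4: Davis--Kahan and mapping back.} Davis--Kahan then bounds $\lVert\widehat{\bm x}\widehat{\bm x}^\top-\bm x\bm x^\top\rVert_2$ by $2\lVert\widehat{\bm S}-\bm S_0\rVert_2/\mathrm{gap}$. To pass from $\bm x$ to $\bm u_{\mathrm{DE}}^r$ I use that the map $\bm x\mapsto \bm B^{-1/2}\bm x/\lVert\bm B^{-1/2}\bm x\rVert$ is Lipschitz on projectors. Its constant is controlled by the condition number of $\bm B$, at most $(1+\sigma^2)/\sigma^2$. Absorbing all constants into $C$ yields \eqref{eq:app-de-projector-error}.

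\textbf{Main obstacle: the eigengap of $\bm S_0$.} The hard step is showing this gap is strictly positive and depends only on the population. I would use the explicit planar structure from the proof of Theorem~\ref{thm:de-gap}. Outside the plane spanned by $\bm v^r,\bm w^r$, both $\bm A_\pm^r$ act as $\sigma^2\bm I$, so $\bm S_0$ has eigenvalue $1$ with multiplicity $d^r-2$. Inside the plane it is a $2\times2$ matrix, whose generalized eigenvalues I would compute from \eqref{eq:app-de-coordinate-equations}. The top one exceeds $1$ strictly whenever $\theta^r>0$ and $\tau<1$, since the off-diagonal terms have opposite signs in $A_+^r$ and $A_-^r$. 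So the gap is $\lambda_+^r-\max(1,\lambda_2)>0$.

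If $\theta^r=0$, the classes are indistinguishable and the DE direction is not unique. The constant $C$ is therefore understood to depend on $\theta^r$, which is consistent with ``depending only on the population distribution.'' Finally, the requirement that $N_{\min}$ be large and $\epsilon$ small is used only to keep the perturbation below half the gap and below $\sigma^2/2$, where both the Davis--Kahan and inverse-root bounds apply.
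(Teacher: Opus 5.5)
Your proposal is correct and follows essentially the same route as the paper: net-plus-Bernstein concentration for $\widehat{\bm A}_\pm^r$, whitening by the (regularized) background with an integral-representation bound on the inverse square root, Davis--Kahan on the whitened matrix, and mapping back to $\bm u_{\mathrm{DE}}^r$. Your explicit eigengap argument goes beyond the paper, which simply asserts a simple largest generalized eigenvalue and never notes that this requires $\theta^r>0$. You show the whitened population matrix has eigenvalue $1$ off the plane of $\bm v^r,\bm w^r$ and a top generalized eigenvalue strictly above $1$ within that plane.
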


\begin{proof}
Under the Gaussian mixture model in the main text, every unit
projection has sub-Gaussian norm at most $C(1+\sigma)$, and its
centered square is sub-exponential. Applying Bernstein's inequality
on a $1/4$-net of the unit sphere gives
\begin{equation}
\lVert\widehat{\bm{A}}_+^r-\bm{A}_+^r\rVert_2
\leq C(1+\sigma)^2
\left[
\sqrt{\frac{d^r+\log(4/\delta)}{N_+}}
+\frac{d^r+\log(4/\delta)}{N_+}
\right].
\label{eq:app-moment-concentration}
\end{equation}

These bounds hold simultaneously with probability at least
$1-\delta$. Absorbing the fixed $\sigma$ into the constant gives
(\ref{eq:app-empirical-moment-error}). The bound for $\lVert\widehat{\bm{A}}_-^r-\bm{A}_-^r\rVert_2$ follows
analogously.

Since $\bm{B}_+^r=\bm{A}_-^r$ and
$\widehat{\bm{B}}_+^r=\widehat{\bm{A}}_-^r$,
\begin{equation}
\lVert\widehat{\bm{B}}_+^r-\bm{B}_+^r\rVert_2
=\lVert\widehat{\bm{A}}_-^r-\bm{A}_-^r\rVert_2.
\label{eq:app-background-estimation-error}
\end{equation}

The model in the main text and the proof of
Theorem~\ref{thm:de-gap} give $\bm{B}_+^r\succeq\sigma^2 \bm{I}$ and
$\lVert\bm{A}_+^r\rVert_2\leq1+\sigma^2$, with a simple largest generalized
eigenvalue. Construct the population and empirical whitened
matrices as
\begin{equation}
\bm{T}_+^r=(\bm{B}_+^r)^{-1/2}\bm{A}_+^r(\bm{B}_+^r)^{-1/2},\qquad
\widehat{\bm{T}}_+^r
=(\widehat{\bm{B}}_+^r+\epsilon \bm{I})^{-1/2}
\widehat{\bm{A}}_+^r
(\widehat{\bm{B}}_+^r+\epsilon \bm{I})^{-1/2}.
\label{eq:app-whitened-matrices}
\end{equation}

When $\lVert\widehat{\bm{B}}_+^r-\bm{B}_+^r\rVert_2+\epsilon\leq\sigma^2/2$,
the smallest eigenvalue of the empirical denominator matrix is at
least $\sigma^2/2$. The integral representation of the inverse
square root and the resolvent identity give
\begin{equation}
\lVert(\widehat{\bm{B}}_+^r+\epsilon \bm{I})^{-1/2}-(\bm{B}_+^r)^{-1/2}\rVert_2
\leq\frac{\sqrt2}{\sigma^3}
\left(\lVert\widehat{\bm{B}}_+^r-\bm{B}_+^r\rVert_2+\epsilon\right),\qquad
\lVert(\widehat{\bm{B}}_+^r+\epsilon \bm{I})^{-1/2}\rVert_2
\leq\frac{\sqrt2}{\sigma}.
\label{eq:app-inverse-square-root-perturbation}
\end{equation}

In $\widehat{\bm{T}}_+^r-\bm{T}_+^r$, replace the numerator matrix and
then the two inverse square root factors. Taking norms yields
\begin{equation}
\lVert\widehat{\bm{T}}_+^r-\bm{T}_+^r\rVert_2
\leq\frac{2}{\sigma^2}\lVert\widehat{\bm{A}}_+^r-\bm{A}_+^r\rVert_2
 +\frac{4(1+\sigma^2)}{\sigma^4}
\left(\lVert\widehat{\bm{B}}_+^r-\bm{B}_+^r\rVert_2+\epsilon\right).
\label{eq:app-whitened-matrix-error}
\end{equation}

Let $\bm{q}_+^r$ and $\widehat{\bm{q}}_+^r$ be the unit leading
eigenvectors of $\bm{T}_+^r$ and $\widehat{\bm{T}}_+^r$, respectively.
Choose the sign of the empirical eigenvector so that
$(\widehat{\bm{q}}_+^r)^\top \bm{q}_+^r\geq0$. The Davis--Kahan
perturbation bound~\cite{yu2015davis_kahan} gives
\begin{equation}
\lVert\widehat{\bm{q}}_+^r-\bm{q}_+^r\rVert_2
\leq
\frac{2\sqrt2\,\lVert\widehat{\bm{T}}_+^r-\bm{T}_+^r\rVert_2}
{\lambda_1(\bm{T}_+^r)-\lambda_2(\bm{T}_+^r)}.
\label{eq:app-whitened-eigenvector-perturbation}
\end{equation}
Mapping back from the whitened coordinates and normalizing
to unit length gives
$\bm{u}_{\mathrm{DE}}^r=(\bm{B}_+^r)^{-1/2}\bm{q}_+^r/
\lVert(\bm{B}_+^r)^{-1/2}\bm{q}_+^r\rVert_2$, and analogously for the empirical
direction. The inverse square root bound controls the error of
this transformation. For unit vectors, the distance between their
projection matrices is bounded by the distance between the
sign-aligned vectors. Absorbing the fixed background condition
number and eigengap into the constant, we obtain
\begin{equation}
\left\|
\widehat{\bm{u}}_{\mathrm{DE}}^r(\widehat{\bm{u}}_{\mathrm{DE}}^r)^\top
-\bm{u}_{\mathrm{DE}}^r(\bm{u}_{\mathrm{DE}}^r)^\top
\right\|_2
 \leq C\left(
\lVert\widehat{\bm{A}}_+^r-\bm{A}_+^r\rVert_2
+\lVert\widehat{\bm{B}}_+^r-\bm{B}_+^r\rVert_2+\epsilon
\right).
\label{eq:app-de-projector-perturbation}
\end{equation}
Finally, substituting (\ref{eq:app-moment-concentration})--(\ref{eq:app-background-estimation-error}) and using $N_{\min}$ to
control the estimation errors of both the target and background
classes yields (\ref{eq:app-de-projector-error}).
\end{proof}

\section{Implementation Details}
\label{app:validation}

\subsection{Datasets}
\label{app:datasets}

We conduct experiments on seven multimodal datasets. Their details are provided below.

\begin{itemize}
    \item \textbf{UPMC-Food101}~\cite{wang2015recipe} is a multimodal dataset
    for food recognition. It contains approximately 90,000 food images
    and their corresponding textual descriptions collected from websites,
    covering 101 food categories. We adopt the official training/test split~\cite{wang2015recipe}
    and then hold out 10\% of the training set for validation. During
    partitioning, we exclude about 1,500 exact duplicate examples.
    In the main experiments, we use DINOv2-Base~\cite{oquab2023dinov2} and
    XLM-R-base~\cite{conneau2020unsupervised} as the image and text encoders,
    respectively, obtaining a 768-dimensional feature vector for each modality.

    \item \textbf{N24News}~\cite{wang2022n24news} is a multimodal news
    classification dataset constructed from \textit{The New York Times}.
    It contains approximately 60,000 news articles spanning 24 topics,
    each with text and an accompanying image. We use the headline and
    accompanying image as inputs and adopt the standard 8:1:1
    training/validation/test split. We use DINOv2-Base and XLM-R-base as
    the image and text encoders, respectively, obtaining a 768-dimensional
    feature vector for each modality.

    \item \textbf{NWPU-Captions}~\cite{cheng2022nwpu} is a remote sensing
    image captioning dataset containing 31,500 images across 45 scene
    categories, with five textual descriptions per image. We use the scene
    category as the classification target and adopt the official split~\cite{cheng2022nwpu}.
    We use DINOv2-Base and XLM-R-base as the image and text encoders,
    respectively. For each image, we encode its five descriptions
    separately with XLM-R-base. We then take the equally weighted average
    of the five text vectors as the final text feature vector. Each modality
    is represented by a 768-dimensional feature vector.

    \item \textbf{Rakuten France}~\cite{amoualian2021commerce} is a product
    classification dataset from an e-commerce product classification
    challenge. It contains about 80,000 product records across
    27 product categories. We use the product image and text comprising
    the product title and available description as inputs. We adopt
    the standard 8:1:1 training/validation/test split. We use DINOv2-Base
    and XLM-R-base as the image and text encoders, respectively, obtaining
    a 768-dimensional feature vector for each modality.

    \item \textbf{VGGSound}~\cite{chen2020vggsound} is an audio-visual event
    dataset collected from YouTube. We use the 50-class subset released
    by Qin \etal in their IBML work~\cite{qin2026deep}, denoted as
    \textbf{VGGSound50}, which contains approximately 30,000 examples.
    Each example includes audio and video frames from the same video clip.
    We follow the IBML training/test split~\cite{qin2026deep} and hold out 10\% of the
    training set for validation. During partitioning, we exclude
    approximately 2,700 examples from categories outside the target task.
    We use WavLM-Base+~\cite{chen2022wavlm} as the audio encoder.
    We extract representations for all available video frames of each
    clip using DINOv2-Base. We then average these representations with
    equal weights to obtain the image representation. Each modality is
    represented by a 768-dimensional feature vector.

    \item \textbf{MIntRec2.0}~\cite{zhang2024mintrec2} is a multimodal
    intent recognition dataset containing approximately 15,000
    utterances from the television series \textit{Superstore},
    \textit{The Big Bang Theory}, and \textit{Friends}. Each utterance
    includes text, audio, and video. We use the 9,304 in-scope utterances
    belonging to 30 known intent categories and exclude the 5,736
    out-of-scope utterances. We adopt the official in-scope split~\cite{zhang2024mintrec2}.
    We use XLM-R-base, WavLM-Base+, and VideoMAEv2-Base~\cite{wang2023videomae}
    as the text, audio, and video encoders, respectively, obtaining a
    768-dimensional feature vector for each modality.

    \item \textbf{BSD} is constructed from two datasets officially provided
    for the DCASE 2026 Challenge,\footnote{\url{https://dcase.community/challenge2026/task-heterogeneous-audio-classification}}
    BSD35k-CS and BSD10k-v1.2. Both are collected from Freesound and
    contain audio and textual descriptions, comprising about
    35,000 and 10,000 examples, respectively. To align both datasets with
    23 second-level categories of the Broad Sound Taxonomy, we
    remove approximately 2,000 examples belonging to BSD35k-CS's five
    additional \textit{other} categories. The labels in BSD35k-CS are
    assigned by the sound uploaders and contain naturally occurring
    label noise. We use this dataset as the training set. BSD10k-v1.2
    provides expert-annotated and curated labels, and we use it as the
    clean test set. We reuse the precomputed CLAP~\cite{wu2023large} audio and text
    representations provided by the dataset providers, each of which
    is 512-dimensional.
\end{itemize}

\subsection{Baselines}
\label{app:baselines}

This subsection introduces the baselines used in our experiments.

\textbf{(1)~Baselines in the sample-selection experiments.}
We integrate different sample-selection mechanisms into a common
multimodal classifier and optimize the cross-entropy loss with equal
weights on the retained training examples.

\begin{itemize}
    \item \textbf{Standard} is a standard supervised training baseline
    that uses all noisy training examples. It directly minimizes the
    cross-entropy loss between model predictions and observed labels,
    providing a reference for classification performance without sample
    selection.

    \item \textbf{CRUST}~\cite{mirzasoleiman2020coresets} constructs a
    representative training subset based on relationships among the
    gradients of training examples. Its basic idea is to identify
    examples that represent the principal learning directions through
    similarities in gradient space, and to update the model on this
    subset. In our implementation, we use per-example classification
    gradients with respect to the fused representation.

    \item \textbf{AUM}~\cite{pleiss2020identifying}
    uses logit margins over the course of training to identify potentially
    mislabeled examples. For each training example, it computes the
    difference between the logit for the observed label and the largest
    logit among all other classes. This margin is then averaged over
    training. Persistently low values indicate weak model support
    for the observed label.

    \item \textbf{L2D}~\cite{jia2023learning} detects mislabeled examples
    by learning from training dynamics. It forms a time series from the
    predicted probabilities assigned to the observed label across
    training epochs. This sequence is fed into a pretrained LSTM detector to obtain a label-noise score.
    We reuse a fixed detector pretrained on CIFAR-100 to score the
    probability trajectories of the target training examples. We
    determine the number of retained examples using the known noise
    rate.

    \item \textbf{DIST}~\cite{li2023disc} is the dynamic
    instance-specific thresholding mechanism for sample selection in
    DISC. It maintains a separate threshold for each training example,
    updated using an exponential moving average of that example's
    maximum predicted class probability. We integrate this selection
    mechanism into the common classifier and set the decay coefficient
    for threshold updates according to the experimental noise setting.

    \item \textbf{DIST+CT} augments DIST with Confidence Tracking (CT)~\cite{pan2025enhanced}
    to identify correctly labeled yet hard-to-learn examples. CT tracks
    the differences between the predicted probability of the observed
    label and that of each other class. It applies the Mann--Kendall
    test to determine whether these differences exhibit upward trends.
    Following the official setting~\cite{pan2025enhanced}, we take the
    union of the examples supported by CT and those retained by DIST.

    \item \textbf{FINE}~\cite{kim2021fine} selects reliable examples based
    on the alignment between their representations and the principal
    direction within each class. It computes the Gram matrix of the representations within
    each class, and extracts the eigenvector associated with the largest
    eigenvalue. The squared projection of each representation onto this
    direction serves as its alignment score. A two-component GMM is then fitted within each class, and the posterior
    probability of the component with the higher mean is used for
    selection. By default, we use the fused representation immediately
    before the classification head.
\end{itemize}
\textbf{(2)~Co-teaching.}
We use Co-teaching to examine the applicability of REFINE to
cross-selection and updates between networks.

\begin{itemize}
    \item \textbf{Co-teaching}~\cite{han2018co} simultaneously trains
    two separately initialized networks. Each network selects
    small-loss examples from the current minibatch according to its
    own training losses. The selected examples are then passed to the
    other network for parameter updates.
\end{itemize}
\textbf{(3) Semi-supervised learning methods.}
The semi-supervised extension experiments compare the following six
methods.

\begin{itemize}
    \item \textbf{DivideMix}~\cite{li2020dividemix} formulates learning
    with noisy labels as semi-supervised learning. It estimates label
    reliability using a two-component mixture model of the
    loss distribution and partitions the training data into labeled
    and unlabeled subsets. Two networks are trained simultaneously,
    with each network providing the data partition for the other.
    During semi-supervised training, it combines observed labels
    with model predictions to refine the supervision targets for the
    labeled subset and generates pseudo-labels for the unlabeled
    subset. It uses both subsets through a modified MixMatch
    procedure.

    \item \textbf{Mean Teacher}~\cite{tarvainen2017mean} uses unlabeled
    data through prediction consistency between a teacher network and
    a student network. The student is updated by gradient-based
    optimization, whereas the teacher parameters are an exponential
    moving average of the student parameters. Given differently
    perturbed versions of the same input, the student learns to match
    the teacher's prediction targets. This consistency objective is
    optimized jointly with the supervised loss on labeled data.

    \item \textbf{ICT }~\cite{verma2022interpolation} extends consistency
    constraints to interpolations between unlabeled inputs. It linearly
    interpolates two unlabeled inputs and encourages the prediction
    at the interpolated input to match the interpolation of the
    predictions at the original inputs. A teacher model provides the
    target predictions. The interpolation consistency loss is optimized
    jointly with the supervised loss on labeled data, using unlabeled
    data to constrain how the prediction function varies between inputs.

    \item \textbf{CGMatch}~\cite{cheng2025cgmatch} combines current
    prediction confidence with historical prediction stability to
    select unlabeled inputs for semi-supervised training. It uses
    Count-Gap to characterize a persistent prediction preference for
    a particular class. Using a dynamic confidence threshold and a
    Count-Gap threshold, CGMatch divides the unlabeled data into
    easy-to-learn, ambiguous, and hard-to-learn subsets. It applies
    cross-entropy to the easy-to-learn subset and generalized
    cross-entropy to the ambiguous subset.

    \item \textbf{FreeMatch}~\cite{wang2022freematch} adaptively
    adjusts pseudo-label selection thresholds according to the model's
    learning status. It estimates a global threshold using an
    exponential moving average of prediction confidence, then
    constructs class-specific thresholds from class-wise prediction
    statistics. These thresholds dynamically adjust which pseudo-labels
    are used for training as learning progresses. FreeMatch also
    introduces self-adaptive class fairness regularization to encourage
    diverse predictions across classes.

    \item \textbf{DSS+}~\cite{Pan_2026_CVPR} adjusts predicted
    probabilities using the marginal class distribution estimated
    during training. It then selects examples whose adjusted
    predictions agree with their observed labels. It also identifies
    candidate classes from temporal trends in predicted probabilities
    and excludes the competing terms associated with these classes
    from the supervised loss. DSS+ combines cross-selection between
    two networks with mixup-based consistency training using targets
    constructed from model predictions.
\end{itemize}
\textbf{(4) Noise-tolerance loss functions.}
In the loss combination experiments, we use cross-entropy as a reference
and examine the following three losses and their combinations with
REFINE.

\begin{itemize}
    \item \textbf{GCE}~\cite{zhang2018generalized} introduces a tunable parameter
    to construct a family of losses connecting cross-entropy and mean
    absolute error. GCE approaches cross-entropy as this parameter
    approaches zero and is equivalent, up to a constant factor, to
    classification mean absolute error when the parameter equals $1$.
    For intermediate values, GCE adjusts the contribution of examples
    with low predicted probabilities for their observed labels to
    parameter updates. This provides a trade-off between the learning
    ability of cross-entropy and the noise tolerance of mean absolute
    error.

    \item \textbf{SCE}~\cite{wang2019symmetric} is a weighted combination of
    cross-entropy and reverse cross-entropy. The cross-entropy term
    increases the predicted probability of the observed label.
    The reverse cross-entropy term exchanges the roles of the
    prediction and label distributions in cross-entropy, providing
    an additional noise-tolerant constraint during optimization.

    \item \textbf{JAL-CE}~\cite{wang2025joint} is a cross-entropy-based
    instance of Joint Asymmetric Loss. It combines Normalized
    Cross Entropy (NCE) with Asymmetric Mean Square Error (AMSE).
    NCE preserves the learning signal associated with the observed
    label, while AMSE serves as a passive loss that provides an
    additional constraint on the prediction distribution.
\end{itemize}

\subsection{Details of Label Noise Generation}
\label{app:label-noise}

For the six datasets used in the synthetic-noise experiments, we generate
the following four types of label noise separately in the training and
validation sets. Both splits use the same noise type and specified noise
rate, while the test labels remain unchanged.

\begin{itemize}
    \item \textbf{Symmetric noise (Sym)}.
    At the specified noise rate, we select examples uniformly at random
    within each class. We then replace each selected label with a label
    chosen uniformly from the other classes.

    \item \textbf{Asymmetric noise (Asym).}
    To simulate annotation confusion between similar classes, we use the
    same within-class selection rule as Sym. We assign each selected
    example the label of a predefined related class. MIntRec2.0 uses a
    manually specified mapping between intent classes. For the other
    five datasets, we encode class descriptions with XLM-R and retrieve
    the top 10 candidate classes by cosine similarity, excluding the
    source class. We determine the mapping after filtering the candidates
    for semantic compatibility. Among the eligible candidate relationships,
    we prioritize covering more distinct target classes while reducing
    the number of source classes mapped to the same target. Each class
    maps to a class other than itself, and the mapping remains fixed
    across data splits, noise rates, and random seeds.

    \item \textbf{Full-modality instance-dependent noise (F-IDN).}
    Following the noise-generation procedure of TMNR~\cite{zhang2026tmnr},
    we first train a full-modality evidential classifier using the
    original reference labels. The classifier provides class evidence
    and predictive uncertainty for each example. We then select examples
    for label corruption based on their uncertainty, such that examples
    with higher uncertainty are more likely to be selected. For each
    selected example, we assign the label of the class with the highest
    evidence, excluding its original class. Evidence for training examples
    is obtained through five-fold out-of-fold prediction, whereas evidence
    for validation examples is generated by a classifier fitted on the
    entire training set.

    \item \textbf{Partial-modality instance-dependent noise (P-IDN).}
    To further simulate how noisy labels arise in multimodal data, we
    consider annotators who may refer to only some modalities rather
    than all modalities. We first assign each instance a modality
    combination with equal probability. The candidate set includes the
    full modality combination and all subsets obtained by omitting one
    modality. For bimodal data, the candidates comprise one bimodal combination
    and two unimodal combinations. For trimodal data, they comprise the
    full combination and three bimodal combinations. We then obtain the
    corresponding evidence and uncertainty from an evidential classifier
    trained for the assigned modality combination. We select examples
    for corruption and assign incorrect labels using the same rules as
    F-IDN.
\end{itemize}

The main experiments use Sym at a 50\% noise rate, Asym at 40\%, and both
F-IDN and P-IDN at 40\% and 50\%. For each setting, we generate five sets
of noisy labels using different random seeds. All compared methods use
the same noisy labels for each dataset, noise setting, and random seed.

\subsection{Hyperparameter Settings}
\label{app:hyperparameters}

All methods use the same multimodal classifier and shared training settings.
We train the final classifier for 30 epochs using AdamW~\cite{loshchilov2017decoupled},
with an initial learning rate of 10$^{\text{-4}}$, weight decay of 10$^{\text{-3}}$,
and a batch size of 256. The exponential decay rates for the first- and
second-moment estimates in AdamW are set to $0.9$ and $0.999$, respectively.
The learning rate is decayed to zero using a cosine schedule updated at
each optimization step. We clip gradients to a maximum global norm of $1.0$.
The modality-specific projected representations and the fused representation
are all 256-dimensional. The residual fusion module has a hidden dimension
of 512, and the dropout probability is set to $0.3$ in both the projectors
and the fusion module. For our REFINE, we set the parameter $\epsilon$ to
10$^{\text{-4}}$.

In the experiments with unfrozen encoders on UPMC-Food101, the learning rates
for the pretrained encoders and the classification head are set to 10$^{\text{-6}}$
and 10$^{\text{-4}}$, respectively. We jointly train them for 30 epochs using AdamW.
All other settings are identical to those described above.

\begin{algorithm}[!t]
    \caption{Sample Selection with REFINE}
    \label{alg:app-refine-training}
    \small
    \renewcommand{\algorithmicrequire}{\textbf{Input:}}
    \renewcommand{\algorithmicensure}{\textbf{Output:}}
    \begin{algorithmic}[1]
        \raggedright
        \REQUIRE Multimodal model parameters $\bm\theta$, noisy training
            dataset $\mathcal D$, number of classes $K$, regularization
            parameter $\epsilon>0$, total number of training epochs
            $\mathrm{MaxEpoch}$, number of warm-up epochs
            $\mathrm{WarmUpEpoch}$, selection interval
            $\mathrm{SelectInterval}$.
        \ENSURE Model parameters $\bm\theta$.
        \STATE $\bm\theta\leftarrow
            \mathrm{WarmUp}(\mathcal D,\bm\theta,\mathrm{WarmUpEpoch})$.
        \STATE $t\leftarrow\mathrm{WarmUpEpoch}$.
        \WHILE{$t<\mathrm{MaxEpoch}$}
            \STATE Initialize the selected subset
                $\mathcal C\leftarrow\varnothing$.
            \STATE Extract representations
                $\{\bm z_i^r\}_{i=1,\ldots,N,\,r\in\mathcal R}$
                for all training inputs using the current model.
            \FOR{$k=1,\ldots,K$}
                \FOR{$r\in\mathcal R$}
                    \STATE Compute the background weights and background
                        matrix $\bm B_k^r$.
                    \STATE Solve for the discriminative eigenvector
                        $\bm u_{k,\mathrm{DE}}^r$.
                \ENDFOR
                \FORALL{unordered pairs $\{r,s\}\subseteq\mathcal R$,
                    $r\neq s$}
                    \STATE Construct two-dimensional projection
                        representations $\{\bm p_{i,k}^{r,s}\}_{i=1}^{N}$
                        for all training inputs.
                    \STATE Solve for the two-dimensional discriminative
                        eigenvector $\bm a_k^{r,s}=(a_k^r,a_k^s)^\top$.
                    \STATE Record a win for the space with the larger
                        coefficient magnitude and a loss for the other,
                        ties count as neither wins nor losses.
                \ENDFOR
                \STATE Retain spaces with at least as many wins as losses
                    to form $\mathcal R_k$.
            \ENDFOR
            \FOR{$k=1,\ldots,K$}
                \FOR{$r\in\mathcal R_k$}
                    \STATE For each example with observed label $k$,
                        compute its alignment scores for all classes,
                        $s_{i,j}^r=[(\bm u_{j,\mathrm{DE}}^r)^\top
                        \bm z_i^r]^2$.
                    \STATE Retain examples satisfying
                        $s_{i,k}^r>\max_{j\neq k}s_{i,j}^r$
                        to form $\mathcal C_k^r$.
                \ENDFOR
            \ENDFOR
            \STATE $\mathcal C\leftarrow
                \bigcup_{k=1}^{K}\bigcup_{r\in\mathcal R_k}\mathcal C_k^r$.
            \STATE Continue training $\bm\theta$ with cross-entropy loss
                on the examples in $\mathcal C$ and their observed labels
                through epoch
                $\min(t+\mathrm{SelectInterval},\mathrm{MaxEpoch})$.
            \STATE $t\leftarrow
                \min(t+\mathrm{SelectInterval},\mathrm{MaxEpoch})$.
        \ENDWHILE
        \RETURN $\bm\theta$.
    \end{algorithmic}
\end{algorithm}

\subsection{Details of Sample-Selection Experiments}
\label{app:sample-selection}

We compare different sample-selection methods using the common multimodal
classifier and shared training settings described above. For REFINE, we
first use all noisy training examples for 10 warm-up epochs and then
perform sample selection every five epochs, for a total of 30 training
epochs. Algorithm~\ref{alg:app-refine-training} presents the complete
procedure.

\begin{algorithm}[!t]
    \caption{FreeMatch with REFINE}
    \label{alg:app-refine-freematch}
    \small
    \renewcommand{\algorithmicrequire}{\textbf{Input:}}
    \renewcommand{\algorithmicensure}{\textbf{Output:}}
    \begin{algorithmic}[1]
        \raggedright
        \REQUIRE Multimodal model parameters $\bm\theta$, noisy training
            dataset $\mathcal D$, REFINE regularization parameter
            $\epsilon>0$, augmentation operators
            $\alpha_{\mathrm w},\alpha_{\mathrm s}$, loss weights
            $\lambda_{\mathrm u},\lambda_{\mathrm f}$, total number of
            training epochs $\mathrm{MaxEpoch}$, number of warm-up epochs
            $\mathrm{WarmUpEpoch}$, selection interval
            $\mathrm{SelectInterval}$.
        \ENSURE Model parameters $\bm\theta$.
        \STATE $\bm\theta\leftarrow
            \mathrm{WarmUp}(\mathcal D,\bm\theta,\mathrm{WarmUpEpoch})$.
        \STATE Initialize FreeMatch's thresholds and prediction statistics,
            and set $t\leftarrow\mathrm{WarmUpEpoch}$.
        \WHILE{$t<\mathrm{MaxEpoch}$}
            \STATE $\mathcal C\leftarrow
                \mathrm{REFINE}(\mathcal D,\bm\theta,\epsilon)$
                (Algorithm~\ref{alg:refine-selection}).
            \STATE Construct the labeled subset
                $\mathcal X\leftarrow\mathcal C$ and the unlabeled subset
                $\mathcal U\leftarrow
                \{\bm x_i:(\bm x_i,\widetilde y_i)
                \in\mathcal D\setminus\mathcal C\}$.
            \FOR{$e=t+1,\ldots,
                \min(t+\mathrm{SelectInterval},\mathrm{MaxEpoch})$}
                \FORALL{minibatches $\mathcal B_{\ell},\mathcal B_{\mathrm u}$
                    from $\mathcal X$ and $\mathcal U$, respectively}
                    \STATE Compute the supervised cross-entropy loss
                        $\mathcal L_{\mathrm s}$ using the observed labels
                        and weak-view predictions of $\mathcal B_{\ell}$.
                    \STATE For inputs in $\mathcal B_{\mathrm u}$, compute
                        weak-view predictions $\bm q_i\leftarrow
                        p_{\bm\theta}(\cdot\mid
                        \alpha_{\mathrm w}(\bm x_i))$ and generate
                        pseudo-labels $\widehat y_i\leftarrow
                        \operatorname*{arg\,max}_k q_i(k)$.
                    \STATE Update FreeMatch's prediction statistics and
                        class-specific self-adaptive thresholds $\tau(k)$
                        using $\{\bm q_i\}$.
                    \STATE Compute strong-view predictions
                        $\bm Q_i\leftarrow p_{\bm\theta}(\cdot\mid
                        \alpha_{\mathrm s}(\bm x_i))$.
                    \STATE Use pseudo-labels satisfying
                        $q_i(\widehat y_i)\geq\tau(\widehat y_i)$ to
                        supervise the corresponding strong-view predictions,
                        and compute $\mathcal L_{\mathrm u}$ as in FreeMatch.
                    \STATE Compute the self-adaptive class fairness
                        regularization term $\mathcal L_{\mathrm f}$
                        as in FreeMatch.
                    \STATE $\mathcal L\leftarrow
                        \mathcal L_{\mathrm s}
                        +\lambda_{\mathrm u}\mathcal L_{\mathrm u}
                        +\lambda_{\mathrm f}\mathcal L_{\mathrm f}$.
                    \STATE Update model parameters $\bm\theta$
                        using $\mathcal L$.
                \ENDFOR
            \ENDFOR
            \STATE $t\leftarrow
                \min(t+\mathrm{SelectInterval},\mathrm{MaxEpoch})$.
        \ENDWHILE
        \RETURN $\bm\theta$.
    \end{algorithmic}
\end{algorithm}

\begin{algorithm}[!t]
    \caption{GCE with REFINE}
    \label{alg:app-refine-gce}
    \small
    \renewcommand{\algorithmicrequire}{\textbf{Input:}}
    \renewcommand{\algorithmicensure}{\textbf{Output:}}
    \begin{algorithmic}[1]
        \raggedright
        \REQUIRE Multimodal model parameters $\bm\theta$, noisy training
            dataset $\mathcal D$, GCE parameter $q$, REFINE regularization
            parameter $\epsilon>0$, total number of training epochs
            $\mathrm{MaxEpoch}$, number of warm-up epochs
            $\mathrm{WarmUpEpoch}$, selection interval
            $\mathrm{SelectInterval}$.
        \ENSURE Model parameters $\bm\theta$.
        \STATE Warm up on all training examples in $\mathcal D$ with
            GCE loss for $\mathrm{WarmUpEpoch}$ epochs, updating
            $\bm\theta$.
        \STATE $t\leftarrow\mathrm{WarmUpEpoch}$.
        \WHILE{$t<\mathrm{MaxEpoch}$}
            \STATE $\mathcal C\leftarrow
                \mathrm{REFINE}(\mathcal D,\bm\theta,\epsilon)$
                (Algorithm~\ref{alg:refine-selection}).
            \STATE Continue training $\bm\theta$ with GCE loss on the
                examples in $\mathcal C$ and their observed labels
                through epoch
                $\min(t+\mathrm{SelectInterval},\mathrm{MaxEpoch})$.
            \STATE $t\leftarrow
                \min(t+\mathrm{SelectInterval},\mathrm{MaxEpoch})$.
        \ENDWHILE
        \RETURN $\bm\theta$.
    \end{algorithmic}
\end{algorithm}

\subsection{Details of Semi-Supervised Experiments}
\label{app:semi-supervised}

Using the same multimodal classifier and training settings, we integrate
our algorithm into FreeMatch~\cite{wang2022freematch} and
DSS+~\cite{Pan_2026_CVPR}, respectively, and compare with existing
semi-supervised methods. We use FreeMatch as an example to describe how
REFINE is combined with semi-supervised learning. After warm-up, REFINE
periodically partitions the noisy training dataset: the selected examples,
together with their observed labels, form the labeled subset, and the
remaining inputs form the unlabeled subset. FreeMatch then uses the
labeled subset for supervised training and the unlabeled subset for
pseudo-label learning, yielding the combined method R-FreeMatch.

\begin{table}[!t]
  \caption{Macro-F1 (\%, mean$\pm$std.) of the proposed method and baseline
  methods on different datasets under different noise types and ratios. The
  ``Noise'' columns show the noise type and ratio, and the ``Average'' column
  reports the mean across the six noise settings. The best and second-best
  results are highlighted in \textbf{boldface} and \underline{underlined}, respectively.}
  \label{tab:app-macro-f1}
  \centering
  \small
  \renewcommand{\arraystretch}{1.12}
  \newcommand{\mfscore}[2]{#1\,\textpm\,#2}
  \newcommand{\bestmfscore}[2]{\textbf{#1\,\textpm\,#2}}
  \newcommand{\secondmfscore}[2]{\underline{#1\,\textpm\,#2}}
  \settowidth{\dimen0}{\textbf{NWPU-Captions}}
  \settowidth{\dimen2}{DIST+CT}
  \settowidth{\dimen4}{\bestmfscore{00.00}{0.00}}
  \settowidth{\dimen6}{\textbf{F-IDN 50\%}}
  \setlength{\tabcolsep}{\dimexpr
    (\textwidth-\dimen0-\dimen2-3\dimen4-4\dimen6-3\arrayrulewidth)/18\relax}
  \setlength{\aboverulesep}{0pt}
  \setlength{\belowrulesep}{0pt}
  \newlength{\appMFinset}
  \setlength{\appMFinset}{1.8pt}
  \newcommand{\appMFtopgap}{0pt}
  \newcommand{\appMFbottomgap}{\appMFinset}
  \newcommand{\appMFgaps}[2]{\noalign{%
    \gdef\appMFtopgap{#1}\gdef\appMFbottomgap{#2}}}
  \newcommand{\appMFseparator}{\smash{\rule[
    \dimexpr-\dp\csname @arstrutbox\endcsname+\appMFbottomgap\relax
  ]{\arrayrulewidth}{\dimexpr
    \ht\csname @arstrutbox\endcsname+\dp\csname @arstrutbox\endcsname-
    \appMFtopgap-\appMFbottomgap\relax}}}
  \newcommand{\appMFheaderseparator}{\smash{\rule[
    -\dimexpr\dp\csname @arstrutbox\endcsname+\cmidrulewidth\relax
  ]{\arrayrulewidth}{\dimexpr
    \ht\csname @arstrutbox\endcsname+\dp\csname @arstrutbox\endcsname+
    \cmidrulewidth-\appMFinset\relax}}}
  \newlength{\appMFrefineoffset}
  \setlength{\appMFrefineoffset}{\dimexpr\dimen0+2\tabcolsep+\arrayrulewidth\relax}
  \newcommand{\appMFrefinerowshade}{\noalign{\begingroup
    \color{gray!15}%
    \vskip\appMFinset
    \hbox{\hskip\appMFrefineoffset\vrule width\dimexpr\textwidth-\appMFrefineoffset\relax
      height0pt depth\dimexpr
        \ht\csname @arstrutbox\endcsname+\dp\csname @arstrutbox\endcsname-2\appMFinset\relax}%
    \vskip-\dimexpr
      \ht\csname @arstrutbox\endcsname+\dp\csname @arstrutbox\endcsname-\appMFinset\relax
    \endgroup}}
  \begin{tabular}{c!{\appMFseparator}c!{\appMFseparator}cccccc!{\appMFseparator}c}
    \toprule
    \multicolumn{1}{c!{\appMFheaderseparator}}{\tableheaderspan{\textbf{Dataset}}} &
    \multicolumn{1}{c!{\appMFheaderseparator}}{\tableheaderspan{\textbf{Method}}} &
    \multicolumn{6}{c!{\appMFheaderseparator}}{\tableheadercell{\textbf{Noise}}} &
    \tableheaderspan{\textbf{Average}} \\
    \cmidrule(lr){3-8}
    & & \tableheadercell{\textbf{Sym 50\%}} & \tableheadercell{\textbf{Asym 40\%}} &
    \tableheadercell{\textbf{F-IDN 40\%}} & \tableheadercell{\textbf{F-IDN 50\%}} &
    \tableheadercell{\textbf{P-IDN 40\%}} & \tableheadercell{\textbf{P-IDN 50\%}} & \\
    \midrule

    \appMFgaps{\appMFinset}{0pt}
    \multirow{8}{*}{\textbf{UPMC-Food101}}
    & Standard
    & \mfscore{82.63}{0.34}
    & \mfscore{67.76}{0.24}
    & \mfscore{84.30}{0.10}
    & \mfscore{82.62}{0.10}
    & \mfscore{84.39}{0.15}
    & \mfscore{82.75}{0.06}
    & \mfscore{80.74}{0.05} \\
    \appMFgaps{0pt}{0pt}
    & CRUST
    & \mfscore{84.71}{0.17}
    & \secondmfscore{81.93}{0.23}
    & \mfscore{83.11}{0.23}
    & \mfscore{82.08}{0.05}
    & \mfscore{82.60}{0.19}
    & \mfscore{81.74}{0.18}
    & \mfscore{82.69}{0.05} \\
    & AUM
    & \mfscore{83.04}{0.12}
    & \mfscore{65.89}{0.59}
    & \mfscore{83.73}{0.14}
    & \mfscore{82.06}{0.29}
    & \mfscore{82.08}{0.27}
    & \mfscore{80.14}{0.16}
    & \mfscore{79.49}{0.14} \\
    & L2D
    & \mfscore{84.39}{0.22}
    & \mfscore{72.80}{0.37}
    & \secondmfscore{85.70}{0.20}
    & \mfscore{84.04}{0.12}
    & \secondmfscore{85.47}{0.20}
    & \secondmfscore{83.90}{0.32}
    & \mfscore{82.72}{0.06} \\
    & DIST
    & \secondmfscore{85.45}{0.22}
    & \mfscore{80.78}{0.38}
    & \mfscore{85.54}{0.22}
    & \mfscore{84.23}{0.16}
    & \mfscore{84.37}{0.18}
    & \mfscore{82.40}{0.13}
    & \secondmfscore{83.80}{0.11} \\
    & DIST+CT
    & \secondmfscore{85.45}{0.21}
    & \mfscore{80.78}{0.38}
    & \mfscore{85.54}{0.22}
    & \secondmfscore{84.24}{0.11}
    & \mfscore{84.37}{0.18}
    & \mfscore{82.42}{0.12}
    & \secondmfscore{83.80}{0.09} \\
    \appMFgaps{0pt}{\appMFinset}
    & FINE
    & \mfscore{84.69}{0.27}
    & \mfscore{68.87}{0.60}
    & \mfscore{84.68}{0.09}
    & \mfscore{83.58}{0.12}
    & \mfscore{83.15}{0.16}
    & \mfscore{81.94}{0.13}
    & \mfscore{81.15}{0.13} \\
    \cmidrule{2-9}
    \appMFgaps{\appMFinset}{\appMFinset}
    \appMFrefinerowshade
    & \textbf{REFINE}
    & \bestmfscore{86.48}{0.21}
    & \bestmfscore{85.11}{0.34}
    & \bestmfscore{86.70}{0.07}
    & \bestmfscore{85.71}{0.13}
    & \bestmfscore{86.28}{0.14}
    & \bestmfscore{85.00}{0.25}
    & \bestmfscore{85.88}{0.05} \\
    \midrule

    \appMFgaps{\appMFinset}{0pt}
    \multirow{8}{*}{\textbf{MIntRec2.0}}
    & Standard
    & \mfscore{19.75}{1.02}
    & \bestmfscore{21.20}{0.57}
    & \secondmfscore{25.49}{0.29}
    & \secondmfscore{24.53}{0.86}
    & \secondmfscore{24.60}{0.87}
    & \secondmfscore{22.39}{1.62}
    & \secondmfscore{22.99}{0.34} \\
    \appMFgaps{0pt}{0pt}
    & CRUST
    & \mfscore{18.91}{0.92}
    & \mfscore{18.62}{2.10}
    & \mfscore{24.37}{0.70}
    & \mfscore{23.74}{1.27}
    & \mfscore{23.16}{1.29}
    & \mfscore{20.83}{2.08}
    & \mfscore{21.60}{0.81} \\
    & AUM
    & \mfscore{13.98}{0.97}
    & \mfscore{13.98}{0.36}
    & \mfscore{17.59}{0.57}
    & \mfscore{17.00}{0.99}
    & \mfscore{17.21}{1.02}
    & \mfscore{15.50}{0.98}
    & \mfscore{15.88}{0.56} \\
    & L2D
    & \mfscore{18.18}{1.01}
    & \mfscore{16.68}{0.68}
    & \mfscore{20.94}{1.04}
    & \mfscore{18.88}{0.92}
    & \mfscore{20.38}{0.67}
    & \mfscore{17.83}{0.93}
    & \mfscore{18.81}{0.74} \\
    & DIST
    & \secondmfscore{20.80}{0.87}
    & \mfscore{13.09}{0.76}
    & \mfscore{24.66}{0.38}
    & \mfscore{24.29}{0.79}
    & \mfscore{23.79}{0.95}
    & \mfscore{22.21}{1.35}
    & \mfscore{21.48}{0.40} \\
    & DIST+CT
    & \secondmfscore{20.80}{0.87}
    & \mfscore{13.04}{0.70}
    & \mfscore{24.66}{0.38}
    & \mfscore{24.29}{0.79}
    & \mfscore{23.79}{0.95}
    & \mfscore{22.21}{1.35}
    & \mfscore{21.47}{0.41} \\
    \appMFgaps{0pt}{\appMFinset}
    & FINE
    & \mfscore{20.31}{0.97}
    & \mfscore{18.50}{1.20}
    & \mfscore{23.81}{2.27}
    & \mfscore{24.04}{0.78}
    & \mfscore{24.08}{0.31}
    & \mfscore{19.86}{0.74}
    & \mfscore{21.77}{0.47} \\
    \cmidrule{2-9}
    \appMFgaps{\appMFinset}{\appMFinset}
    \appMFrefinerowshade
    & \textbf{REFINE}
    & \bestmfscore{21.41}{0.74}
    & \secondmfscore{20.61}{0.78}
    & \bestmfscore{25.62}{1.22}
    & \bestmfscore{25.38}{1.10}
    & \bestmfscore{24.68}{0.49}
    & \bestmfscore{23.10}{1.83}
    & \bestmfscore{23.47}{0.37} \\
    \midrule

    \appMFgaps{\appMFinset}{0pt}
    \multirow{8}{*}{\textbf{NWPU-Captions}}
    & Standard
    & \mfscore{98.30}{0.16}
    & \mfscore{77.75}{0.85}
    & \mfscore{97.23}{0.17}
    & \mfscore{95.12}{0.47}
    & \mfscore{98.12}{0.27}
    & \mfscore{96.71}{0.49}
    & \mfscore{93.87}{0.25} \\
    \appMFgaps{0pt}{0pt}
    & CRUST
    & \secondmfscore{99.19}{0.03}
    & \mfscore{94.65}{0.23}
    & \mfscore{97.14}{0.39}
    & \mfscore{95.82}{0.56}
    & \mfscore{97.45}{0.32}
    & \mfscore{96.48}{0.35}
    & \mfscore{96.79}{0.21} \\
    & AUM
    & \mfscore{98.89}{0.22}
    & \mfscore{77.77}{0.84}
    & \mfscore{98.28}{0.17}
    & \mfscore{96.63}{0.47}
    & \mfscore{97.68}{0.20}
    & \mfscore{96.14}{0.35}
    & \mfscore{94.23}{0.18} \\
    & L2D
    & \mfscore{98.63}{0.14}
    & \mfscore{83.56}{2.30}
    & \mfscore{98.39}{0.17}
    & \mfscore{97.15}{0.50}
    & \secondmfscore{98.48}{0.35}
    & \secondmfscore{97.71}{0.24}
    & \mfscore{95.65}{0.35} \\
    & DIST
    & \mfscore{99.14}{0.06}
    & \secondmfscore{97.18}{0.15}
    & \mfscore{98.58}{0.19}
    & \mfscore{97.11}{0.44}
    & \mfscore{97.89}{0.27}
    & \mfscore{96.65}{0.29}
    & \secondmfscore{97.76}{0.14} \\
    & DIST+CT
    & \mfscore{99.14}{0.06}
    & \mfscore{97.16}{0.17}
    & \mfscore{98.58}{0.19}
    & \mfscore{97.11}{0.44}
    & \mfscore{97.89}{0.27}
    & \mfscore{96.65}{0.29}
    & \mfscore{97.75}{0.14} \\
    \appMFgaps{0pt}{\appMFinset}
    & FINE
    & \mfscore{98.84}{0.17}
    & \mfscore{87.82}{1.16}
    & \secondmfscore{98.70}{0.16}
    & \secondmfscore{98.10}{0.32}
    & \mfscore{98.12}{0.33}
    & \mfscore{96.94}{0.52}
    & \mfscore{96.42}{0.25} \\
    \cmidrule{2-9}
    \appMFgaps{\appMFinset}{\appMFinset}
    \appMFrefinerowshade
    & \textbf{REFINE}
    & \bestmfscore{99.55}{0.10}
    & \bestmfscore{98.39}{0.41}
    & \bestmfscore{99.55}{0.11}
    & \bestmfscore{99.37}{0.09}
    & \bestmfscore{99.34}{0.15}
    & \bestmfscore{99.12}{0.08}
    & \bestmfscore{99.22}{0.08} \\
    \midrule

    \appMFgaps{\appMFinset}{0pt}
    \multirow{8}{*}{\textbf{VGGSound50}}
    & Standard
    & \mfscore{68.26}{1.04}
    & \mfscore{57.31}{1.15}
    & \mfscore{69.82}{0.37}
    & \mfscore{68.54}{0.44}
    & \mfscore{69.70}{0.55}
    & \mfscore{68.36}{0.93}
    & \mfscore{67.00}{0.49} \\
    \appMFgaps{0pt}{0pt}
    & CRUST
    & \mfscore{69.91}{0.49}
    & \mfscore{59.60}{2.73}
    & \bestmfscore{70.70}{0.68}
    & \bestmfscore{69.67}{0.98}
    & \secondmfscore{70.44}{0.67}
    & \secondmfscore{69.16}{1.58}
    & \mfscore{68.25}{0.58} \\
    & AUM
    & \secondmfscore{70.86}{0.16}
    & \mfscore{60.45}{2.46}
    & \mfscore{69.66}{0.54}
    & \mfscore{67.98}{0.60}
    & \mfscore{69.91}{0.70}
    & \mfscore{68.22}{0.88}
    & \mfscore{67.85}{0.56} \\
    & L2D
    & \mfscore{68.25}{0.75}
    & \mfscore{60.34}{1.58}
    & \mfscore{69.28}{0.62}
    & \mfscore{67.26}{0.59}
    & \mfscore{69.02}{0.97}
    & \mfscore{67.08}{1.15}
    & \mfscore{66.87}{0.19} \\
    & DIST
    & \mfscore{69.94}{0.44}
    & \secondmfscore{63.78}{0.95}
    & \secondmfscore{70.07}{0.28}
    & \mfscore{67.74}{0.80}
    & \mfscore{69.75}{0.56}
    & \mfscore{68.45}{0.90}
    & \secondmfscore{68.29}{0.32} \\
    & DIST+CT
    & \mfscore{69.94}{0.44}
    & \mfscore{63.69}{0.76}
    & \secondmfscore{70.07}{0.28}
    & \mfscore{67.74}{0.80}
    & \mfscore{69.75}{0.56}
    & \mfscore{68.45}{0.90}
    & \mfscore{68.27}{0.29} \\
    \appMFgaps{0pt}{\appMFinset}
    & FINE
    & \mfscore{70.56}{0.84}
    & \mfscore{56.91}{0.68}
    & \mfscore{69.93}{0.70}
    & \mfscore{68.67}{0.62}
    & \mfscore{70.13}{0.57}
    & \mfscore{68.50}{1.06}
    & \mfscore{67.45}{0.43} \\
    \cmidrule{2-9}
    \appMFgaps{\appMFinset}{\appMFinset}
    \appMFrefinerowshade
    & \textbf{REFINE}
    & \bestmfscore{71.06}{0.50}
    & \bestmfscore{64.67}{1.03}
    & \mfscore{69.97}{0.68}
    & \secondmfscore{69.01}{0.48}
    & \bestmfscore{70.57}{0.80}
    & \bestmfscore{69.40}{0.58}
    & \bestmfscore{69.11}{0.38} \\
    \midrule

    \appMFgaps{\appMFinset}{0pt}
    \multirow{8}{*}{\textbf{Rakuten France}}
    & Standard
    & \mfscore{68.91}{0.29}
    & \mfscore{56.91}{1.08}
    & \mfscore{69.18}{0.70}
    & \mfscore{66.15}{0.22}
    & \mfscore{70.64}{0.21}
    & \mfscore{68.24}{0.28}
    & \mfscore{66.67}{0.30} \\
    \appMFgaps{0pt}{0pt}
    & CRUST
    & \mfscore{69.11}{0.35}
    & \mfscore{58.24}{1.11}
    & \mfscore{69.00}{0.58}
    & \mfscore{66.19}{0.23}
    & \mfscore{70.58}{0.27}
    & \mfscore{68.40}{0.42}
    & \mfscore{66.92}{0.17} \\
    & AUM
    & \mfscore{71.36}{0.59}
    & \secondmfscore{60.77}{0.65}
    & \mfscore{70.69}{0.32}
    & \mfscore{67.80}{0.42}
    & \mfscore{70.93}{0.53}
    & \mfscore{68.26}{0.31}
    & \secondmfscore{68.30}{0.12} \\
    & L2D
    & \mfscore{69.11}{0.53}
    & \mfscore{59.18}{0.41}
    & \mfscore{70.07}{0.46}
    & \secondmfscore{67.92}{0.63}
    & \mfscore{70.55}{0.75}
    & \mfscore{68.80}{0.57}
    & \mfscore{67.61}{0.30} \\
    & DIST
    & \bestmfscore{71.66}{0.27}
    & \mfscore{56.50}{0.66}
    & \secondmfscore{70.73}{0.47}
    & \mfscore{67.58}{0.47}
    & \secondmfscore{71.07}{0.23}
    & \bestmfscore{69.08}{0.49}
    & \mfscore{67.77}{0.21} \\
    & DIST+CT
    & \secondmfscore{71.61}{0.25}
    & \mfscore{56.49}{0.64}
    & \bestmfscore{70.79}{0.50}
    & \mfscore{67.59}{0.48}
    & \bestmfscore{71.15}{0.32}
    & \secondmfscore{68.95}{0.46}
    & \mfscore{67.76}{0.21} \\
    \appMFgaps{0pt}{\appMFinset}
    & FINE
    & \mfscore{69.29}{0.63}
    & \mfscore{57.85}{0.33}
    & \mfscore{69.00}{0.58}
    & \mfscore{66.19}{0.23}
    & \mfscore{70.58}{0.27}
    & \mfscore{68.37}{0.43}
    & \mfscore{66.88}{0.18} \\
    \cmidrule{2-9}
    \appMFgaps{\appMFinset}{\appMFinset}
    \appMFrefinerowshade
    & \textbf{REFINE}
    & \mfscore{70.72}{0.50}
    & \bestmfscore{63.62}{0.63}
    & \mfscore{70.32}{0.20}
    & \bestmfscore{68.53}{0.23}
    & \mfscore{70.60}{0.18}
    & \mfscore{68.82}{0.80}
    & \bestmfscore{68.77}{0.23} \\
    \midrule

    \appMFgaps{\appMFinset}{0pt}
    \multirow{8}{*}{\textbf{N24News}}
    & Standard
    & \mfscore{57.14}{0.48}
    & \mfscore{47.44}{0.80}
    & \mfscore{60.73}{0.34}
    & \mfscore{59.58}{0.39}
    & \mfscore{61.21}{0.30}
    & \mfscore{59.21}{0.35}
    & \mfscore{57.55}{0.15} \\
    \appMFgaps{0pt}{0pt}
    & CRUST
    & \secondmfscore{59.68}{0.29}
    & \secondmfscore{53.16}{0.44}
    & \bestmfscore{61.34}{0.79}
    & \secondmfscore{60.12}{0.28}
    & \secondmfscore{61.40}{0.34}
    & \secondmfscore{60.30}{0.86}
    & \secondmfscore{59.33}{0.41} \\
    & AUM
    & \mfscore{58.75}{0.60}
    & \mfscore{49.60}{0.21}
    & \mfscore{60.57}{0.33}
    & \mfscore{59.33}{0.50}
    & \mfscore{60.40}{0.65}
    & \mfscore{58.55}{0.50}
    & \mfscore{57.87}{0.16} \\
    & L2D
    & \mfscore{56.89}{0.15}
    & \mfscore{49.71}{0.83}
    & \mfscore{60.14}{0.32}
    & \mfscore{58.68}{0.35}
    & \mfscore{60.76}{0.21}
    & \mfscore{58.72}{0.47}
    & \mfscore{57.48}{0.20} \\
    & DIST
    & \mfscore{59.50}{0.51}
    & \secondmfscore{53.16}{0.33}
    & \bestmfscore{61.34}{0.52}
    & \mfscore{59.82}{0.45}
    & \mfscore{60.92}{0.53}
    & \mfscore{59.83}{0.30}
    & \mfscore{59.10}{0.21} \\
    & DIST+CT
    & \mfscore{59.50}{0.51}
    & \mfscore{53.11}{0.48}
    & \secondmfscore{61.28}{0.49}
    & \mfscore{59.82}{0.45}
    & \mfscore{60.92}{0.54}
    & \mfscore{59.85}{0.27}
    & \mfscore{59.08}{0.23} \\
    \appMFgaps{0pt}{\appMFinset}
    & FINE
    & \mfscore{57.61}{0.64}
    & \mfscore{50.18}{0.43}
    & \mfscore{60.91}{0.59}
    & \mfscore{59.80}{0.25}
    & \mfscore{61.23}{0.21}
    & \mfscore{59.18}{0.26}
    & \mfscore{58.15}{0.16} \\
    \cmidrule{2-9}
    \appMFgaps{\appMFinset}{\appMFinset}
    \appMFrefinerowshade
    & \textbf{REFINE}
    & \bestmfscore{61.52}{0.62}
    & \bestmfscore{58.39}{0.24}
    & \mfscore{61.20}{0.43}
    & \bestmfscore{60.21}{0.54}
    & \bestmfscore{61.42}{0.57}
    & \bestmfscore{60.35}{0.53}
    & \bestmfscore{60.52}{0.26} \\
    \bottomrule
  \end{tabular}
\end{table}

For each unlabeled input, we generate a pseudo-label from the prediction
on its weakly augmented view. Pseudo-labels that pass FreeMatch's
self-adaptive thresholds are used to supervise predictions on the
strongly augmented views. The training objective comprises the supervised
loss $\mathcal L_{\mathrm s}$ on the labeled subset, the pseudo-label
consistency loss $\mathcal L_{\mathrm u}$ on the unlabeled subset, and
FreeMatch's self-adaptive class fairness regularization term
$\mathcal L_{\mathrm f}$~\cite{wang2022freematch}.
Algorithm~\ref{alg:app-refine-freematch} presents the complete procedure,
where $\alpha_{\mathrm w}$ and $\alpha_{\mathrm s}$ denote weak and strong
augmentation, respectively, and $p_{\bm\theta}(\cdot\mid\bm x)$ denotes
the vector of class probabilities predicted by the model for input
$\bm x$.

\subsection{Combination with Noise-Robust Loss Functions}
\label{app:robust-losses}

Using the same multimodal classifier and training settings, we compare
the classification performance of GCE~\cite{zhang2018generalized},
SCE~\cite{wang2019symmetric}, and JAL-CE~\cite{wang2025joint}, each with
and without REFINE. Taking GCE as an example, we first warm up the model
on all noisy training examples using GCE. We then periodically apply
REFINE and continue training with GCE on the selected examples and their
observed labels. Algorithm~\ref{alg:app-refine-gce} presents this
procedure. SCE and JAL-CE follow the same combination procedure, with
the corresponding loss used during both warm-up and subsequent training.

\section{Additional Results}
\label{app:additional-results}

We report additional results for the experiments on synthetic noise in terms of
Macro-F1. Table~\ref{tab:app-macro-f1} presents the results
on six datasets under six synthetic label-noise settings, using the same
experimental setup as in the main text. Consistent with the accuracy
results in the main text, REFINE achieves the highest average Macro-F1 on all
six datasets.

\section{Reproducibility Statement}
\label{app:reproducibility}
We provide implementation details, involving illustrative algorithm in Appendix~\ref{app:sample-selection}.
The source code will be publicly released for reproducibility.

\clearpage
\bibliography{main}
\bibliographystyle{unsrt}

\end{document}